\newtheorem{assumption}{Assumption}
\newtheorem{lemma}{Lemma}
\newtheorem{proposition}{Proposition}
\newtheorem{corollary}{Corollary}
\newcommand{\com}[1]{{\color{red}{#1}}}
\newcommand{\yl}[1]{{\color{magenta}{#1}}}
\newcommand{\com}[1]{}
\newcommand{\yl}[1]{#1}
\title{Social Bias Meets Data Bias:\\The Impacts of Labeling and Measurement Errors on Fairness Criteria}
\author {
    Yiqiao Liao\textsuperscript{\rm 1},
    Parinaz Naghizadeh\textsuperscript{\rm 2}
}
\begin{document}

\maketitle

\begin{abstract}
Although many fairness criteria have been proposed to ensure that machine learning algorithms do not exhibit or amplify our existing social biases, these algorithms are trained on datasets that can themselves be statistically biased. In this paper, we investigate the robustness of existing (demographic) fairness criteria when the algorithm is trained on biased data. We consider two forms of dataset bias: errors by prior decision makers in the labeling process, and errors in the measurement of the features of disadvantaged individuals. We analytically show that some constraints (such as Demographic Parity) can remain robust when facing certain statistical biases, while others (such as Equalized Odds) are significantly violated if trained on biased data. We provide numerical experiments based on three real-world datasets (the FICO, Adult, and German credit score datasets) supporting our analytical findings. While fairness criteria are primarily chosen under normative considerations in practice, our results show that naively applying a fairness constraint can lead to not only a loss in utility for the decision maker, but more severe unfairness when data bias exists. Thus, understanding how fairness criteria react to different forms of data bias presents a critical guideline for choosing among existing fairness criteria, or for proposing new criteria, when available datasets may be biased. 
\end{abstract}


\section{Introduction}\label{sec:intro}

Machine learning algorithms are being adopted widely in areas ranging from recommendation systems and ad-display, to hiring, loan approvals, and determining recidivism in courts. Despite their potential benefits, these algorithms can still exhibit or amplify existing societal biases \cite{machine-bias,obermeyer2019dissecting,lambrecht2019algorithmic}. 
This is referred to as algorithmic \emph{(social) bias} or \emph{unfairness}, as the algorithm makes decisions in favor or against individuals in a way that is inconsistent or discriminatory across groups with different social identities (e.g., race, gender). A commonly proposed method for assessing and preventing these forms of unfairness is through \emph{fairness criteria} (e.g, equality of opportunity, equalized odds, or  demographic parity) \citep{mehrabi2021survey,barocas2017fairness}. These criteria typically require the algorithm to make decisions in a way that (approximately) equalizes a statistical measure (e.g., selection rate, true positive rate) between different groups. 

Despite the rising interest in this approach to developing fair algorithms, existing fairness criteria have been largely proposed and evaluated assuming access to unbiased training data. 
However, existing datasets are often themselves \emph{statistically biased} due to biases or errors made during data collection, labeling, feature measurement, etc \cite{blum2020recovering, fogliato2020fairness, jiang2020identifying, kallus2018residual, wick2019unlocking}. Any machine learning algorithm is inevitably only as good as the data it is trained on, 
and so attempts to attain a desired notion of fairness can be thwarted by biases in the training dataset. 

\emph{Our work identifies the impacts of statistical data biases on the efficacy of existing fairness criteria in addressing social biases.} 
In particular, as we show both analytically and numerically, existing fairness criteria differ considerably in their robustness against different forms of statistical biases in the training data. Although fairness criteria are generally chosen under normative considerations in practice, our results show that naively applying fairness constraints can lead to more severe unfairness when data bias exists. Thus, understanding how fairness criteria react to different forms of data bias can serve as a critical guideline when choosing among existing fairness criteria, or when proposing new criteria. 

\paragraph{Overview of our findings and contributions.} Formally, we consider a setting in which a firm makes binary decisions (accept/reject) on agents from two demographic groups $a$ and $b$, with $b$ denoting the disadvantaged group. We assume the training data is statistically biased as a prior decision maker has made errors when either assessing the true qualification state (label) of individuals or when measuring their features. The firm selects its decision rule based on this biased data, potentially subject to one of four fairness criteria: {Demographic Parity (DP)}, {True/False Positive Rate Parity (TPR/FPR)}, or Equalized Odds (EO). Our main findings and contributions are summarized below.

{\emph{(1) Some fairness criteria are more robust than others.}} We first analytically show (Proposition~\ref{prop:f-bias-gamma}) that some existing fairness criteria (namely, DP and TPR) are more robust against labeling errors in the disadvantaged group compared to others (FPR and EO). 
That is, perhaps surprisingly, despite being trained on biased data, the resulting DP/TPR-constrained decision rules continue to satisfy the desired DP/TPR fairness criteria when implemented on unbiased data. {This can be interpreted as a positive byproduct of these fairness criteria, in that (social) fairness desiderata are not violated despite statistical data biases.} 

{\emph{(2) Analysis for different forms of statistical data biases.}} We present similar analyses when the statistical biases are due to feature measurement errors on the disadvantaged group (Proposition~\ref{prop:f-bias-feature}), and labeling biases on the advantaged group (Proposition~\ref{prop:f-bias-gamma-a}). We find that different sets of fairness criteria are robust against different forms of data bias. 

{\emph{(3) Guidelines for the selection of fairness criteria and data debiasing.}} We detail how these observations can be explained based on the effects of each type of data bias on the specific data statistics that a fairness criterion relies on in assessing and imposing its normative desiderata. Our findings can therefore be used to guide targeted data collection and debiasing efforts based on the selected fairness criterion. Alternatively, they could help the decision maker select the most robust fairness criteria among their options, so that it would either continue to be met, or be less drastically impacted, in spite of the suspected types of data bias.

{\emph{(4) Supporting numerical experiments.}} We provide support for our analytical findings through numerical experiments based on three real-world datasets: FICO, Adult, and German credit score (Section~\ref{sec:experiments}). 

{\emph{(5) Fair algorithms may even increase firm utility if the data is biased.}}
Notably, in contrast to the typically discussed ``fairness-accuracy tradeoff'', we show that at times using a fair algorithm can \emph{increase} a firm's expected performance compared to an accuracy-maximizing (unfair) algorithm when training datasets are biased. We highlight this observation in Section~\ref{sec:experiments} and provide an intuitive explanation for it by interpreting fairness constraints as having a regularization effect; we also provide an analytical explanation in Appendix~\ref{app:prop-impact-gamma-proof}.

\paragraph{Related work.} The interplay between data biases and fair machine learning has been a subject of growing interest \citep{ensign2018runaway,neel2018mitigating,bechavod2019equal,kilbertus2020fair,wei2021decision,blum2020recovering,jiang2020identifying, kallus2018residual, rezaei2021robust, fogliato2020fairness, wick2019unlocking}, and our paper falls within this general category. Most of these works differ from ours in that they focus on the sources of these data biases such as feedback loops, censored feedback, and/or adaptive data collection, and on how these exacerbate algorithmic unfairness, how to debias data, and how to build fair algorithms robust to data bias. In contrast, we investigate how existing fair algorithms fare in the face of statistical data biases (without making adjustments to the algorithm or the data collection procedure), and provide potential guidelines for targeting data debiasing efforts accordingly. 

Most closely related to our work are \citep{blum2020recovering, jiang2020identifying, fogliato2020fairness, wick2019unlocking}, which also study the interplay between labeling biases and algorithmic fairness. \citet{jiang2020identifying} propose to address label biases in the data directly, by assigning appropriately selected weights to different samples in the training dataset. \citet{blum2020recovering} study labeling biases in the \emph{qualified disadvantaged group}, as well as re-weighing techniques for debiasing data. Further, they show that fairness intervention in the form of imposing Equality of Opportunity can in fact improve the accuracy achievable on biased training data. \citet{fogliato2020fairness} propose a sensitivity analysis framework to examine the fairness of a model obtained from biased data and consider errors in identifying the \emph{unqualified advantaged group}. \citet{wick2019unlocking} consider errors in identifying both the \emph{unqualified advantaged group and qualified disadvantaged group} together and focus on the fairness accuracy trade-off when applying different approaches to achieve Demographic Parity. In contrast to these works, we contribute through the study of a more comprehensive set of group fairness criteria, as well as a larger set of statistical biases (two types of labeling bias, and feature measurement errors). Our different analysis approach further allows us to provide new insights into which fairness criteria may remain robust (or even help increase a firm's utility), and why, against each form of statistical data bias. 

We review additional related work in Appendix~\ref{app:related}. 

\section{Problem Setting}\label{sec:model}

We analyze an environment consisting of a firm (the decision maker) and a population of agents, as detailed below. Table~\ref{t:notation} in Appendix~\ref{app:notation} summarizes the notation. 

\paragraph{The agents.}  
Consider a population of agents composed of two demographic groups,  distinguished by a sensitive attribute $g\in \{a,b\}$. Let $n_g := \mathbb{P}(G=g)$ denote the  fraction of the population who are in group $g$. 
Each agent has an observable feature $x \in\mathbb{R}$, representing information that is used by the firm in making its decisions; these could be e.g., exam scores or credit scores.\footnote{We consider one-dimensional features (numerical scores) for ease of exposition in our analysis. Our experiments consider both one-dimensional and $n$-dimensional features.} The agent further has a (hidden) binary qualification state $y \in \{0,1\}$, with $y=1$ and $y=0$ denoting those qualified and unqualified to receive favorable decisions, respectively. Let $\alpha_g:=\mathbb{P}(Y=1| G=g)$ denote the qualification rate in group $g$. In addition, let $f^y_g(x) :=\mathbb{P}(X=x| Y=y, G=g)$ denote the probability density function (pdf) of the distribution of features for individuals with qualification state $y$ from group $g$. We make the following assumption on these feature distributions. 
\begin{assumption}\label{ass:MLR}
	The pdfs $f^y_g(x)$ and their CDFs $F^y_g(x)$ are continuously differentiable, and the pdfs satisfy the strict monotone likelihood ratio property, i.e., $\frac{f^1_g(x)}{f^0_g(x)}$ is strictly increasing in $x\in \mathbb{R}$. 
\end{assumption}
This assumption implies that an individual is more likely to be qualified as their feature (score) increases.

We further define the qualification profile of group $g$ as $\gamma_g(x) :=\mathbb{P}(Y=1| X=x, G=g)$, which captures the likelihood that an agent with feature $x$ from group $g$ is qualified. For instance, this could capture estimated repay probabilities given the observed credit scores (which may differ across groups). We let group $b$ be the group with a lower likelihood of being qualified at the same feature ($\gamma_b(x)\leq \gamma_a(x), \forall x$), and refer to it as the disadvantaged group. 

As we show in Section~\ref{sec:analytical}, the firm's optimal decision rule can be determined based on the qualification rates $\alpha_g$ and either one of the other problem primitives: the feature distributions $f^y_g(x)$ or the qualification profiles $\gamma_g(x)$. 
These quantities are related to each other as follows:
\begin{align}
	\gamma_g(x) = \tfrac{f^1_g(x)\alpha_g}{f^1_g(x)\alpha_g+f^0_g(x)(1-\alpha_g)} = \tfrac{1}{1+ \tfrac{f^0_g(x)}{f^1_g(x)}(\frac{1}{\alpha_g}-1)}~.
	\label{eq:gamma-f-alpha}
\end{align}

Existing real-world datasets also often provide information on the qualification rates $\alpha_g$ together with either the feature (distributions) $f^y_g(x)$ (e.g., the UCI Adult dataset) or the qualification profiles $\gamma_g(x)$ (e.g., the FICO credit score dataset); see Section~\ref{sec:experiments}. We will later detail how data biases (in the form of labeling or feature measurement errors) can be viewed as inaccuracies in these measures.

\paragraph{The firm.}  
A firm makes binary decisions $d \in \{0,1\}$ on agents from each group based on their observable features, with $d=0$ and $d=1$ denoting reject and accept decisions, respectively. The firm gains a benefit of $u_+$ from accepting qualified individuals, and incurs a loss of $u_-$ from accepting unqualified individuals. The goal of the firm is to select a (potentially group-dependent) decision rule or policy  $\pi_{g}(x)=\mathbb{P}(D=1| X=x, G=g)$ to maximize its expected payoff. 
In this paper, we restrict attention to threshold policies $\pi_{g}(x) = \mathrm{1}(x\geq \theta_g)$, where $\mathrm{1}(\cdot)$ denotes the indicator function and $\theta_g$ is the decision threshold for group $g$.\footnote{Prior work \cite{liu2018delayed,zhang2020fair} show that threshold policies are optimal under Assumption~\ref{ass:MLR} when selecting fairness-unconstrained policies, and optimal in the fairness-constrained case given additional mild assumptions.} Let $U(\theta_a, \theta_b)=n_aU_a(\theta_a)+n_bU_b(\theta_b)$ denote the firm's expected payoff under policies $\{\theta_a, \theta_b\}$, with $U_g(\theta_g)$ denoting the payoff from group $g$ agents.

The firm may further impose a (group) fairness constraint on the choice of its decision rule. While our framework is more generally applicable, we focus our analysis on \texttt{Demographic Parity (DP)} and \texttt{True/False Positive Rate Parity (TPR/FPR)}.\footnote{We also study \texttt{Equalized Odds (EO)} \cite{hardt2016equality} in our experiments, which requires both TPR and FPR.} 

Let $\mathcal{C}_a^{\texttt{f}}(\theta_a) = \mathcal{C}_b^{\texttt{f}}(\theta_b)$ denote the fairness constraint,\footnote{The choice of hard constraints is for theoretical convenience. In Section~\ref{sec:experiments}, we allow for soft constraints $|\mathcal{C}_a^{\texttt{f}}(\theta_a) - \mathcal{C}_b^{\texttt{f}}(\theta_b)| < \epsilon$.} where $\texttt{f} \in\{\texttt{DP}, \texttt{TPR}, \texttt{FPR}\}$. These constraints can be expressed as follows: 

$\bullet$ \texttt{DP}: This constraint equalizes selection rate across groups, and is given by $C_g^{\texttt{DP}}(\theta) = \int_{\theta}^\infty (\alpha_gf^1_g(x)+(1-\alpha_g)f^0_g(x))\mathrm{d}x$; 

$\bullet$ \texttt{TPR}: Also known as \texttt{Equality of Opportunity} \cite{hardt2016equality}, this constraint equalizes the true positive rate across groups, and can be expressed as  $\mathcal{C}_g^{\texttt{TPR}}(\theta) = \int_{\theta}^{\infty} f^1_g(x)\mathrm{d}x$; 

$\bullet$ \texttt{FPR}: False positive rate parity is defined similarly, with $\mathcal{C}_g^{\texttt{FPR}}(\theta) = \int_{\theta}^{\infty} f^0_g(x)\mathrm{d}x$.

Accordingly, the firm's optimal choice of decision thresholds can be determined by:
\begin{align}
	\max_{\theta_a, \theta_b} & ~ \sum_{g} n_g \int (\alpha_g u_+f^1_g(x) - (1-\alpha_g)u_-f^0_g(x))\pi_g(x)\mathrm{d}x, \notag\\
	\text{s.t.} & \quad \mathcal{C}_a^{\texttt{f}}(\theta_a) = \mathcal{C}_b^{\texttt{f}}(\theta_b)~.
	\label{eq:firm-opt}
\end{align}

Let $\theta_g^{\texttt{f}}$ denote the solution of \eqref{eq:firm-opt} under fairness constraint $\texttt{f}\in\{\texttt{DP}, \texttt{TPR}, \texttt{FPR}\}$, and $\theta_g^{\texttt{MU}}$ denote the \texttt{Maximum Utility (MU)} thresholds (i.e., maximizers of the firm's expected payoff in the absence of a fairness constraint). 

\paragraph{Dataset biases.} 
In order to solve \eqref{eq:firm-opt}, the firm relies on historical information and training datasets to obtain estimates of the underlying population characteristics: the qualification rates $\alpha_g$, the feature distributions $f_g^y(x)$, and/or the qualification profiles $\gamma_g(x)$. However, the estimated quantities $\hat{\alpha}_g$, $\hat{f}_g^y(x)$, and/or $\hat{\gamma}_g(x)$ may differ from the true population characteristics. 
We refer to the inaccuracies in these estimates as data bias. Specifically, we focus our analysis on the following instantiations of our general model:

    \paragraph{1. Qualification assessment (labeling) biases,} reflected in the form of errors in profiles $\gamma_g(x)$. We note that such biases can also affect the estimate of $\alpha_g$ and $f^y_g(x)$. This case is most similar to labeling biases considered in prior work \cite{blum2020recovering,jiang2020identifying,fogliato2020fairness, wick2019unlocking}. Here, we consider two specific forms of this type of bias:
    
    \begin{itemize}
        \item \emph{Flipping labels on qualified disadvantaged agents.} We first consider biases that result in $\hat{\gamma}_b(x) = \beta \gamma_b(x), \forall x$, where $\beta\in(0,1)$ is the underestimation rate. This can be viewed as label biases due to a prior decision maker/policy that only had a probability $\beta<1$ of correctly identifying qualified agents from the disadvantaged group $b$. We start with this type of bias as it is one of the most difficult to rectify. Specifically, these biases will not be corrected post-decision due to censored feedback: once a qualified agent is labeled as 0 and rejected, the firm does not get the opportunity to observe the agent and assess whether this was indeed the correct label.  
        \item \emph{Flipping labels on unqualified advantaged agents.} We also consider biases of the form of $\hat{\gamma}_a(x)=(1-\beta){\gamma_a(x)} + \beta$, with $\beta\in(0,1)$, interpreted as prior errors by a decision maker who mistakenly labeled unqualified agents from the advantaged group as qualified with probability $\beta$.
    \end{itemize}
    
    \paragraph{2. Feature measurement errors,} in the form of drops in the feature distribution likelihood ratios in the disadvantaged group. Formally, we consider biases that result in $\frac{\hat{f}^1_b({x})}{\hat{f}^0_b(x)} = \beta(x)  \frac{f_b^1(x)}{f_b^0(x)}, \forall x$, where $\beta(x):\mathbb{R}\rightarrow (0,1)$ is the underestimation rate and is a non-decreasing function in $x$ (including constant). In words, this results in a firm assessing that an agent with a given feature $x$ is less likely to be qualified than it truly is. 
    This type of bias can occur, for instance, if scores are normally distributed and systematically underestimated such that $\hat{x}=x-\epsilon$, where $\epsilon\geq 0$. 
    This case generalizes measurement biases studied in \cite{liu2018delayed}.

Note that both the firm's expected payoff (objective function in \eqref{eq:firm-opt}) and the fairness criteria are impacted by such data biases. 
In the next sections, we analyze, both theoretically and numerically, the impacts of these types of data biases on the firm's ability to satisfy the desired fairness metric \texttt{f}, as well as on the firm's expected payoff. 

\section{Analytical Results}\label{sec:analytical} 

We begin by characterizing the decision thresholds $\theta_g^{\texttt{MU}}$ that maximize the firm's expected utility, in the absence of any fairness constraints, and investigate the impacts of data biases on these thresholds. All proofs are given in Appendix~\ref{app:analytical-proofs}.

\begin{lemma}[Optimal \texttt{MU} thresholds]\label{lemma:MU-opt}
The thresholds $\{\theta^\texttt{MU}_a, \theta^\texttt{MU}_b\}$ maximizing the firm's utility satisfy $\frac{f^1_g(\theta_g^{\texttt{MU}})}{f^0_g(\theta_g^{\texttt{MU}})}=\frac{(1-\alpha_g)u_-}{\alpha_g u+}$. Equivalently, $\gamma_g(\theta_g^{\texttt{MU}})=\frac{u_-}{u_+ + u_-}$.
\end{lemma}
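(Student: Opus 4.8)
The plan is to exploit the additive decomposition of the firm's objective in \eqref{eq:firm-opt} across the two groups, so that each group's threshold can be optimized independently by maximizing its own payoff contribution $U_g(\theta_g)$. First I would substitute the threshold policy $\pi_g(x)=\mathrm{1}(x\geq\theta_g)$ into the objective, which converts the integral over $\mathbb{R}$ into one from $\theta_g$ to $\infty$:
\begin{align}
U_g(\theta_g)=\int_{\theta_g}^{\infty}\big(\alpha_g u_+ f^1_g(x)-(1-\alpha_g)u_- f^0_g(x)\big)\,\mathrm{d}x~.
\end{align}
Since the densities are continuously differentiable under Assumption~\ref{ass:MLR}, differentiating via the fundamental theorem of calculus gives $U_g'(\theta_g)=-\big(\alpha_g u_+ f^1_g(\theta_g)-(1-\alpha_g)u_- f^0_g(\theta_g)\big)$, and setting this to zero produces exactly the stated condition $\frac{f^1_g(\theta_g^{\texttt{MU}})}{f^0_g(\theta_g^{\texttt{MU}})}=\frac{(1-\alpha_g)u_-}{\alpha_g u_+}$.

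The one step requiring care is confirming that this stationary point is the unique global maximizer, and this is precisely where Assumption~\ref{ass:MLR} is needed. I would define the integrand $h_g(x):=\alpha_g u_+ f^1_g(x)-(1-\alpha_g)u_- f^0_g(x)$ and observe that $h_g(x)>0$ exactly when $\frac{f^1_g(x)}{f^0_g(x)}>\frac{(1-\alpha_g)u_-}{\alpha_g u_+}$. Because the monotone likelihood ratio property makes $\frac{f^1_g(x)}{f^0_g(x)}$ strictly increasing, $h_g$ changes sign exactly once, from negative to positive, as $x$ increases. Consequently $U_g'(\theta_g)=-h_g(\theta_g)$ is positive for thresholds below the crossing point and negative above it, so $U_g$ increases then decreases and attains its unique global maximum exactly at the crossing point identified by the first-order condition.

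Finally, I would obtain the equivalent $\gamma_g$-based characterization by substituting the first-order condition into relation \eqref{eq:gamma-f-alpha}. At $\theta_g^{\texttt{MU}}$ we have $\frac{f^0_g}{f^1_g}=\frac{\alpha_g u_+}{(1-\alpha_g)u_-}$, so the quantity $\frac{f^0_g}{f^1_g}\big(\frac{1}{\alpha_g}-1\big)$ in \eqref{eq:gamma-f-alpha} collapses to $\frac{u_+}{u_-}$, giving $\gamma_g(\theta_g^{\texttt{MU}})=\frac{1}{1+u_+/u_-}=\frac{u_-}{u_+ + u_-}$. The calculus and algebra here are routine, so the only genuinely substantive ingredient is the single-crossing argument above, which I expect to pose no serious obstacle.
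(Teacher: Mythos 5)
Your proposal is correct and follows essentially the same route as the paper's proof: group-wise decomposition of the objective, differentiating $U_g(\theta_g)$ at the threshold, invoking Assumption~\ref{ass:MLR} to ensure the first-order condition identifies the maximizer, and deriving the $\gamma_g$ form from \eqref{eq:gamma-f-alpha}. Your explicit single-crossing argument merely spells out the uniqueness step that the paper compresses into ``By Assumption~\ref{ass:MLR}.''
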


\begin{lemma}[Impact of data biases on \texttt{MU} thresholds and firm's utility]\label{lemma:MU-bias}
Let ${\theta}^{\texttt{MU}}_g$ and $\hat{\theta}^{\texttt{MU}}_g$ denote the optimal \texttt{MU} decision thresholds for group $g$, obtained given unbiased data and data with biases on group $b$, respectively. If
    (i) $\hat{\gamma}_b({\theta}^{\texttt{MU}}_b) < {\gamma_b({\theta}^{\texttt{MU}}_b)}$, or, 
    (ii) $\frac{\hat{f}^1_b({\theta}^{\texttt{MU}}_b)}{\hat{f}^0_b({\theta}^{\texttt{MU}}_b)} <  \frac{f_b^1({\theta}^{\texttt{MU}}_b)}{f_b^0({\theta}^{\texttt{MU}}_b)}$, 
then the decision threshold on group $b$ increases, i.e., $\hat{\theta}^{\texttt{MU}}_b>{\theta}^{\texttt{MU}}_b$. 
The reverse holds if the inequalities above are reversed. In all these cases, the decisions on group $a$ are unaffected, i.e., $\hat{\theta}^{\texttt{MU}}_a = {\theta}^{\texttt{MU}}_a$. Further, the firm's utility decreases in all cases, i.e., $U({\theta}^{\texttt{MU}}_a, \hat{\theta}^{\texttt{MU}}_b) < U({\theta}^{\texttt{MU}}_a, {\theta}^{\texttt{MU}}_b)$. 
\end{lemma}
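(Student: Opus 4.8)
The plan is to leverage the clean characterization from Lemma~\ref{lemma:MU-opt}, which tells us that the optimal \texttt{MU} threshold for group $b$ is determined by the single scalar equation $\gamma_b(\theta^{\texttt{MU}}_b) = \frac{u_-}{u_+ + u_-}$, or equivalently the likelihood-ratio condition $\frac{f^1_b(\theta)}{f^0_b(\theta)} = \frac{(1-\alpha_b)u_-}{\alpha_b u_+}$. The biased threshold $\hat\theta^{\texttt{MU}}_b$ solves the \emph{same} equation but with $\gamma_b$ (resp. the likelihood ratio) replaced by its biased estimate $\hat\gamma_b$ (resp. $\frac{\hat f^1_b}{\hat f^0_b}$). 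So the comparison between $\hat\theta^{\texttt{MU}}_b$ and $\theta^{\texttt{MU}}_b$ reduces to comparing where two monotone functions cross a common horizontal level. The key tool is monotonicity: Assumption~\ref{ass:MLR} guarantees $\gamma_b$ (and the likelihood ratio) is strictly increasing, which I will use repeatedly.

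\textbf{Threshold movement.} For case (i), I would argue as follows. At the unbiased threshold, $\gamma_b(\theta^{\texttt{MU}}_b) = \frac{u_-}{u_+ + u_-}$ exactly. The hypothesis $\hat\gamma_b(\theta^{\texttt{MU}}_b) < \gamma_b(\theta^{\texttt{MU}}_b)$ then says that, \emph{evaluated at the old threshold}, the biased profile lies strictly below the target level $\frac{u_-}{u_+ + u_-}$. Since $\hat\gamma_b$ is itself strictly increasing (it inherits the monotone likelihood ratio structure via \eqref{eq:gamma-f-alpha}), to reach the target value $\frac{u_-}{u_+ + u_-}$ we must move to a strictly larger feature value; hence $\hat\theta^{\texttt{MU}}_b > \theta^{\texttt{MU}}_b$. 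Case (ii) is entirely parallel, using the likelihood-ratio form of Lemma~\ref{lemma:MU-opt}: the biased ratio evaluated at $\theta^{\texttt{MU}}_b$ falls short of the required constant $\frac{(1-\alpha_b)u_-}{\alpha_b u_+}$, and strict monotonicity of $\frac{\hat f^1_b}{\hat f^0_b}$ forces the crossing rightward. The reversed inequalities give the reverse conclusion by the same monotonicity argument. The claim $\hat\theta^{\texttt{MU}}_a = \theta^{\texttt{MU}}_a$ is immediate because the objective in \eqref{eq:firm-opt} is separable across groups in the unconstrained (MU) case, so group $a$'s threshold is unaffected by any bias confined to group $b$.

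\textbf{Utility decrease.} For the final claim, I would use the optimality of $\theta^{\texttt{MU}}_b$ against the \emph{true} distributions. The point is that $\theta^{\texttt{MU}}_b$ is by definition the unique maximizer of $U_b(\cdot)$ computed under the true feature distributions (the objective is strictly quasiconcave in the threshold under Assumption~\ref{ass:MLR}, with a unique stationary point given by Lemma~\ref{lemma:MU-opt}). Since $\hat\theta^{\texttt{MU}}_b \neq \theta^{\texttt{MU}}_b$ in every bias case, plugging the suboptimal $\hat\theta^{\texttt{MU}}_b$ into the true utility yields $U_b(\hat\theta^{\texttt{MU}}_b) < U_b(\theta^{\texttt{MU}}_b)$, and since group $a$'s contribution is unchanged, $U(\theta^{\texttt{MU}}_a, \hat\theta^{\texttt{MU}}_b) < U(\theta^{\texttt{MU}}_a, \theta^{\texttt{MU}}_b)$.

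\textbf{The main obstacle} will be rigorously establishing that $\hat\gamma_b$ (or $\hat{f}^1_b/\hat{f}^0_b$) is strictly monotone, since the hypotheses only constrain the biased estimates pointwise at a single threshold. I expect the intended reading is that the biased estimates are themselves valid feature distributions / qualification profiles satisfying Assumption~\ref{ass:MLR} — this is automatic for the measurement-shift model $\hat x = x - \epsilon_b(x)$ in case (ii), but for case (i) I would either invoke this as a standing regularity assumption on $\hat\gamma_b$ or verify it from the specific bias model. A secondary care point is the strict-versus-weak optimality in the utility step: I must confirm that the unique maximizer claim uses strict monotonicity (Assumption~\ref{ass:MLR}) so that $\hat\theta^{\texttt{MU}}_b \neq \theta^{\texttt{MU}}_b$ genuinely entails a \emph{strict} drop in utility rather than a tie.
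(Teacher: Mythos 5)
Your proposal is correct and follows essentially the same route as the paper's proof: the threshold shift via the level-crossing characterization of Lemma~\ref{lemma:MU-opt} combined with monotonicity, the invariance of $\theta^{\texttt{MU}}_a$ via separability of the unconstrained objective, and the strict utility drop via uniqueness of the maximizer under the true distributions. The one point you flag as an obstacle---that the \emph{biased} estimates $\hat\gamma_b$ (or $\hat{f}^1_b/\hat{f}^0_b$) must themselves be strictly increasing for the crossing argument---is indeed left implicit in the paper, which tacitly treats the biased primitives as satisfying Assumption~\ref{ass:MLR} (as is needed anyway for the biased \texttt{MU} policy to be a threshold rule), so your explicit handling of it is a legitimate tightening rather than a divergence.
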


As intuitively expected, biases against the disadvantaged group (underestimation of their qualification profiles, or scores) lead to an increase in their disadvantage; the reverse is true if the group is perceived more favorably. We also note that the decisions on group $a$ remain unaffected by any biases in group $b$'s data. This implies that if the representation of group $b$ is small, the firm has less incentive for investing resources in removing data biases on group $b$. In the remainder of this section, we will show that the coupling introduced between the group's decisions due to fairness criteria breaks this independence. A takeaway from this observation is that once a fairness-constrained algorithm couples the decision rules between two groups, it also makes statistical data debiasing efforts advantageous to \emph{both} groups, and therefore increases a (fair) firm's incentives for data debiasing.

The next lemma characterizes the optimal fairness-constrained decision thresholds. 

\begin{lemma}[Optimal fair thresholds]\label{lemma:fair-opt}
The thresholds $\{\theta^\texttt{f}_a, \theta^\texttt{f}_b\}$ maximizing the firm's expected utility subject to a fairness constraint 
$\texttt{f}\in\{\texttt{DP}, \texttt{TPR}, \texttt{FPR}\}$ satisfy 
$\sum_{g} n_g \tfrac{\alpha_g u_+f^1_g(\theta^\texttt{f}_g) -  (1-\alpha_g)u_-f^0_g(\theta^\texttt{f}_g)}{\partial C_g^{\texttt{f}}(\theta^\texttt{f}_g)/\partial \theta} = 0$.
\end{lemma}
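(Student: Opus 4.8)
The plan is to treat \eqref{eq:firm-opt} as an equality-constrained optimization in the two variables $(\theta_a,\theta_b)$ and derive the stated equation as its first-order stationarity condition via a Lagrange multiplier. First I would write the objective explicitly as $U(\theta_a,\theta_b)=\sum_g n_g\int_{\theta_g}^{\infty}V_g(x)\,\mathrm{d}x$, where $V_g(x):=\alpha_g u_+ f^1_g(x)-(1-\alpha_g)u_- f^0_g(x)$ is the integrand appearing in \eqref{eq:firm-opt} after substituting $\pi_g(x)=\mathrm{1}(x\geq\theta_g)$. Since $f^1_g,f^0_g$ are continuous under Assumption~\ref{ass:MLR}, the fundamental theorem of calculus gives $\partial U/\partial\theta_g=-n_g V_g(\theta_g)$ (the negative sign coming from differentiating with respect to the lower limit of integration).

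Next I would form the Lagrangian $\mathcal{L}(\theta_a,\theta_b,\lambda)=U(\theta_a,\theta_b)-\lambda\big(C^{\texttt{f}}_a(\theta_a)-C^{\texttt{f}}_b(\theta_b)\big)$ for the equality constraint $C^{\texttt{f}}_a(\theta_a)=C^{\texttt{f}}_b(\theta_b)$. Before using it, I would observe that each $C^{\texttt{f}}_g(\theta)=\int_{\theta}^{\infty}(\cdot)\,\mathrm{d}x$ is strictly decreasing wherever the relevant density is positive, so $\partial C^{\texttt{f}}_g/\partial\theta\neq 0$; this both verifies the constraint qualification (the constraint gradient is nonzero) and justifies the divisions that follow. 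The two stationarity conditions $\partial\mathcal{L}/\partial\theta_a=0$ and $\partial\mathcal{L}/\partial\theta_b=0$ then read $\partial U/\partial\theta_a=\lambda\,\partial C^{\texttt{f}}_a/\partial\theta$ and $\partial U/\partial\theta_b=-\lambda\,\partial C^{\texttt{f}}_b/\partial\theta$.

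I would then eliminate the multiplier. Solving each condition for $\lambda$ gives $\lambda=\frac{\partial U/\partial\theta_a}{\partial C^{\texttt{f}}_a/\partial\theta}=-\frac{\partial U/\partial\theta_b}{\partial C^{\texttt{f}}_b/\partial\theta}$, equivalently $\sum_g \frac{\partial U/\partial\theta_g}{\partial C^{\texttt{f}}_g/\partial\theta}=0$. Substituting $\partial U/\partial\theta_g=-n_g V_g(\theta_g)$ and cancelling the common factor $-1$ yields $\sum_g n_g\frac{V_g(\theta^{\texttt{f}}_g)}{\partial C^{\texttt{f}}_g(\theta^{\texttt{f}}_g)/\partial\theta}=0$, which is exactly the claimed identity once $V_g$ is written out as $\alpha_g u_+ f^1_g(\theta^{\texttt{f}}_g)-(1-\alpha_g)u_- f^0_g(\theta^{\texttt{f}}_g)$. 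An equivalent multiplier-free route uses strict monotonicity of $C^{\texttt{f}}_a$ to set $\theta_a=(C^{\texttt{f}}_a)^{-1}\!\big(C^{\texttt{f}}_b(\theta_b)\big)$, reduces $U$ to a function of $\theta_b$ alone, and sets its derivative to zero, with the implicit-differentiation relation $\mathrm{d}\theta_a/\mathrm{d}\theta_b=(\partial C^{\texttt{f}}_b/\partial\theta)/(\partial C^{\texttt{f}}_a/\partial\theta)$ producing the same equation.

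I do not expect a genuine obstacle here so much as a few care-points: getting the sign of $\partial U/\partial\theta_g$ correct from the fundamental theorem of calculus, confirming $\partial C^{\texttt{f}}_g/\partial\theta\neq 0$ so that both the multiplier and the divisions are well defined, and being clear that the lemma asserts a \emph{necessary} first-order condition satisfied by the optimizer rather than a sufficient one. If one wished to go further, one could additionally comment on existence of an interior optimizer and on second-order conditions, but neither is required to establish the stated stationarity equation.
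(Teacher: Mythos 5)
Your proposal is correct, and your primary route differs in mechanics from the paper's. The paper does not form a Lagrangian: it uses invertibility of $C^{\texttt{f}}_a$ to substitute $\theta_a = \phi^{\texttt{f}}(\theta_b) := (C^{\texttt{f}}_a)^{-1}(C^{\texttt{f}}_b(\theta_b))$, computes $\partial \phi^{\texttt{f}}/\partial\theta_b = \bigl(\partial C^{\texttt{f}}_b/\partial\theta\bigr)/\bigl(\partial C^{\texttt{f}}_a/\partial\theta\bigr)$ (its equation \eqref{eq:a-aafo-b}), and sets the derivative of the reduced one-variable objective to zero --- exactly the ``multiplier-free route'' you sketch in passing, so you have effectively covered both. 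Your sign bookkeeping is right, and in fact more careful than the paper's, which writes $\partial C^{\texttt{f}}_g/\partial\theta$ as the positive density (the sign discrepancy cancels in the ratio, so the final identity is unaffected); your verification that $\partial C^{\texttt{f}}_g/\partial\theta \neq 0$ supplies the constraint qualification the multiplier rule needs, which the paper handles implicitly via invertibility of $C^{\texttt{f}}_g$. The substantive difference is what each argument buys beyond the first-order condition: the paper's substitution route continues past stationarity to a global-optimality claim --- since $\phi^{\texttt{f}}$ is increasing, every feasible profile moves both thresholds in the same direction, and Assumption~\ref{ass:MLR} makes the reduced derivative positive below and negative above the stationary pair, so that pair is the constrained maximizer --- whereas your Lagrange argument delivers only the necessary condition, which, as you correctly note, is all the lemma literally asserts. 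A secondary benefit of the paper's route is structural: the relation $\partial\theta_a/\partial\theta_b = \bigl(\partial C^{\texttt{f}}_b/\partial\theta\bigr)/\bigl(\partial C^{\texttt{f}}_a/\partial\theta\bigr)$ is reused verbatim in the proofs of Propositions~\ref{prop:f-bias-gamma}, \ref{prop:f-bias-gamma-sensitivity}, and \ref{prop:f-bias-gamma-a}, so the substitution is not incidental to the paper's later arguments, while the pure multiplier derivation would have to re-derive it there.
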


This characterization is similar to those obtained in prior works \cite{zhang2019group,zhang2020fair} (our derivation technique is different).  
Using Lemma~\ref{lemma:fair-opt}, we can further characterize the thresholds for fairness criteria  $\texttt{f}\in\{\texttt{DP}, \texttt{TPR}, \texttt{FPR}\}$, as derived in Tables~\ref{t:fair-thds} and \ref{t:fair-thds-2} in  Appendix~\ref{app:table-derivations}.  These form the basis of the next set of results, which shed light on the sensitivity of different fairness criteria to biased training data. 

\subsection{Impacts of Labeling Biases}\label{sec:gamma-bias}
We now assess the sensitivity of fairness-constrained policies to biases in qualification assessment (labeling). We first consider biases that result in $\hat{\gamma}_b(x) = \beta \gamma_b(x), \forall x$, where $\beta\in(0,1]$ is the underestimation rate (this could be due to, e.g., labeling errors on qualified individuals from the disadvantaged group). 
We first analyze the impacts of such biases on the decision thresholds and on the firm's utility. 

\begin{proposition}
\label{prop:f-bias-gamma}
Assume the qualification profile of group $b$ is underestimated so that $\hat{\gamma}_b(x) = \beta \gamma_b(x), \forall x$, where $\beta\in(0,1]$. 
Let ${\theta}^{\texttt{f}}_g$ and $\hat{\theta}^{\texttt{f}}_g(\beta)$ denote the optimal decision thresholds satisfying fairness constraint $\texttt{f}\in\{\texttt{DP}, \texttt{TPR}, \texttt{FPR}\}$, obtained from unbiased data and from data with biases on group $b$ given $\beta$, respectively. Then,

(i) $\hat{\theta}^{\texttt{f}}_g(\beta)\geq {\theta}^{\texttt{f}}_g$  for $g\in\{a,b\}, ~\texttt{f}\in\{\texttt{DP}, \texttt{TPR}, \texttt{FPR}\}, ~ \beta\in(0,1]$. Further, $\hat{\theta}^{\texttt{f}}_g(\beta)$ is decreasing in $\beta$. 
    
(ii) The \texttt{DP} and \texttt{TPR}  criteria continue to be met, while \texttt{FPR} is violated, at their $\{\hat{\theta}^{\texttt{f}}_a(\beta), \hat{\theta}^{\texttt{f}}_b(\beta)\}$.
    
(iii)    The firm's utility decreases under $\texttt{f}\in\{\texttt{DP}, \texttt{TPR}\}$, i.e.,  $U(\hat{\theta}^{\texttt{f}}_a(\beta), \hat{\theta}^{\texttt{f}}_b(\beta))\leq U({\theta}^{\texttt{f}}_a, {\theta}^{\texttt{f}}_b)$.
    
(iv) The firm's utility may increase or decrease under $\texttt{FPR}$. 

\end{proposition}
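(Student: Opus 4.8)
The plan is to first convert the labeling bias into an effect on the model primitives, prove statement (ii) (which is the structural heart), and then leverage it for (i) and (iii); (iv) and the \texttt{FPR} part of (i) are the genuinely delicate pieces.

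\textbf{Step 0 (reduction of the bias).} Since features are observed correctly, the bias $\hat{\gamma}_b(x)=\beta\gamma_b(x)$ must leave the group-$b$ feature marginal $p_b(x):=\alpha_b f^1_b(x)+(1-\alpha_b)f^0_b(x)$ unchanged. Using Bayes' rule $\hat{\gamma}_b p_b=\hat{\alpha}_b\hat{f}^1_b$ together with $\gamma_b p_b=\alpha_b f^1_b$, and integrating to get $\hat{\alpha}_b=\beta\alpha_b$, I would show $\hat{f}^1_b=f^1_b$ and $\hat{f}^0_b=\frac{(1-\alpha_b)f^0_b+(1-\beta)\alpha_b f^1_b}{1-\beta\alpha_b}$, i.e. the estimated ``unqualified'' density is a mixture of the true $f^0_b$ and $f^1_b$. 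Group $a$'s primitives are untouched.

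\textbf{Step 1 (part (ii)).} The constraint $C^{\texttt{DP}}_g$ depends only on $p_g$ and $C^{\texttt{TPR}}_g$ only on $f^1_g$, both bias-invariant by Step 0; hence any thresholds the firm selects on biased data satisfy the \emph{same} true \texttt{DP}/\texttt{TPR} constraint, giving robustness. For \texttt{FPR}, $C^{\texttt{FPR}}_b$ depends on $f^0_b$, which does change. Because Assumption~\ref{ass:MLR} makes $f^1_b$ first-order stochastically dominate $f^0_b$, the mixture $\hat{f}^0_b$ dominates $f^0_b$, so $\int_\theta^\infty \hat{f}^0_b>\int_\theta^\infty f^0_b$ for all interior $\theta$. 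Evaluating at the firm's chosen thresholds $\{\hat{\theta}^{\texttt{FPR}}_a,\hat{\theta}^{\texttt{FPR}}_b\}$, which equalize the \emph{biased} \texttt{FPR}, the true \texttt{FPR} of group $b$ is then strictly below that of group $a$: the criterion is violated.

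\textbf{Step 2 (parts (i) and (iii) for \texttt{DP}/\texttt{TPR}).} I would first reduce the characterization in Lemma~\ref{lemma:fair-opt} to $\gamma$-form (dividing numerator and denominator by the relevant marginal): with $T:=\frac{u_-}{u_++u_-}$ (the \texttt{MU} target from Lemma~\ref{lemma:MU-opt}), \texttt{DP} gives $\sum_g n_g\gamma_g(\theta_g)=T$, \texttt{TPR} gives $\sum_g n_g\alpha_g\frac{T-\gamma_g(\theta_g)}{\gamma_g(\theta_g)}=0$, and \texttt{FPR} gives $\sum_g n_g(1-\alpha_g)\frac{T-\gamma_g(\theta_g)}{1-\gamma_g(\theta_g)}=0$. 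For \texttt{DP} and \texttt{TPR} the constraint is bias-invariant, so it defines $\theta_a$ as a fixed strictly increasing function of $\theta_b$; substituting $\hat{\gamma}_b=\beta\gamma_b,\hat{\alpha}_b=\beta\alpha_b$ collapses the optimality condition to a single equation $H(\theta_b;\beta)=0$. Implicit differentiation, signing $\partial H/\partial\theta_b$ and $\partial H/\partial\beta$ using monotonicity of $\gamma_g$ and $\beta\le 1$, yields $d\hat{\theta}_b/d\beta<0$; since $\beta=1$ recovers the unbiased solution, both thresholds exceed their unbiased values and decrease in $\beta$. Part (iii) is then immediate: by (ii) the biased \texttt{DP}/\texttt{TPR} thresholds are feasible for the \emph{true} constrained problem $\max U$ s.t.\ $C^{\texttt{f}}_a=C^{\texttt{f}}_b$, whose maximizer is $\{\theta^{\texttt{f}}_a,\theta^{\texttt{f}}_b\}$, so $U(\hat{\theta}^{\texttt{f}}_a(\beta),\hat{\theta}^{\texttt{f}}_b(\beta))\le U(\theta^{\texttt{f}}_a,\theta^{\texttt{f}}_b)$.

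\textbf{Step 3 (\texttt{FPR} in (i), and (iv)) --- the main obstacle.} For \texttt{FPR} the constraint itself shifts with $\beta$ through $\hat{f}^0_b$, so the single-equation trick fails; instead I would treat the biased optimality $\Phi(\theta_a,\theta_b;\beta)=0$ and biased constraint $\Psi(\theta_a,\theta_b;\beta)=0$ as a coupled system and apply Cramer's rule to the total derivative. The partials $\Phi_{\theta_a},\Phi_{\theta_b},\Psi_{\theta_a}<0$ and $\Psi_{\theta_b}>0$ are straightforward (giving $\det<0$), but signing $\Phi_\beta$ and $\Psi_\beta$ is the hard part, since both mix the coefficient $(1-\beta\alpha_b)$ with the stochastic shift in $\hat{f}^0_b$; this sign-tracking is where I expect the bulk of the work. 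Finally, part (iv) follows from observing that the feasibility argument of (iii) is unavailable here because the \texttt{FPR} thresholds violate the true constraint; hence no monotone utility bound holds, and I would exhibit two concrete instances (as in the experiments) where the firm's true utility respectively rises and falls.
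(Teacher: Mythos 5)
Your Steps 0--2 and your treatment of parts (ii)--(iv) essentially reproduce the paper's proof: the same reduction of the bias ($\hat{\alpha}_b=\beta\alpha_b$, $\hat{f}^1_b=f^1_b$, and your mixture formula for $\hat{f}^0_b$ is algebraically identical to the paper's $\hat{f}^0_b(x)=\frac{1-\alpha_b}{1-\beta\alpha_b}\frac{1-\beta\gamma_b(x)}{1-\gamma_b(x)}f^0_b(x)$), the same invariance argument for \texttt{DP}/\texttt{TPR} in (ii), the same $\gamma$-form optimality conditions (Table~\ref{t:fair-thds-2}) with a monotone substitution $\theta_a=\phi(\theta_b)$ for the \texttt{DP}/\texttt{TPR} half of (i), the same feasibility argument for (iii), and construction/examples for (iv). Your FOSD-of-mixtures argument for the \texttt{FPR} violation is in fact slightly sharper than the paper's, which only observes $\hat{f}^0_b\neq f^0_b$; yours also pins down the direction of the violation.

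The genuine gap is the \texttt{FPR} half of part (i), which you explicitly defer (``where I expect the bulk of the work'') without supplying the key idea, and pointwise sign-tracking of $\Phi_\beta$ as you propose will not close it. Writing the biased optimality condition as $h^{\texttt{FPR}}(\theta_a,\theta_b,\beta)=\frac{n_a(1-\alpha_a)}{1-\gamma_a(\theta_a)}+\frac{n_b(1-\beta\alpha_b)}{1-\beta\gamma_b(\theta_b)}-\frac{1}{1-T}\bigl(n_a(1-\alpha_a)+n_b(1-\beta\alpha_b)\bigr)$ with $T=\frac{u_-}{u_++u_-}$ as in your Step 2, its $\beta$-derivative equals $\alpha_b n_b\bigl(\frac{1}{1-T}-\frac{1}{1-\beta\gamma_b(\theta_b)}\bigr)+\frac{n_b\gamma_b(\theta_b)(1-\beta\alpha_b)}{(1-\beta\gamma_b(\theta_b))^2}$, whose first summand has no uniform sign. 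The paper's key move, absent from your plan, is to evaluate at the biased optimum: there $h^{\texttt{FPR}}=0$ exhibits $\frac{1}{1-T}$ as a weighted average of $\frac{1}{1-\gamma_a(\hat{\theta}_a)}$ and $\frac{1}{1-\beta\gamma_b(\hat{\theta}_b)}$, and combined with the disadvantaged-group ordering $\gamma_b(x)\le\gamma_a(x)$ this yields the sandwich $\frac{1}{1-\beta\gamma_b(\hat{\theta}_b)}\le\frac{1}{1-T}\le\frac{1}{1-\gamma_a(\hat{\theta}_a)}$, which signs $\partial h^{\texttt{FPR}}/\partial\beta>0$ exactly along the optimal path; it follows that $h^{\texttt{FPR}}({\theta}^{\texttt{FPR}}_a,{\theta}^{\texttt{FPR}}_b,\beta)\le 0$ for $\beta<1$, so at least one threshold must rise. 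Your coupled-system/Cramer's-rule framing is also unnecessary: even with the biased $\hat{f}^0_b$, the constraint defines for each fixed $\beta$ a strictly increasing map $\theta_a=\phi(\theta_b;\beta)$ (any positive density gives this, as in \eqref{eq:a-aafo-b}), so the alignment of the two thresholds follows from monotonicity of $\phi$ exactly as in your \texttt{DP}/\texttt{TPR} cases, and the single scalar equation $h^{\texttt{FPR}}=0$ suffices; the same argument applied to $\beta^1>\beta^2$ then gives monotonicity of $\hat{\theta}^{\texttt{FPR}}_g(\beta)$ in $\beta$, which your plan also leaves unaddressed for this constraint.
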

\begin{figure}
\centering
\includegraphics[width=0.45\textwidth]{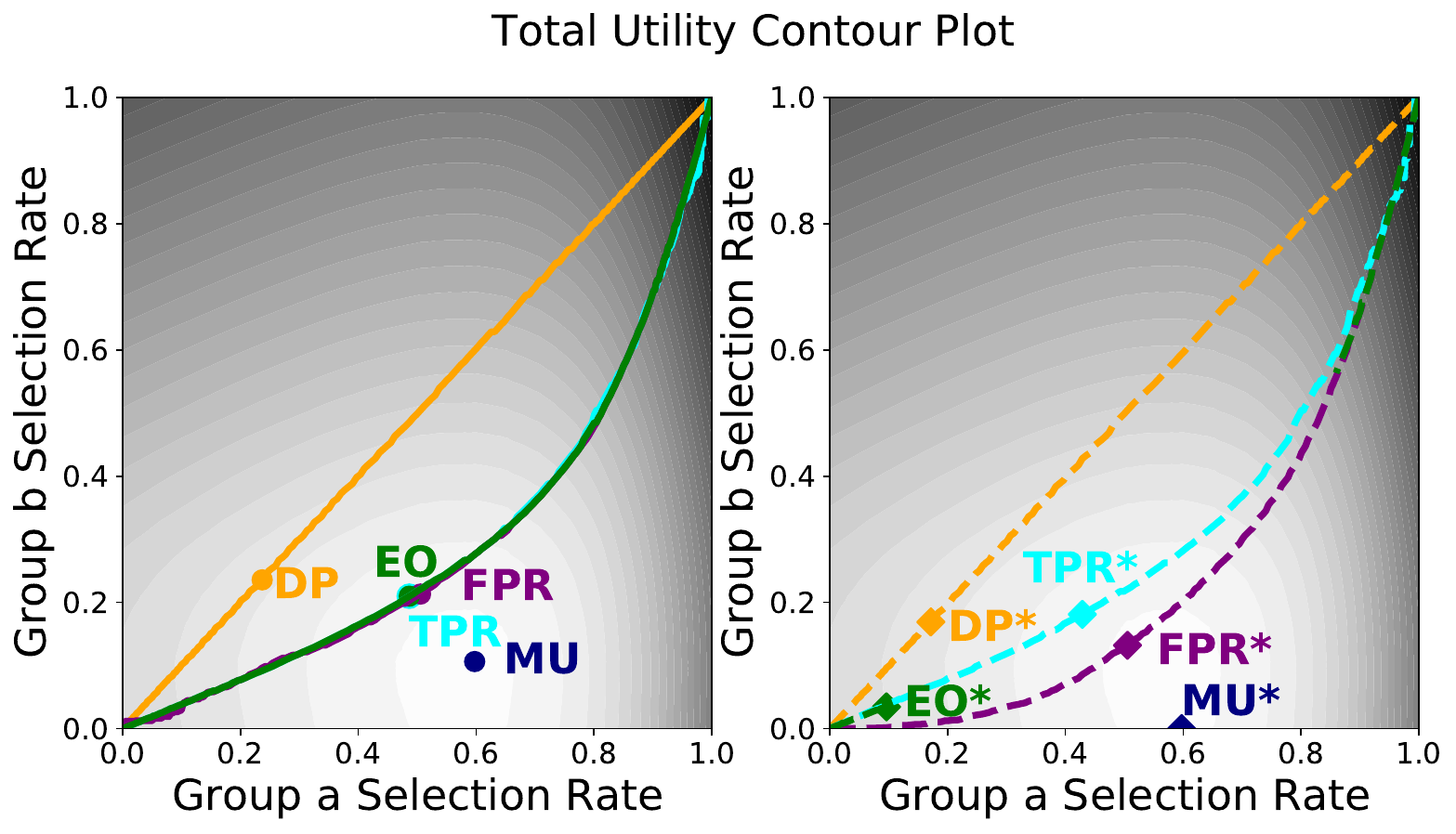}
\caption{Firm's utility as a function of selection rates. Brighter regions represent higher utility. The curves highlight solutions satisfying the fairness constraints. Left: unbiased data. Right: biased data with 20\% of the qualification states of the qualified agents from group $b$ flipped from 1 to 0.}
\label{fig:synthetic-label-util-contour-prop-1}
\end{figure}
Figure~\ref{fig:synthetic-label-util-contour-prop-1} illustrates Proposition~\ref{prop:f-bias-gamma} on a synthetic dataset inspired by the FICO dataset~\cite{hardt2016equality}. 

The proof of this proposition relies on the characterizations of the optimal fairness-constrained thresholds in Lemma~\ref{lemma:fair-opt}, together with identifying the changes in the problem primitives when  $\hat{\gamma}_b(x) = \beta \gamma_b(x)$. In particular, we show that $\hat{\alpha}_b(x)=\beta\alpha_b(x)$, $\hat{f}^1_b(x)=f^1_b(x)$, and $\hat{f}^0_b(x)=\frac{1-\alpha_b}{1-\beta\alpha_b} \frac{1-\beta\gamma_b(x)}{1-\gamma_b(x)} f^0_b(x)$. Intuitively, these changes can be explained as follows: $\hat{\gamma}_b(x)=\beta \gamma_b(x)$ can be viewed as flipping label 1 to label 0 in the training data on group $b$ with probability $\beta$. This leaves the feature distribution of qualified agents unchanged, whereas it adds (incorrect) data on unqualified agents, hence biasing $f^0_b(x)$. Such label flipping also decreases estimated qualification rates $\alpha_b$ by a factor $\beta$. Using these, we show that \texttt{DP/TPR} continue to hold at the biased thresholds given the changes in the statistics they rely on, while \texttt{FPR} is violated. As we show later, the impacts of other types of data bias, and the robustness of any fairness criteria against them, can be similarly tracked to the impacts of those statistical data biases on different data statistics.

We next note two main differences of this lemma with Lemma~\ref{lemma:MU-bias} in the unconstrained setting: (1) the biases in group $b$'s data now lead to under-selection of \emph{both} groups compared to the unbiased case. That is, the introduction of fairness constraints couples the groups in the impact of data biases as well. (2) Perhaps more interestingly, there exist scenarios in which the adoption of a fairness constraint \emph{benefits} a firm facing biased qualification assessments. (We provide additional intuition for this in Appendix~\ref{app:prop-impact-gamma-proof}). Note however that the fairness criterion is no longer satisfied in such scenarios.

In addition, Proposition~\ref{prop:f-bias-gamma} shows that the \texttt{DP} and \texttt{TPR} fairness criteria are \emph{robust} to underestimation of qualification profiles of the disadvantaged group, in that the obtained thresholds continue to satisfy the desired notion of fairness. That said, the proposition also states that the pair of decision thresholds $\{\theta^\texttt{f}_a, \theta^\texttt{f}_b\}$ are different from (and higher than) those that would be obtained if data was unbiased, and hence lead to the loss of utility for the firm. To better assess the impacts of these changes on the firm's expected payoff, we investigate the sensitivity of \texttt{DP} and \texttt{TPR} thresholds to the error rate $\beta$. 

Formally, we can use the results of Lemma~\ref{lemma:fair-opt} together with the implicit function theorem to characterize $\tfrac{\partial \hat{\theta}^{\texttt{f}}_g(\beta)}{\partial \beta}$, the rates of change in the thresholds as a function of the underestimation rate $\beta$; see Proposition~\ref{prop:f-bias-gamma-sensitivity} in Appendix~\ref{app:f-bias-gamma-sensitivity}. 
The following corollary of that proposition shows that, under mild conditions, \texttt{DP} is more sensitive to qualification assessment biases than \texttt{TPR} when facing the same bias rates. 

\begin{corollary}
\label{cor:DP-sensitive} Consider $\frac{\partial \hat{\theta}^{\texttt{f}}_b(1)}{\partial \beta}$, the rate of change of group $b$'s thresholds at $\beta=1$. There exists a $\bar{\alpha}_b$ such that for all $\alpha_b\leq \bar{\alpha}_b$, we have  $|\frac{\partial \hat{\theta}^{\texttt{TPR}}_b(1)}{\partial \beta}| <  |\frac{\partial \hat{\theta}^{\texttt{DP}}_b(1)}{\partial \beta}|$; that is, \texttt{DP} is more sensitive to qualification assessment biases than \texttt{TPR}.
\end{corollary}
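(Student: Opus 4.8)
The plan is to compare the two sensitivity expressions from Proposition~\ref{prop:f-bias-gamma-sensitivity} directly, using the fact that both are negative (thresholds increase as $\beta$ decreases) so that comparing absolute values amounts to comparing the reciprocals of the two denominators. Writing
\begin{align*}
    D_{\texttt{DP}} &:= \tfrac{n_a}{n_b} \tfrac{f_b(\theta^\texttt{DP}_b)}{f_a(\theta^\texttt{DP}_a)}\tfrac{\gamma'_a(\theta_a^{\texttt{DP}})}{\gamma_b(\theta_b^{\texttt{DP}})} +  \tfrac{\gamma'_b(\theta_b^{\texttt{DP}})}{\gamma_b(\theta_b^{\texttt{DP}})}~,\\
    D_{\texttt{TPR}} &:= \tfrac{1}{1+u_+/u_-}\Bigl(\tfrac{n_a}{n_b}\tfrac{\alpha_a}{\alpha_b}\tfrac{f^1_b(\theta_b^{\texttt{TPR}})}{f^1_a(\theta_a^{\texttt{TPR}})}\tfrac{\gamma'_a(\theta_a^{\texttt{TPR}})}{(\gamma_a(\theta_a^{\texttt{TPR}}))^2} + \tfrac{\gamma'_b(\theta_b^{\texttt{TPR}})}{(\gamma_b(\theta_b^{\texttt{TPR}}))^2}\Bigr)~,
\end{align*}
the claim $|\partial_\beta \hat\theta^{\texttt{TPR}}_b(1)| < |\partial_\beta \hat\theta^{\texttt{DP}}_b(1)|$ is equivalent to $D_{\texttt{TPR}} > D_{\texttt{DP}}$. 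So first I would reduce the corollary to establishing this single inequality between denominators, being careful to note both are strictly positive (each term is positive since $\gamma_g$ is strictly increasing by Assumption~\ref{ass:MLR}, so $\gamma'_g > 0$, and the densities and population fractions are positive).

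Next I would isolate the $\alpha_b$-dependence. The key observation is that as $\alpha_b \to 0$, the two fair thresholds $\theta^\texttt{DP}_b, \theta^\texttt{TPR}_b$ and their advantaged-group counterparts converge to well-defined limits determined by the other (fixed) primitives, so the density and derivative factors evaluated at these thresholds stay bounded and bounded away from zero. The crucial structural difference is that $D_{\texttt{TPR}}$ contains the explicit factor $\alpha_a/\alpha_b$ in its first summand and the second summand $\gamma'_b/\gamma_b^2$, both of which blow up as $\alpha_b\to 0$ (since $\gamma_b(\theta_b)\propto\alpha_b$ to leading order via \eqref{eq:gamma-f-alpha}), whereas $D_{\texttt{DP}}$ only contains a single power of $1/\gamma_b(\theta_b)$. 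Concretely, I would use \eqref{eq:gamma-f-alpha} to show $\gamma_b(\theta_b)=\Theta(\alpha_b)$ as $\alpha_b\to 0$, so that the $\gamma'_b/\gamma_b^2$ term in $D_{\texttt{TPR}}$ scales like $1/\alpha_b^2$ (or at least grows strictly faster than the $1/\gamma_b = \Theta(1/\alpha_b)$ term in $D_{\texttt{DP}}$), forcing $D_{\texttt{TPR}} \to \infty$ faster than $D_{\texttt{DP}}$.

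From this asymptotic comparison I would conclude that there exists a threshold $\bar\alpha_b>0$ below which $D_{\texttt{TPR}} > D_{\texttt{DP}}$, giving the claimed inequality on the absolute sensitivities; the existence of $\bar\alpha_b$ follows from continuity of both denominators in $\alpha_b$ together with the divergence rate comparison. I would make the argument rigorous by either (a) dividing both denominators by $1/\gamma_b(\theta_b^{\cdot})$ and showing the normalized $D_{\texttt{TPR}}$ still diverges while the normalized $D_{\texttt{DP}}$ stays bounded, or (b) extracting the dominant $\Theta(1/\alpha_b^2)$ term in $D_{\texttt{TPR}}$ explicitly and bounding $D_{\texttt{DP}}=O(1/\alpha_b)$.

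The main obstacle I anticipate is handling the implicit $\alpha_b$-dependence of the optimal thresholds $\theta^\texttt{DP}_b, \theta^\texttt{TPR}_b$ (and their $a$-group partners, which are coupled through the fairness constraint): the factors $f_b, f^1_b, \gamma'_a, \gamma'_b$ are all evaluated at these shifting thresholds, so I must verify they remain bounded and bounded away from zero as $\alpha_b\to 0$ rather than conspiring to cancel the divergence. I would address this by invoking Lemma~\ref{lemma:fair-opt} and the threshold characterizations in Tables~\ref{t:fair-thds} and~\ref{t:fair-thds-2} to argue the thresholds converge to finite limits (using Assumption~\ref{ass:MLR} to guarantee well-defined limiting feature values), so that only the explicit $\alpha_b$ factors — which unambiguously favor $D_{\texttt{TPR}}$ blowing up faster — drive the comparison in the small-$\alpha_b$ regime.
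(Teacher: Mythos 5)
There is a genuine gap, and it sits exactly at the step your whole argument leans on: the claimed rate separation between the two denominators. You treat $\gamma'_b$ as an order-one quantity as $\alpha_b \to 0$, concluding that the $\gamma'_b/\gamma_b^2$ term in your $D_{\texttt{TPR}}$ scales like $1/\alpha_b^2$ while $D_{\texttt{DP}}$ is only $\Theta(1/\alpha_b)$. But $\gamma'_b$ itself vanishes linearly in $\alpha_b$: from \eqref{eq:gamma-f-alpha}, writing $l_b(x)=f^1_b(x)/f^0_b(x)$, one has the exact identities
\begin{align*}
\gamma_b(x) = \frac{\alpha_b l_b(x)}{\alpha_b l_b(x)+1-\alpha_b}, \quad \gamma'_b(x) = \frac{\alpha_b(1-\alpha_b)\,l'_b(x)}{\bigl(\alpha_b l_b(x)+1-\alpha_b\bigr)^2}, \quad \frac{\gamma'_b(x)}{\gamma_b(x)^2} = \frac{(1-\alpha_b)\,l'_b(x)}{\alpha_b\, l_b(x)^2}~,
\end{align*}
so the second summand of $D_{\texttt{TPR}}$ is $\Theta(1/\alpha_b)$, \emph{not} $\Theta(1/\alpha_b^2)$ --- the same order as the explicit $\alpha_a/\alpha_b$ term in $D_{\texttt{TPR}}$ and as the $\gamma'_a/\gamma_b$ term in $D_{\texttt{DP}}$ (while $\gamma'_b/\gamma_b=\Theta(1)$, so the second \texttt{DP} summand is indeed lower order, as you say). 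Both denominators therefore diverge at the same rate $\Theta(1/\alpha_b)$, both sensitivities vanish linearly in $\alpha_b$, and both of your proposed rigorizations fail: in (a), after multiplying through by $\gamma_b=\Theta(\alpha_b)$, both normalized denominators converge to finite positive limits (the \texttt{TPR} one does not diverge); in (b) there is no dominant $\Theta(1/\alpha_b^2)$ term to extract.

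Consequently the comparison must be decided at the level of the leading $1/\alpha_b$ coefficients, evaluated at the (distinct) limiting \texttt{DP} and \texttt{TPR} thresholds, and this is where the real work lies --- your plan does not address it. Note in particular that the extra numerator factor $1+u_+/u_-$ in $\partial_\beta\hat\theta^{\texttt{TPR}}_b$ works \emph{against} the claim (it inflates the \texttt{TPR} sensitivity), so one must show that the \texttt{TPR} denominator's coefficient beats \texttt{DP}'s by strictly more than this factor: a genuine comparison of threshold-dependent constants across two different constrained optima, not a soft boundedness statement. Your fallback --- that the thresholds converge to finite limits so all evaluated factors are $\Theta(1)$ --- is also not guaranteed by Assumption~\ref{ass:MLR} alone: for \texttt{DP}, if $n_a<u_-/(u_++u_-)$, the optimality condition $n_a\gamma_a(\theta_a)+n_b\gamma_b(\theta_b)=u_-/(u_++u_-)$ from Table~\ref{t:fair-thds-2} forces $\gamma_b(\theta^{\texttt{DP}}_b)$ to stay bounded away from zero, hence $\theta^{\texttt{DP}}_b\to\infty$ as $\alpha_b\to 0$; and even where convergence does hold, it yields only same-order divergence, not the strict inequality $D_{\texttt{TPR}}>D_{\texttt{DP}}$ for small $\alpha_b$. (For calibration: the paper states this corollary immediately after Proposition~\ref{prop:f-bias-gamma-sensitivity} without an explicit appendix proof, so a small-$\alpha_b$ analysis of those two formulas is indeed the intended route --- but it has to be carried out at the coefficient level just described, not via the rate separation you propose.)
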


This higher sensitivity of \texttt{DP} compared to \texttt{TPR} leads to a higher drop in the utility of the firm when \texttt{DP}-constrained classifiers are used on such biased datasets; we further illustrate this in our experiments in Section~\ref{sec:experiments}.

{Finally, our analysis can be similarly applied to study the impacts of labeling biases on the \emph{advantaged group}; we detail this analysis in Appendix~\ref{app:gamma-biases-advantaged}. In particular, we consider biases of the form of $\hat{\gamma}_a(x)=(1-\beta){\gamma_a(x)} + \beta$, with $\beta\in[0,1)$, interpreted as prior errors by a decision maker who mistakenly labeled unqualified individuals from the advantaged group as qualified with probability $\beta$.} In Proposition~\ref{prop:f-bias-gamma-a}, we show that this time, \texttt{DP} and \texttt{FPR} are robust against these biases, while \texttt{TPR} is in general violated. Notably, \texttt{DP} remains robust against \emph{both} types of qualification assessment bias. {Our experiments in Section~\ref{sec:experiments} further support this observation by showing that \texttt{DP}-constraint thresholds are more robust to label flipping biases induced in different real-world datasets.}

\subsection{Impacts of Feature Measurement Errors}\label{sec:f-bias} 
We now analyze the sensitivity of fairness-constrained decisions to an alternative form of statistical biases: errors in feature measurements of the disadvantaged group. 

\begin{proposition}
\label{prop:f-bias-feature}
Assume the features of group $b$ are incorrectly measured, so that $\frac{\hat{f}^1_b({x})}{\hat{f}^0_b(x)} = \beta(x)  \frac{f_b^1(x)}{f_b^0(x)}, \forall x$, where $\beta(x):\mathbb{R}\rightarrow (0,1)$ is a non-decreasing function. Let ${\theta}^{\texttt{f}}_g$ and $\hat{\theta}^{\texttt{f}}_g(\beta)$ denote the optimal decision thresholds satisfying fairness constraint $\texttt{f}\in\{\texttt{DP}, \texttt{TPR}, \texttt{FPR}\}$, obtained from unbiased data and data with biases on group $b$ with error function $\beta$, respectively. Then, 

~~ (i) If $\hat{f}^1_b(x)=f^1_b(x), \forall x$ (resp. $\hat{f}^0_b(x)=f^0_b(x), \forall x$), \texttt{TPR} (resp. \texttt{FPR}) will be met at the new thresholds. 

~~ (ii) If $\hat{F}^1_b(x)<F^1_b(x), \forall x\geq {\theta}^{\texttt{TPR}}_b$, then $\hat{\theta}^{\texttt{TPR}}_g(\beta)> {\theta}^{\texttt{TPR}}_g$ for both groups and any function $\beta(x)$. Further, the \texttt{TPR} constraint is violated at the new thresholds. 

~~ (iii) If $\hat{F}^0_b(x)<F^0_b(x), \forall x\geq {\theta}^{\texttt{FPR}}_b$, then $\hat{\theta}^{\texttt{FPR}}_g(\beta)> {\theta}^{\texttt{FPR}}_g$ for both groups and any function $\beta(x)$. Further, the \texttt{FPR} constraint is violated at the new thresholds. 

~~ (iv) If $\hat{F}_b(x)<F_b(x), \forall x\geq {\theta}^{\texttt{DP}}_b$, then $\hat{\theta}^{\texttt{DP}}_g(\beta)> {\theta}^{\texttt{DP}}_g$ for both groups and any function $\beta(x)$. Further, the \texttt{DP} constraint is violated at the new thresholds. 

~~ (v) There exist problem instances in which $\hat{\theta}^{\texttt{f}}_b(\beta)<{\theta}^{\texttt{f}}_b$ for any of the three constraints. 
\end{proposition}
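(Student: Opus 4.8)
The plan is to reduce each of the three constrained problems to a single monotone equation in one variable and then do comparative statics in the bias. Since only group $b$'s statistics are perturbed, for every constraint both the parity equation $\mathcal{C}^\texttt{f}_a(\theta_a)=\mathcal{C}^\texttt{f}_b(\theta_b)$ and the stationarity condition of Lemma~\ref{lemma:fair-opt} change only through group $b$. Two primitive effects drive everything. First, the ratio hypothesis $\hat f^1_b/\hat f^0_b=\beta(x)\,f^1_b/f^0_b$ with $\beta(x)\in(0,1)$ inflates the inverse likelihood ratio pointwise, $\hat f^0_b(x)/\hat f^1_b(x)=\beta(x)^{-1}f^0_b(x)/f^1_b(x)>f^0_b(x)/f^1_b(x)$, equivalently $\hat\gamma_b(x)<\gamma_b(x)$ for all $x$ by \eqref{eq:gamma-f-alpha}; this makes the biased group look uniformly less qualified at a given score and shifts the stationarity integrand of Lemma~\ref{lemma:fair-opt} in a fixed direction. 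Second, each part supplies a stochastic-dominance hypothesis on exactly the statistic that defines the corresponding constraint ($\hat F^1_b<F^1_b$ for \texttt{TPR}, $\hat F^0_b<F^0_b$ for \texttt{FPR}, $\hat F_b<F_b$ for \texttt{DP}), which controls how the parity curve moves.

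For the threshold directions I would work from the closed forms in Tables~\ref{t:fair-thds}--\ref{t:fair-thds-2} (equivalently, from Lemma~\ref{lemma:fair-opt} together with the parity equation). Parametrizing the feasible set by the common value of the constraint---i.e.\ the equalized rate $r$, with $\theta_a=\theta_a(r)$ and $\theta_b=\theta_b(r)$ obtained by inverting the (biased or unbiased) CDFs---turns each problem into maximizing a one-dimensional payoff whose derivative is the Lemma~\ref{lemma:fair-opt} expression and is strictly decreasing in $r$ by Assumption~\ref{ass:MLR}, giving a unique maximizer and a single-crossing structure. I would then evaluate the \emph{biased} stationarity condition at the \emph{unbiased} optimum: the likelihood-ratio inflation ($\hat\gamma_b<\gamma_b$) and the dominance-induced shift of the inverse CDF both push this derivative in the same direction, so the biased optimum moves so as to raise group $b$'s threshold, yielding $\hat\theta^\texttt{f}_b(\beta)>\theta^\texttt{f}_b$. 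Establishing the companion inequality $\hat\theta^\texttt{f}_a(\beta)>\theta^\texttt{f}_a$---equivalently, that the optimal equalized rate strictly \emph{decreases}, which by the dominance hypothesis then also re-implies the group-$b$ inequality through the parity equation---is more delicate and is discussed below.

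The constraint-violation claims are the clean part, and I would prove them directly. At the biased optimum the biased parity equation holds, so group $a$'s true rate equals group $b$'s \emph{biased} rate; evaluating group $b$'s \emph{true} rate at the same $\hat\theta^\texttt{f}_b$ and invoking the dominance hypothesis at $\hat\theta^\texttt{f}_b\ (\geq\theta^\texttt{f}_b)$ produces a strict, signed gap---for instance the true $\mathrm{TPR}$ of group $a$ strictly exceeds that of group $b$---so \texttt{TPR}, \texttt{FPR}, and \texttt{DP} are each violated on unbiased data, with the disadvantaged group underserved. For part (iv) I would give a counterexample: a pair of Gaussians $f^1_b,f^0_b$ with a common variance (so Assumption~\ref{ass:MLR} holds) together with a pure leftward measurement shift yields a \emph{constant} $\beta<1$ but reverses the dominance to $\hat F_b>F_b$ on the relevant range; rerunning the monotone single-crossing argument with the reversed inequality then gives $\hat\theta^\texttt{f}_b(\beta)<\theta^\texttt{f}_b$ for each constraint.

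The main obstacle is the second half of the direction claim: showing that \emph{both} thresholds rise, equivalently that the optimal equalized rate strictly falls. For group $b$ the two primitive effects reinforce one another, but for group $a$ they conflict---the dominance hypothesis shifts the parity curve so as to \emph{lower} $\theta_a$ at a fixed $\theta_b$, whereas the utility (likelihood-ratio) effect pushes it up---so the net movement is not determined by the pointwise inequalities alone. The difficulty is concrete: the inflated factor $\beta(x)^{-1}$ is evaluated at the \emph{shifted} threshold, where $\gamma_b$ (resp.\ the inverse likelihood ratio) has itself moved by the monotonicity from Assumption~\ref{ass:MLR}, so one must sign a quantity in which a pointwise increase competes with a relocation in the opposite direction. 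Resolving this cleanly requires leaning on the strict monotonicity/concavity of the reduced one-dimensional objective to control the net sign of the equalized-rate shift, and it is here that the hypotheses of parts (i)--(iii) must be combined most carefully.
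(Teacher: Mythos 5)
Your skeleton for parts (i)--(iii) is the paper's: rewrite the stationarity condition of Lemma~\ref{lemma:fair-opt} in terms of the likelihood ratio $l_g(x)=f^1_g(x)/f^0_g(x)$ with $\hat{l}_b=\beta(\cdot)\,l_b$, note it is monotone in both thresholds (Assumption~\ref{ass:MLR} plus $\beta$ non-decreasing) and shifted in a fixed direction when $\beta<1$, conclude at least one threshold rises, and prove the violation claims exactly as the paper does, via $\mathcal{C}^{\texttt{f}}_a(\hat{\theta}^{\texttt{f}}_a)=\hat{\mathcal{C}}^{\texttt{f}}_b(\hat{\theta}^{\texttt{f}}_b)\neq \mathcal{C}^{\texttt{f}}_b(\hat{\theta}^{\texttt{f}}_b)$ using the dominance hypothesis at $\hat{\theta}^{\texttt{f}}_b\geq\theta^{\texttt{f}}_b$. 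But you explicitly leave the central claim --- that \emph{both} thresholds rise --- unresolved, gesturing at concavity of a reduced objective. This is precisely where the paper does its remaining work, and it does so more cheaply than you anticipate: the dominance hypothesis never enters the stationarity comparison at all (any $\beta<1$ already forces at least one threshold up); it is used only through the parity equation. Concretely, for \texttt{TPR} the paper assumes $\hat{\theta}^{\texttt{TPR}}_a>\theta^{\texttt{TPR}}_a$, deduces from the two parity equations that $\hat{F}^1_b(\hat{\theta}^{\texttt{TPR}}_b)>F^1_b(\theta^{\texttt{TPR}}_b)$, and then uses $\hat{F}^1_b(\theta^{\texttt{TPR}}_b)<F^1_b(\theta^{\texttt{TPR}}_b)$ together with monotonicity of $\hat{F}^1_b$ to force $\hat{\theta}^{\texttt{TPR}}_b>\theta^{\texttt{TPR}}_b$ (and analogously for \texttt{FPR} and \texttt{DP}). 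Your worry that the group-$a$ direction is not pinned down by the pointwise inequalities alone is a fair observation about the proposition, but your proposal neither executes the paper's coupling argument nor the alternative you sketch, so as written it does not deliver the stated conclusion.

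Your part (iv) construction is wrong, and the error is checkable. With common-variance Gaussians and a pure leftward shift $\hat{x}=x-\epsilon$ applied to \emph{both} class-conditionals, the estimated densities are $\hat{f}^y_b(t)=f^y_b(t+\epsilon)$, so by Assumption~\ref{ass:MLR} the likelihood ratio is \emph{inflated}: $\hat{l}_b(t)=l_b(t+\epsilon)=l_b(t)\exp\bigl((\mu^1_b-\mu^0_b)\epsilon/\sigma^2\bigr)$, i.e.\ a constant $\beta>1$, outside the admissible class $\beta:\mathbb{R}\rightarrow(0,1)$. Conversely, the only equal-shift Gaussian instance with constant $\beta\in(0,1)$ is a \emph{rightward} shift, which yields $\hat{F}_b(x)<F_b(x)$ --- the dominance hypothesis of parts (i)--(iii), hence rising thresholds, the opposite of what (iv) asks for. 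So in your family the ratio condition and the dominance direction are locked together, and no instance with ``$\beta<1$ plus reversed dominance'' exists there; producing one requires perturbing the class-conditionals asymmetrically (which then threatens the monotonicity of $\beta(x)$). The paper sidesteps this entirely: its proof of (iv) simply points to constructed numerical instances in Section~\ref{sec:experiment-synthetic}, where only $f^1_b$ is biased (dropping $\hat{\mu}^1_b$) and the group-$b$ thresholds are observed to decrease at low bias levels.
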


We provide a visualization of this proposition in Figure~\ref{fig:synthetic-feature-util-contour} in Appendix~\ref{app:experiments}. This proposition shows that unless the feature measurement errors only affect one label (as in part (i)), the considered fairness constraints will in general not remain robust against feature measurement errors. The conditions in parts (ii)-(iv) require that a CDF in the biased distribution first-order stochastically dominates that of the unbiased distribution. This holds if, e.g., the corresponding features (qualified agents, unqualified agents, or all agents, respectively) are underestimated, $\hat{x}=x-\epsilon$ for some $\epsilon\geq 0$. We also note that in contrast to Proposition~\ref{prop:f-bias-gamma}, the decision threshold can in fact \emph{decrease} when biases are introduced; we illustrate this in our experiments in Section~\ref{sec:experiments}. 

Similar to Proposition~\ref{prop:f-bias-gamma-sensitivity}, we can also characterize the sensitivity of each constraint to bias rates, and investigate the impacts of other problem parameters on these sensitivities. We present this in detail in Proposition~\ref{prop:f-bias-feature-sensitivity} in Appendix~\ref{app:prop-f-bias-feature-sensitivity}.
\section{Numerical Experiments}\label{sec:experiments}
We now provide numerical support for our analytical results, and additional insights into the robustness of different fairness measures, through experiments on both real-world and synthetic datasets. Details about the datasets, experimental setup, and additional experiments, are given in Appendix~\ref{app:experiments}.

\begin{figure*}[h!]
\centering
\includegraphics[width=0.8\textwidth]{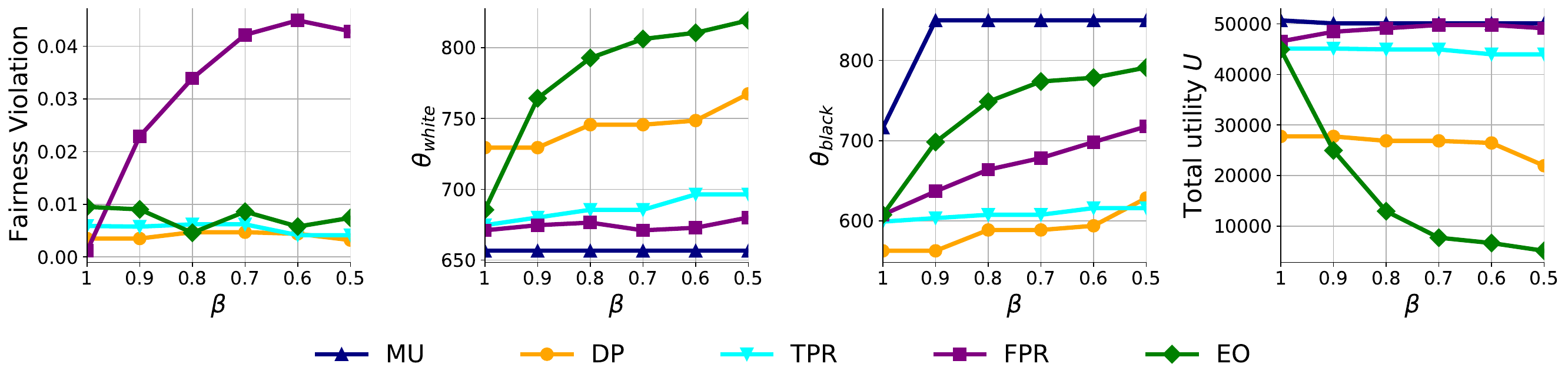}
\caption{Experiments on qualification assessment (labeling) errors on the disadvantaged group in the FICO credit score dataset.}
\label{fig:FICO-experiments}
\end{figure*}

\subsection{FICO Credit Score Dataset}
We begin with numerical experiments on the FICO dataset preprocessed by \cite{hardt2016equality}. The FICO credit scores ranging from 300 to 850 correspond to the one-dimensional feature $x$ in our model, and race is the sensitive feature $g$ (we focus on the white and black groups). The data provides repay probabilities for each score and group, which corresponds to our qualification profile $\gamma_g(x)$. 
We take this data to be the unbiased ground truth. (We discuss some implications of this assumption in Appendix~\ref{app:discussion}.) To induce labeling biases in the data, we drop the repay probabilities of the black group to model the underestimation of their qualification profiles, and generate training data on this group based on this biased profile. We use $\beta$ to parametrize the bias (with $\hat{\gamma}_b(x)=\beta\gamma_b(x)$).  Decision rules will be found on the biased data and applied to the unbiased data.

\textbf{Violation of fairness constraints.} The left-most panel in Figure~\ref{fig:FICO-experiments} illustrates the fairness violation under each fairness constraint (measured as $|\mathcal{C}_a^{\texttt{f}}(\hat{\theta}^\texttt{f}_a) - \mathcal{C}_b^{\texttt{f}}(\hat{\theta}^\texttt{f}_b)|$) as qualification assessment biases on the disadvantaged group increase. These observations are consistent with Proposition~\ref{prop:f-bias-gamma}. %
In particular, \texttt{DP} and \texttt{TPR} are both robust to these biases in terms of achieving their notions of fairness, while \texttt{FPR} has an increasing trend in fairness violation. This means that the set of possible decision rules of \texttt{FPR} changes when bias is imposed. Note that though a violation of 0.04 may not be severe, it is more than 300\% higher than a violation below 0.01 that \texttt{FPR} can achieve when the data is unbiased. We will also observe more significant violations of \texttt{FPR} on other datasets. 
Finally, from Figure~\ref{fig:FICO-experiments}, it may seem that \texttt{EO} also remains relatively robust to bias. This observation is not in general true (as shown in our experiments on other datasets in Section~\ref{sec:adult-german}); however, it can be explained for the FICO dataset by noting how \texttt{EO}'s feasible pairs of decision rules change due to data bias (similar to Figure~\ref{fig:synthetic-label-util-contour-prop-1}) and how the problem setup can influence the results. We provide additional discussion in Appendix~\ref{app:extra-fico}. 

\textbf{Changes in decision thresholds and firm's utility.} In Figure~\ref{fig:FICO-experiments}, we also display the threshold change of each group. In line with Proposition~\ref{prop:f-bias-gamma}, thresholds for both groups increase as the bias level increases. Notably, the maximum utility (fairness unconstrained) decision rule would have led the firm to fully exclude the black group even at relatively low bias rates; all fairness-constrained thresholds prevent this from happening. In addition, the threshold's increase under \texttt{TPR} is less drastic than \texttt{DP} and \texttt{EO} (and consistent with Corollary~\ref{cor:DP-sensitive}). 

Finally, we note the changes in the firm's utility. As the decision thresholds increase due to biases, the net utility from the white/black groups ($U_{white}, U_{black}$) increases/decreases. 
Overall, due to the fact that the white group is the majority in this data, an increase in the threshold $\theta_{white}$ 
will lead to a greater loss in total utility of the firm (as is the case in \texttt{DP/TPR/EO} seen in Figure~\ref{fig:FICO-experiments}). That said, the total utility may increase under \texttt{FPR}, as pointed out in Proposition~\ref{prop:f-bias-gamma} and observed in Figure~\ref{fig:FICO-experiments}, since there is a gain from $U_{black}$ is larger than the loss from $U_{white}$. This increase may even make the \texttt{FPR}-constrained classifier attain higher utility than \texttt{MU} when training data is highly biased.

\subsection{Adult Dataset and German Credit Dataset}\label{sec:adult-german}
We next conduct experiments on two additional benchmark datasets: the Adult dataset and the German credit dataset \cite{Dua:2019}. 
In both these datasets, instead of maximizing utility, the objective is classification accuracy. We first train a logistic regression classifier on the training set using scikit-learn \cite{pedregosa2011scikit} with default parameter settings as the base model. \yl{The logistic regression output in the range 0 to 1 can be interpreted as the score of being qualified as in the FICO dataset.} Then, we obtain the fair classifier by applying the exponentiated gradient reduction \cite{agarwal2018reductions} using Fairlearn \cite{bird2020fairlearn}. \yl{Although the exponentiated gradient reduction produces a randomized classifier, it can be viewed as an abstract threshold: given a randomized classifier, the expected rates (e.g., selection rates, true positive rates) for both groups are deterministic. We thus find that our claims on how labeling biases impact \texttt{DP/TPR/FPR} still hold.}

We introduce qualification assessment biases by flipping the qualification states $y$ of the (1) qualified agents from the disadvantaged group (i.e., female in Adult and age below 30 in German), (2) unqualified agents from the advantaged group (i.e., male in Adult and age above 30 in German).\footnote{Results from concurrently flipping the labels of qualified agents from the disadvantaged group and unqualified agents from the advantaged group are shown in Appendix~\ref{app:extra-adult-german}.}

\begin{figure*}[t]
    \centering
    \begin{subfigure}[t]{0.38\textwidth}
        \centering
        \includegraphics[width=\textwidth]{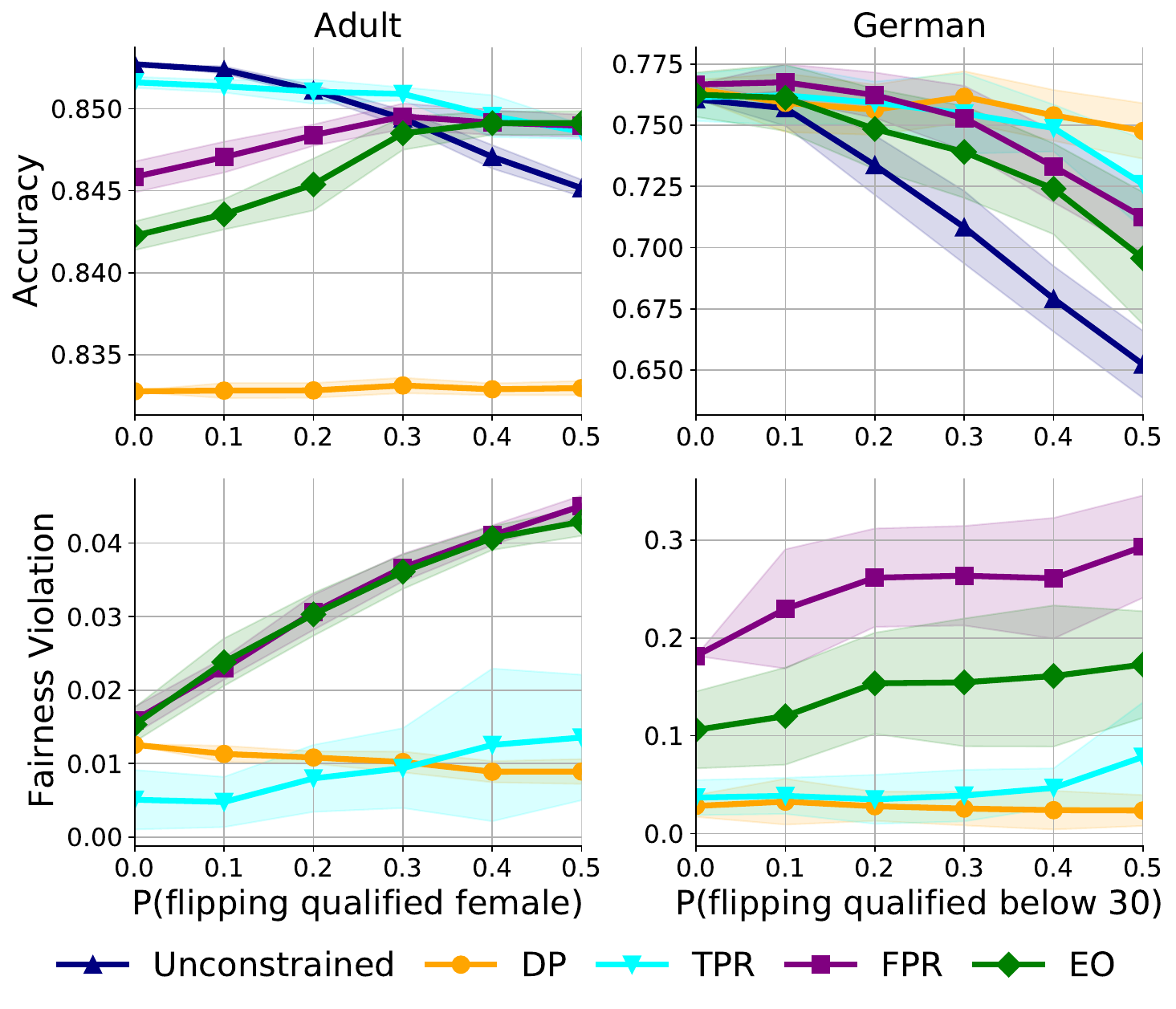}
        \caption{Flip the qualified disadvantaged group.}
    \end{subfigure}
    \hspace{0.4in}
    \begin{subfigure}[t]{0.38\textwidth}
        \centering
        \includegraphics[width=\textwidth]{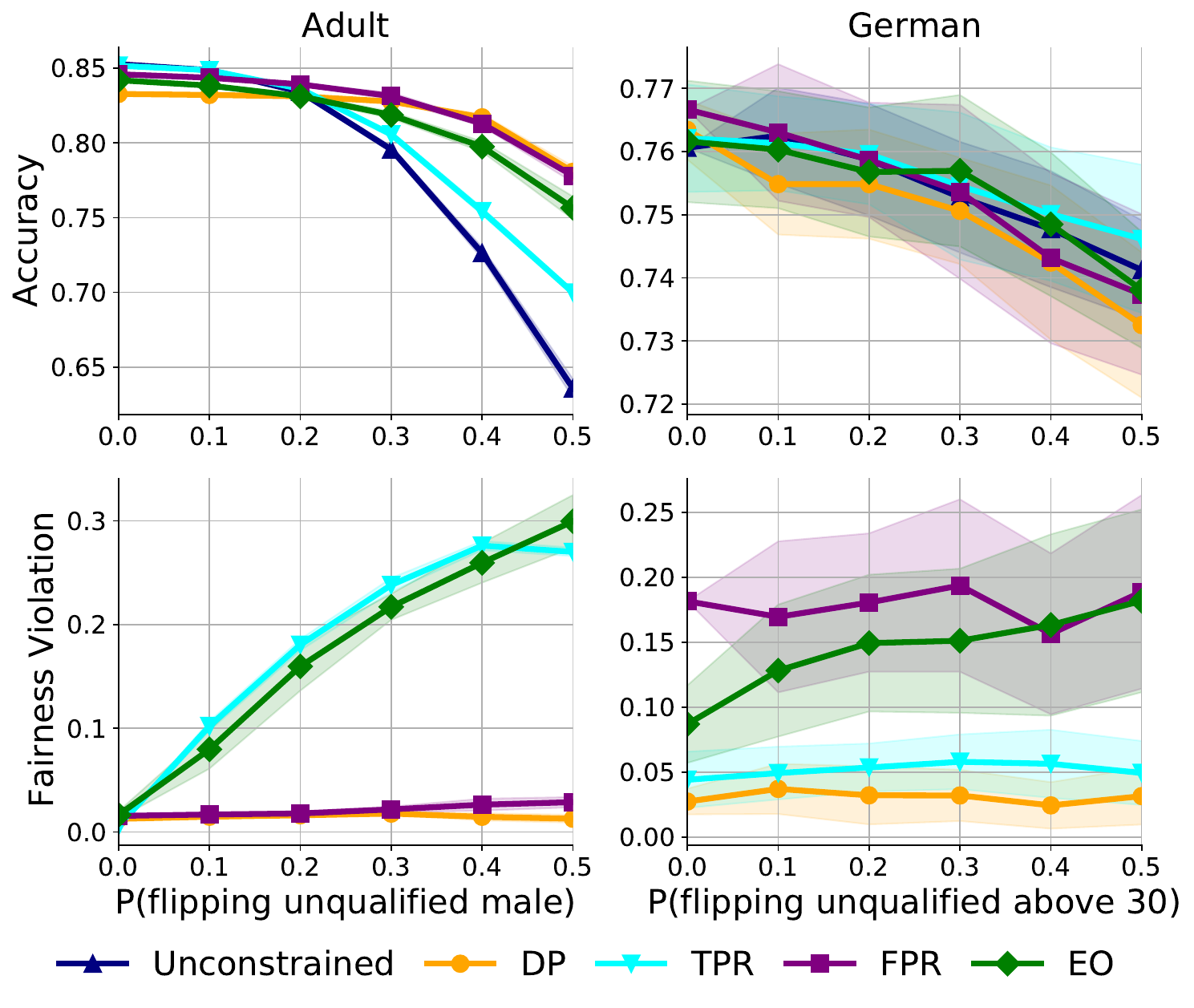}
        \caption{Flip the unqualified advantaged group.}
    \end{subfigure}
    \caption{Accuracy and fairness violation on Adult and German datasets. The results are averaged across 10 runs for Adult and 30 runs for German; shaded areas indicate plus/minus standard deviation.}
    \label{fig:adult-german-acc-fairness}
\end{figure*}

\begin{figure*}[h]
\centering
\includegraphics[width=0.75\textwidth]{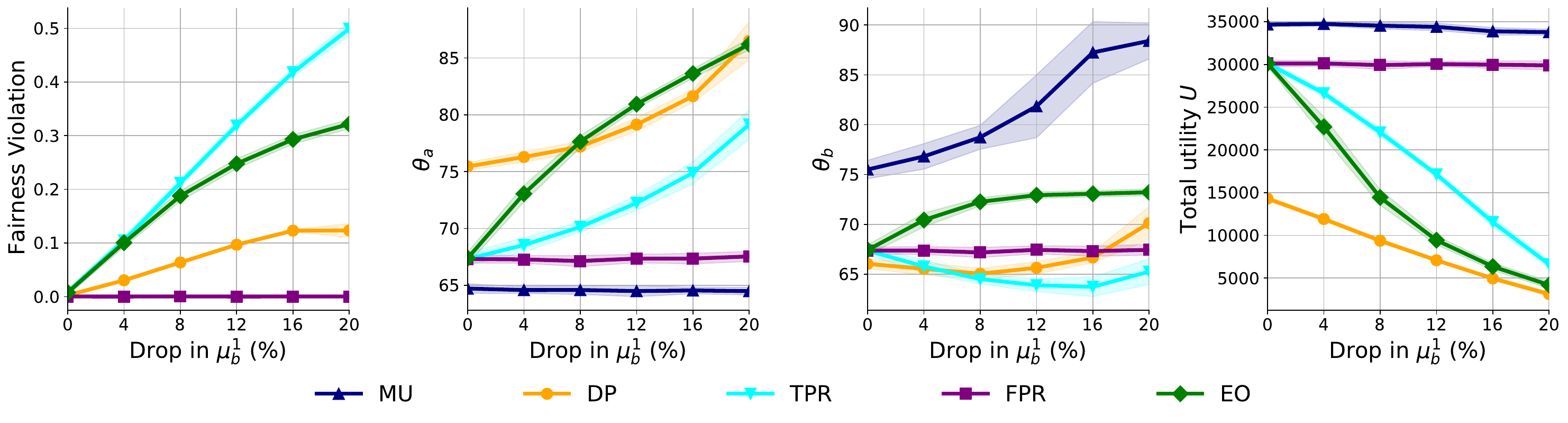}
\caption{Fairness violation, thresholds, and utility under different constraints from synthetic simulation on measurement errors. The results are from 20 runs.}
\label{fig:synthetic-feature_b1-fairness-thresh-util}
\end{figure*}

The results are presented in Figure~\ref{fig:adult-german-acc-fairness}. To quantify the trend in fairness violation, we fit a linear regression model to the fairness violation and present all model weights in Table~\ref{t:lr-weights-table} in Appendix~\ref{app:extra-adult-german}. In all three cases, the robustness of each constraint in terms of achieving fairness matches our findings in Propositions~\ref{prop:f-bias-gamma} and \ref{prop:f-bias-gamma-a}. One exception, however, is that \texttt{TPR} remains robust when we flip the labels of the unqualified advantaged group in the German dataset. This is primarily because, while flipping the unqualified individuals in the training set will in general make the classifier accept more of these individuals, the flip will have a minor effect on the \texttt{TPR} violation because 1) there is a limited number of unqualified individuals with age above 30 (15.2\% of the dataset) compared to qualified individuals with age above 30 (43.7\% of the dataset), and 2) there is little room for the true positive rate values on the test set to increase since the values for both groups start close to 1 (0.923 and 0.845) with a small difference (0.078) (see Figure~\ref{fig:german-advantaged-0-TPR-vals} in Appendix~\ref{app:extra-adult-german}).

Interestingly, we also observe that fairness-constrained accuracy can be \emph{higher} than the utility-maximizing choices when training data is biased, as seen in the top row accuracy plots in Figure~\ref{fig:adult-german-acc-fairness}. This can be interpreted as fairness constraints having a regularization effect: since the constraints prevent the classifier from overfitting the biased training set to some extent, the test accuracy with fairness constraints imposed can be higher than that of the unconstrained case.

\subsection{Impacts of Feature Measurement Errors}\label{sec:experiment-synthetic}
Lastly, we conduct experiments on a synthetic dataset inspired by the FICO credit score data (details in Appendix~\ref{app:synthetic-label-bias}). To bias the feature measurements, we drop the estimate $\hat{\mu}^1_b$ of the mean of the qualified agents from group $b$ relative to the true value ${\mu}^1_b$. As a result, $\hat{f}^1_b$ will be biased relative to its true value, while $\hat{f}^0_b$ will remain unchanged. As shown in Figure~\ref{fig:synthetic-feature_b1-fairness-thresh-util}, and consistent with Proposition~\ref{prop:f-bias-feature}, under these choices, $\texttt{FPR}$ will remain unaffected, while \texttt{DP/TPR} will no longer be satisfied.

Finally, Figure~\ref{fig:synthetic-feature_b1-fairness-thresh-util} also highlights the changes in the decision thresholds and firm's utility under this type of bias. As noted in Proposition~\ref{prop:f-bias-feature}, now the thresholds on the disadvantaged group can \emph{decrease} compared to the unbiased case at low bias rates. Notably, we also observe that the firm's overall utility is lower under \texttt{DP} (similar to the labeling bias case), but that \texttt{TPR} is more sensitive to bias levels than \texttt{DP} (unlike the labeling bias case). This points to the fact that the choice of a robust fairness constraint has to be made subject to the type of data bias that the decision maker foresees. 
\section{Conclusion}\label{sec:conclusion}

We investigated the robustness of different fairness criteria when an algorithm is trained on statistically biased data. We provided both analytical results and numerical experiments based on three real-world datasets (FICO, Adult, and German credit score). We find that different constraints exhibit different sensitivity to labeling biases and feature measurement errors. In particular, we identified fairness constraints that can remain robust against certain forms of statistical biases (e.g., Demographic Parity and Equality of Opportunity given labeling biases on the disadvantaged group), as well as instances in which the adoption of a fair algorithm can increase the firm's expected utility when training data is biased, providing additional motivation for adopting fair machine learning algorithms. Our findings present an additional guideline to go along with normative considerations, for choosing among existing fairness criteria when available datasets are biased. 
We provide additional discussion about other implications of our findings, limitations, and future directions, in Appendix~\ref{app:discussion}.

\section*{Acknowledgments}
The authors are grateful for support from the NSF program on Fairness in AI in collaboration with Amazon under Award No. IIS-2040800, and Cisco Research. Any opinions, findings, and conclusions or recommendations expressed in this material are those of the authors and do not necessarily reflect the views of the NSF, Amazon, or Cisco.


\bibliography{aaai23}

\begin{thebibliography}{30}
\providecommand{\natexlab}[1]{#1}

\bibitem[{Agarwal et~al.(2018)Agarwal, Beygelzimer, Dud{\'\i}k, Langford, and
  Wallach}]{agarwal2018reductions}
Agarwal, A.; Beygelzimer, A.; Dud{\'\i}k, M.; Langford, J.; and Wallach, H.
  2018.
\newblock A reductions approach to fair classification.
\newblock In \emph{International Conference on Machine Learning}, 60--69. PMLR.

\bibitem[{Angwin et~al.(2016)Angwin, Larson, Mattu, and
  Kirchner}]{machine-bias}
Angwin, J.; Larson, J.; Mattu, S.; and Kirchner, L. 2016.
\newblock Machine Bias.
\newblock
  \url{https://www.propublica.org/article/machine-bias-risk-assessments-in-criminal-sentencing}.
\newblock Accessed: 2022-03-01.

\bibitem[{Barocas, Hardt, and Narayanan(2017)}]{barocas2017fairness}
Barocas, S.; Hardt, M.; and Narayanan, A. 2017.
\newblock Fairness in machine learning.
\newblock \emph{Nips tutorial}, 1: 2.

\bibitem[{Bechavod et~al.(2019)Bechavod, Ligett, Roth, Waggoner, and
  Wu}]{bechavod2019equal}
Bechavod, Y.; Ligett, K.; Roth, A.; Waggoner, B.; and Wu, S.~Z. 2019.
\newblock Equal opportunity in online classification with partial feedback.
\newblock In \emph{Advances in Neural Information Processing Systems},
  8974--8984.

\bibitem[{Bird et~al.(2020)Bird, Dud{\'i}k, Edgar, Horn, Lutz, Milan, Sameki,
  Wallach, and Walker}]{bird2020fairlearn}
Bird, S.; Dud{\'i}k, M.; Edgar, R.; Horn, B.; Lutz, R.; Milan, V.; Sameki, M.;
  Wallach, H.; and Walker, K. 2020.
\newblock Fairlearn: A toolkit for assessing and improving fairness in {AI}.
\newblock Technical Report MSR-TR-2020-32, Microsoft.

\bibitem[{Blum and Stangl(2020)}]{blum2020recovering}
Blum, A.; and Stangl, K. 2020.
\newblock Recovering from Biased Data: Can Fairness Constraints Improve
  Accuracy?
\newblock In \emph{1st Symposium on Foundations of Responsible Computing (FORC
  2020)}. Schloss Dagstuhl-Leibniz-Zentrum f{\"u}r Informatik.

\bibitem[{Chouldechova(2017)}]{chouldechova2017fair}
Chouldechova, A. 2017.
\newblock Fair prediction with disparate impact: A study of bias in recidivism
  prediction instruments.
\newblock \emph{Big data}, 5(2): 153--163.

\bibitem[{Corbett-Davies et~al.(2017)Corbett-Davies, Pierson, Feller, Goel, and
  Huq}]{corbett2017algorithmic}
Corbett-Davies, S.; Pierson, E.; Feller, A.; Goel, S.; and Huq, A. 2017.
\newblock Algorithmic decision making and the cost of fairness.
\newblock In \emph{Proceedings of the 23rd acm sigkdd international conference
  on knowledge discovery and data mining}, 797--806.

\bibitem[{Dua and Graff(2017)}]{Dua:2019}
Dua, D.; and Graff, C. 2017.
\newblock {UCI} Machine Learning Repository.
\newblock \url{http://archive.ics.uci.edu/ml}.
\newblock Accessed: 2022-03-01.

\bibitem[{Ensign et~al.(2018)Ensign, Friedler, Neville, Scheidegger, and
  Venkatasubramanian}]{ensign2018runaway}
Ensign, D.; Friedler, S.~A.; Neville, S.; Scheidegger, C.; and
  Venkatasubramanian, S. 2018.
\newblock Runaway feedback loops in predictive policing.
\newblock In \emph{Conference on Fairness, Accountability and Transparency},
  160--171. PMLR.

\bibitem[{Fogliato, Chouldechova, and G’Sell(2020)}]{fogliato2020fairness}
Fogliato, R.; Chouldechova, A.; and G’Sell, M. 2020.
\newblock Fairness evaluation in presence of biased noisy labels.
\newblock In \emph{International Conference on Artificial Intelligence and
  Statistics}, 2325--2336. PMLR.

\bibitem[{Hardt, Price, and Srebro(2016)}]{hardt2016equality}
Hardt, M.; Price, E.; and Srebro, N. 2016.
\newblock Equality of opportunity in supervised learning.
\newblock \emph{Advances in neural information processing systems}, 29.

\bibitem[{Jiang and Nachum(2020)}]{jiang2020identifying}
Jiang, H.; and Nachum, O. 2020.
\newblock Identifying and correcting label bias in machine learning.
\newblock In \emph{International Conference on Artificial Intelligence and
  Statistics}, 702--712.

\bibitem[{Kallus and Zhou(2018)}]{kallus2018residual}
Kallus, N.; and Zhou, A. 2018.
\newblock Residual unfairness in fair machine learning from prejudiced data.
\newblock In \emph{International Conference on Machine Learning}, 2439--2448.
  PMLR.

\bibitem[{Kamiran and Calders(2012)}]{kamiran2012data}
Kamiran, F.; and Calders, T. 2012.
\newblock Data preprocessing techniques for classification without
  discrimination.
\newblock \emph{Knowledge and information systems}, 33(1): 1--33.

\bibitem[{Kilbertus et~al.(2020)Kilbertus, Rodriguez, Sch{\"o}lkopf, Muandet,
  and Valera}]{kilbertus2020fair}
Kilbertus, N.; Rodriguez, M.~G.; Sch{\"o}lkopf, B.; Muandet, K.; and Valera, I.
  2020.
\newblock Fair decisions despite imperfect predictions.
\newblock In \emph{International Conference on Artificial Intelligence and
  Statistics}, 277--287. PMLR.

\bibitem[{Kleinberg, Mullainathan, and Raghavan(2016)}]{kleinberg2016inherent}
Kleinberg, J.; Mullainathan, S.; and Raghavan, M. 2016.
\newblock Inherent trade-offs in the fair determination of risk scores.
\newblock \emph{arXiv preprint arXiv:1609.05807}.

\bibitem[{Lambrecht and Tucker(2019)}]{lambrecht2019algorithmic}
Lambrecht, A.; and Tucker, C. 2019.
\newblock Algorithmic bias? An empirical study of apparent gender-based
  discrimination in the display of STEM career ads.
\newblock \emph{Management science}, 65(7): 2966--2981.

\bibitem[{Liu et~al.(2018)Liu, Dean, Rolf, Simchowitz, and
  Hardt}]{liu2018delayed}
Liu, L.~T.; Dean, S.; Rolf, E.; Simchowitz, M.; and Hardt, M. 2018.
\newblock Delayed impact of fair machine learning.
\newblock In \emph{International Conference on Machine Learning}, 3150--3158.
  PMLR.

\bibitem[{Mehrabi et~al.(2021)Mehrabi, Morstatter, Saxena, Lerman, and
  Galstyan}]{mehrabi2021survey}
Mehrabi, N.; Morstatter, F.; Saxena, N.; Lerman, K.; and Galstyan, A. 2021.
\newblock A survey on bias and fairness in machine learning.
\newblock \emph{ACM Computing Surveys (CSUR)}, 54(6): 1--35.

\bibitem[{Neel and Roth(2018)}]{neel2018mitigating}
Neel, S.; and Roth, A. 2018.
\newblock Mitigating bias in adaptive data gathering via differential privacy.
\newblock In \emph{International Conference on Machine Learning}, 3720--3729.
  PMLR.

\bibitem[{Obermeyer et~al.(2019)Obermeyer, Powers, Vogeli, and
  Mullainathan}]{obermeyer2019dissecting}
Obermeyer, Z.; Powers, B.; Vogeli, C.; and Mullainathan, S. 2019.
\newblock Dissecting racial bias in an algorithm used to manage the health of
  populations.
\newblock \emph{Science}, 366(6464): 447--453.

\bibitem[{Pedregosa et~al.(2011)Pedregosa, Varoquaux, Gramfort, Michel,
  Thirion, Grisel, Blondel, Prettenhofer, Weiss, Dubourg
  et~al.}]{pedregosa2011scikit}
Pedregosa, F.; Varoquaux, G.; Gramfort, A.; Michel, V.; Thirion, B.; Grisel,
  O.; Blondel, M.; Prettenhofer, P.; Weiss, R.; Dubourg, V.; et~al. 2011.
\newblock Scikit-learn: Machine learning in Python.
\newblock \emph{the Journal of machine Learning research}, 12: 2825--2830.

\bibitem[{Rezaei et~al.(2021)Rezaei, Liu, Memarrast, and
  Ziebart}]{rezaei2021robust}
Rezaei, A.; Liu, A.; Memarrast, O.; and Ziebart, B.~D. 2021.
\newblock Robust fairness under covariate shift.
\newblock In \emph{Proceedings of the AAAI Conference on Artificial
  Intelligence}, 9419--9427.

\bibitem[{Wang, Liu, and Levy(2021)}]{wang2021fair}
Wang, J.; Liu, Y.; and Levy, C. 2021.
\newblock Fair classification with group-dependent label noise.
\newblock In \emph{Proceedings of the 2021 ACM conference on fairness,
  accountability, and transparency}, 526--536.

\bibitem[{Wei(2021)}]{wei2021decision}
Wei, D. 2021.
\newblock Decision-Making Under Selective Labels: Optimal Finite-Domain
  Policies and Beyond.
\newblock In \emph{International Conference on Machine Learning}, 11035--11046.
  PMLR.

\bibitem[{Wick, Tristan et~al.(2019)}]{wick2019unlocking}
Wick, M.; Tristan, J.-B.; et~al. 2019.
\newblock Unlocking fairness: a trade-off revisited.
\newblock \emph{Advances in neural information processing systems}, 32.

\bibitem[{Zafar et~al.(2019)Zafar, Valera, Gomez-Rodriguez, and
  Gummadi}]{zafar2019fairness}
Zafar, M.~B.; Valera, I.; Gomez-Rodriguez, M.; and Gummadi, K.~P. 2019.
\newblock Fairness constraints: A flexible approach for fair classification.
\newblock \emph{The Journal of Machine Learning Research}, 20(1): 2737--2778.

\bibitem[{Zhang et~al.(2019)Zhang, Khaliligarekani, Tekin
  et~al.}]{zhang2019group}
Zhang, X.; Khaliligarekani, M.; Tekin, C.; et~al. 2019.
\newblock Group retention when using machine learning in sequential decision
  making: the interplay between user dynamics and fairness.
\newblock \emph{Advances in Neural Information Processing Systems}, 32.

\bibitem[{Zhang et~al.(2020)Zhang, Tu, Liu, Liu, Kjellstrom, Zhang, and
  Zhang}]{zhang2020fair}
Zhang, X.; Tu, R.; Liu, Y.; Liu, M.; Kjellstrom, H.; Zhang, K.; and Zhang, C.
  2020.
\newblock How do fair decisions fare in long-term qualification?
\newblock \emph{Advances in Neural Information Processing Systems}, 33:
  18457--18469.

\end{thebibliography}

\clearpage
\onecolumn
\appendix
\section*{Appendix}
\section{Discussion, Limitations, and Future Work}\label{app:discussion}

\paragraph{Additional motivation for adopting fair algorithms when data is biased:} We begin by noting that despite their differing robustness, we find that any fairness-constrained algorithm can couple the decisions between two demographic groups in a way that benefits the disadvantaged group suffering from data biases. This was highlighted in our numerical experiments on the FICO dataset, where a utility-maximizing firm would have fully excluded the black group even at relatively low bias rates, while all fairness-constrained decisions prevented this from happening. In addition, we showed that there exists settings in which a fairness-constrained classifier can outperform utility-maximizing classifiers from a firm's perspective when data is biased. Together, our results provide additional motivation to adopt fair machine learning algorithms: they can simultaneously benefit disadvantaged demographic groups and increase (utility-maximizing) firms' expected profit.  

\paragraph{Implications for fairness criteria selection/design:} Our work in this paper takes existing fairness constraints and provides a framework for assessing their robustness to different forms of statistical data bias. In practice, normative or legal constraints, rather than robustness against data bias, may be the main driving forces behind the selection of a given fairness metric. In that light, our work has the following implications. First, if a fairness constraint is already identified, then our methodology could provide insights into its robustness, and guide data debiasing efforts. For instance, if a decision maker selects the \texttt{TPR/Equality of Opportunity} fairness constraint as their criteria, our work shows that the constraint can continue providing its desired notion of fairness even if there is selective labeling on the disadvantaged user population. It also implies that to prevent the firm from losing fairness guarantees and/or utility due to selection of the incorrect set of \texttt{TPR}-satisfying thresholds when feature measurement errors are probable, the decision maker can focus on ensuring that estimates $\hat{f}^1_g(x)$, the feature distribution of qualified individuals in each group, are statistically unbiased or debiased through data collection efforts, as this is the statistic used by the \texttt{TPR} constraint in assessing and imposing its notion of fairness. 

Alternatively, if the decision maker is designing a new criterion to satisfy a desired (normative) notion of fairness, the new criterion will rely on certain statistics about the user population (e.g., feature-label distributions, or representation rates). Our proposed framework shows that the satisfiability of this constraint rests largely on the accuracy of said statistics, and can therefore guide the need for re-assessing the constraint design, or simultaneously guide the decision maker's data collection efforts. 

\paragraph{On one-dimensional features and extension to multi-dimensional features:} Our analytical work in this paper is based on one-dimensional feature data and threshold classifiers. Our motivation for this choice is that it is possible to reduce multi-dimensional features to one-dimensional ``scores'',  and focus on threshold classifier accordingly. This might not necessarily be a restrictive choice: its optimality has been established in \citet{corbett2017algorithmic} [Thm 3.2] as long as a multi-dimensional feature $X$ can be mapped to a properly defined scalar. The recent advances in deep learning also align with this possibility: one can take the last layer output from a deep neural network and use it as the single dimensional representation. In addition, our simulation results show that the same insights obtained from our analytical studies can be observed for multi-dimensional data, as well as when using randomized classifiers (see also the discussion in the first paragraph of Section 4.2).

\paragraph{Limitations and future work} We next discuss some limitations of our findings, and potential directions of future inquiries. 
First, our analysis has focused on a subset of demographic (group) fairness criteria; whether other fairness criteria (including individual fairness criteria) will exhibit different sensitivity to statistical data biases remains to be studied. We believe our proposed framework and analysis approach (e.g., convex optimization techniques, the implicit function theorem) provide a starting point for these investigations. More importantly, we have not focused on the normative value of different fairness criteria, but rather taken these as potential desiderata. Quantifying unfairness and discrimination remains an important open question, and will in general be shaped by cultural, legal, and political perspectives.  We hope that our proposed approach to quantifying and assessing the impacts of statistical biases on the efficacy of a desired (normative) notion of fairness can contribute to these conversations, and also guide the design of new criteria, as discussed earlier. 

As noted in our experiments, we have taken existing real-world datasets as ground truth (i.e., unbiased). This assumption has been inevitable for us due to our lack of access to ground truth data. A main motivation of our work, however, is that existing datasets are likely to suffer from various forms of statistical biases (including not only labeling biases and measurement errors but also disparate representation, changes in qualification rates over time, etc.). If the existing bias in these datasets is of the same type we have considered (e.g., labeling biases on qualified, disadvantaged agents), 
some of our findings continue to be supported as they can be viewed as additional bias added on top of existing ones (e.g., \texttt{DP/TPR} are robust in face of a range of label flipping probabilities, or thresholds monotonically increase as labeling bias increases). However, experiments on better benchmarks are indeed desirable. 

Investigating other forms of dataset biases, as well as the concurrent presence of multiple forms of bias, also remains an open question. Notably, a third form of statistical bias may be due to {long-term changes in qualification rates} (reflected as errors in $\alpha_g$ in our model). These may occur if a firm does not account for improvement efforts made by individuals over time, and has been studied as population dynamics models studied in \cite{zhang2019group,zhang2020fair}. Robustness of fairness criteria against these forms of biases, and whether they consequently support improvement efforts, is an interesting extension. More broadly, we have looked at the interplay between data biases and algorithmic fairness criteria in a static model. Further investigation of the feedback loops between these two remains an important open challenge.

\section{Additional Related Work}\label{app:related}

A variety of fairness criteria have been proposed with the goal of formalizing desired notions of algorithmic fairness (\citealt{mehrabi2021survey} and \citealt{barocas2017fairness} provide excellent overviews). Our focus in this paper is on four of these (group) fairness criteria: {Demographic Parity (DP)}, {True/False Positive Rate Parity (TPR/FPR)}, and {Equalized Odds (EO)}. This allows us to consider representative fairness criteria from two general categories~\citep{barocas2017fairness}: \emph{independence} (which requires that decisions be statistically independent of group membership, as is the case in \texttt{DP}) and \emph{separation} (which requires that decisions be statistically independent of group membership when conditioned on qualification, as is the case in \texttt{TPR/FPR/EO}). We also note that fairness criteria from different categories are in general incompatible with each other (see \citealt{barocas2017fairness,chouldechova2017fair,kleinberg2016inherent}); this points to an inherent trade-off between these different notions of fairness. Our work on assessing the robustness of these criteria to data biases introduces an additional metric against which to compare them. 

A variety of approaches have been proposed for achieving a given notion of fairness, and generally fall into three categories: (1) \emph{pre-processing}, which modifies the training dataset through feature selection or re-weighing techniques (e.g., \citealt{kamiran2012data,jiang2020identifying}), (2) \emph{in-processing}, which imposes the fairness criteria as a constraint at training time (e.g., \citealt{agarwal2018reductions,zafar2019fairness}), and (3) \emph{post-processing}, which adjusts the output of the algorithm based on the sensitive attribute (e.g., \citealt{hardt2016equality}). We consider the in-processing approach and formulate the design of a fair classifier as a constrained optimization problem.

Our work is also related to the recent work on fair classification in the presence of noisy labels by \citet{wang2021fair}. The main difference between our setting and this line of work is that we assume \emph{specific} forms of label noise, while this work assumes a noise generating process that leads to noise over both groups and both labels. In particular, the main scenario of interest for us in Section \ref{sec:gamma-bias} is that of selective labeling on the disadvantaged group. This choice, as well as our analysis in the subsequent sections on other forms of data bias, allows us to provide a sharper focus on the specific impact of each form of label bias on each type of fairness constraint. In addition, \citep{blum2020recovering,wang2021fair} use empirical risk minimization frameworks in studying fair learning under biased labels; in contrast, our analytical framework and modeling of data bias as inaccuracies in the problem primitives ($\hat{\gamma}(x), \hat{f}^y_x, \ldots$) is novel and different from prior work.

\section{Summary of Notation}\label{app:notation}
Our notation is summarized in Table~\ref{t:notation}. 

\begin{table*}[h]
    \caption{Summary of notation.}
    \label{t:notation}
    \centering
    \begin{tabular}{c p{6.4cm}}
    \hline
        Notation & Description \\
    \hline
        $g$ & Demographic groups, $g\in\{a,b\}$\\ 
    \hline
        $n_g$ & Fraction of population from group $g$\\
    \hline
        $x$ & Observable feature, $x\in\mathbb{R}$\\
    \hline
       $y$ & True qualification state, $y\in\{0,1\}$\\
    \hline
    $\alpha_g$ & Qualification rate of group $g$\\
    \hline
    $f^y_g(x)$ & Feature distribution of agents with label $y$ from group $g$\\
    \hline
    $f_g(x)$ & Feature distribution of all agents from group $g$\\
    \hline
    $\gamma_g(x)$ & Qualification profile of group $g$; probability that agent with feature $x$ from group $g$ is qualified\\
    \hline
    $d$ & Firm's accept/reject decision, $d\in\{0,1\}$\\
    \hline
    $\theta_g$ & Firm's threshold policy on group $g$\\
    \hline
    $u_+/u_-$ & Firm's benefit/loss from accepting qualified/unqualified agents\\
    \hline
    $U(\theta_a, \theta_b)$ & Firm's expected payoff given policies $\{\theta_a, \theta_b\}$\\
    \hline
    $C_g^\texttt{f}(\theta_g)$ & Fairness measure on group $g$\\ 
    \hline
    $\theta_g^{\texttt{MU}}$ & Firm's optimal threshold on group $g$ for maximum utility (fairness unconstrained)\\
    \hline
    $\theta_g^{\texttt{f}}$ & Firm's optimal threshold on group $g$ under fairness constraint \texttt{f}\\
    \hline
    \end{tabular}
\end{table*}

\section{Additional Experiment Details and Results}\label{app:experiments} 

\subsection{Parameter setup for the synthetic dataset in Figure~\ref{fig:synthetic-label-util-contour-prop-1} and Section~\ref{sec:experiment-synthetic}}\label{app:synthetic-label-bias}
The group ratios and qualification rates are set as follows: $n_a$ = 0.8, $n_b$ = 0.2, $\alpha_a$ = 0.8, and $\alpha_b$ = 0.3. The feature $x$ of each agent is sampled from a Gaussian distribution with $\mu$ = 70 and $\sigma$ = 10 if qualified, and from a Gaussian distribution with $\mu$ = 50 and $\sigma$ = 10 if unqualified. A total of 100000 examples are sampled. $\frac{u_-}{u_+}$ is set to 10.

\subsection{FICO credit score experiments}\label{app:extra-fico}

\paragraph{Additional experiment setup details.} We set $\frac{u_-}{u_+}$ to 10; this is selected as losses due to defaults on loans are typically higher for a bank than benefits/interests from on-time payments. We focus on the black and white groups in our discussion. Within the FICO dataset, $n_{black}$ and $n_{white}$ are 0.12 and 0.88, and $\alpha_{black}$ and $\alpha_{white}$ are 0.34 and 0.76. We analyze the effects of bias with and without fairness constraints. Unless otherwise specified, we use soft constraints $|\mathcal{C}_a^{\texttt{f}}(\theta_a) - \mathcal{C}_b^{\texttt{f}}(\theta_b)| \leq 0.01$ in our experiments. We measure the fairness violation under each constraint based on its fairness definition with respect to the level of bias we impose on the data. We will also highlight the change in the decision thresholds, selection rates of each group, and firm's utility (both group-wise and total). Note that the FICO dataset comes with the repay probability at each score, so there is no randomness in the experiments when we induce biases.

\paragraph{Experiment results and discussion.} Figure~\ref{fig:fico-thresh-select-util} provides additional details on the changes in selection rate, and the change in the utility from each group, in this experiment.

\begin{figure*}[h]
\centering
\includegraphics[width=0.9\linewidth]{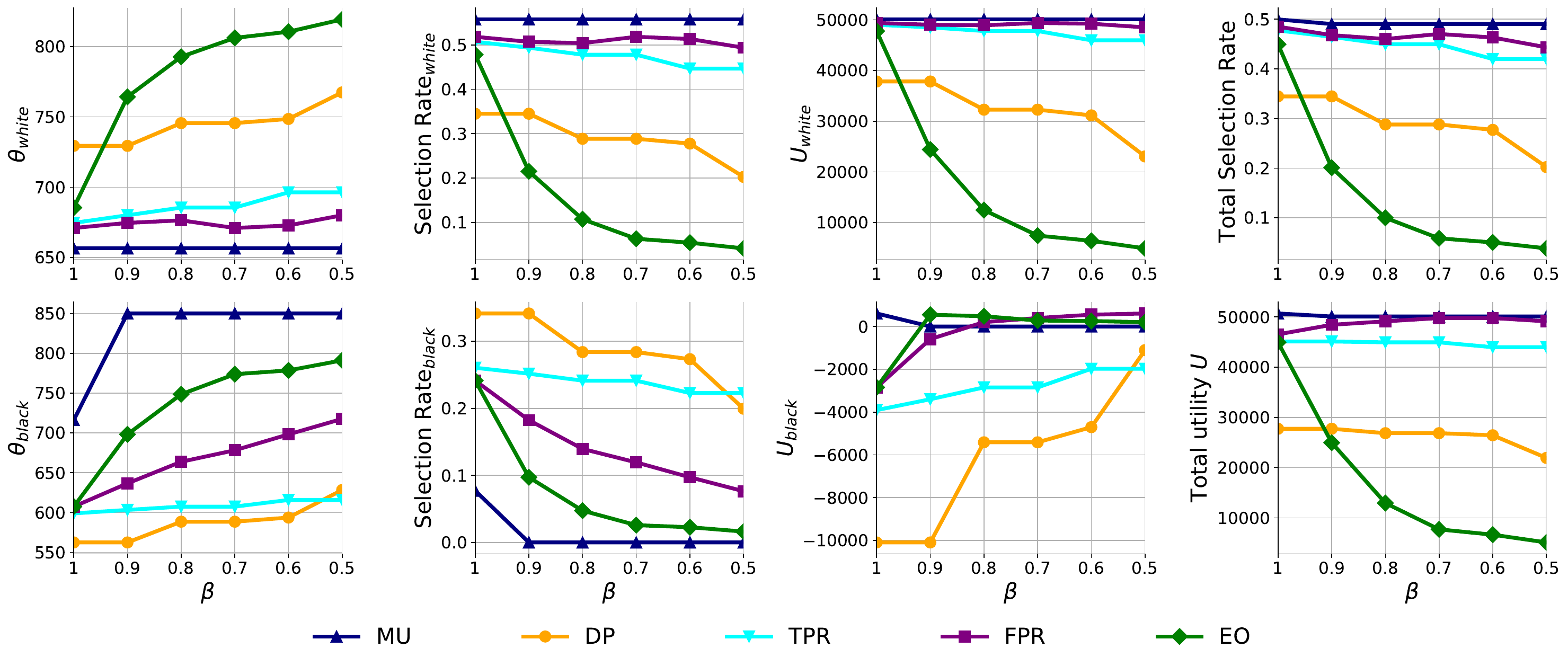}
\caption{Thresholds, selection rates, and utility from FICO experiments.}
\label{fig:fico-thresh-select-util}
\end{figure*}

\begin{figure}[h]
    \centering
    \includegraphics[width=0.45\textwidth]{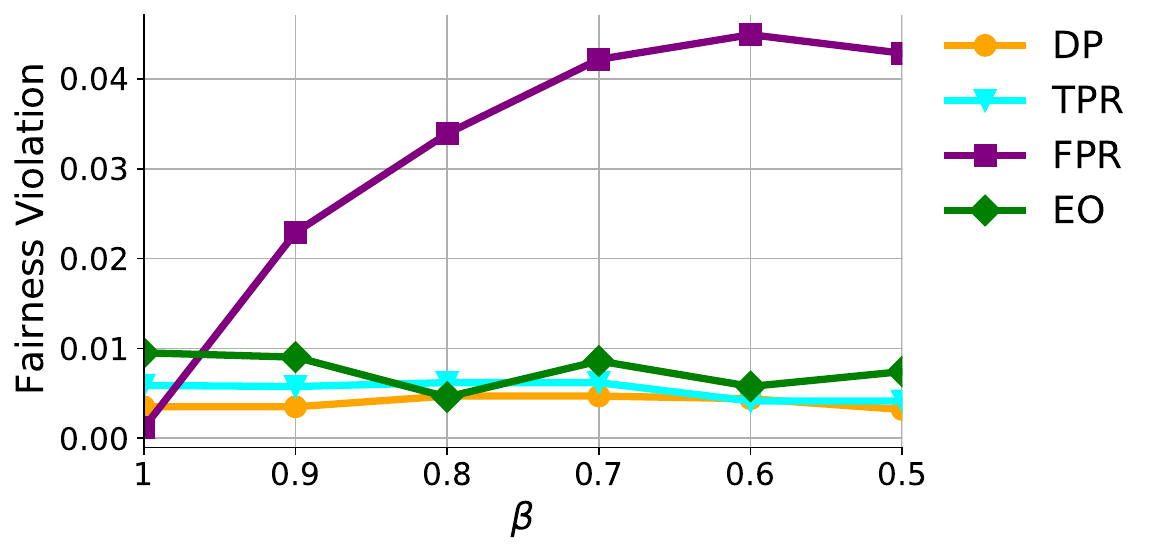}
    \caption{Fairness violation on the FICO dataset under different fairness constraints.}
    \label{fig:fico-fairness}
\end{figure}

\paragraph{Additional intuition for the observed fairness violations.}
From Figure~\ref{fig:fico-fairness}, \texttt{DP} and \texttt{TPR} are robust to underestimation of qualification profiles in terms of achieving their notions of fairness. Intuitively, as noted earlier, this can be explained as follows. Based on the definition of \texttt{DP}, the constraint tries to equalize the selection rates of two groups regardless of the true qualification state of the agents. On the other hand, flipping labels from 1 to 0 does not alter the number of agents from group $b$ at each score. Thus, each pair of decision rules that could achieve \texttt{DP} on the original unbiased data will still be able to achieve \texttt{DP} on the biased data. Conversely, the decision rules found on the biased data satisfying \texttt{DP} remain fair when applied back to the unbiased data.

For \texttt{TPR} on the other hand, recall the definition, which equalizes the ratio of the number of accepted qualified agents to the total number of qualified agents. Underestimating the qualification profiles would be equivalent to dropping the number of qualified agents at each score. When calculating the true positive rate, we decrease the two quantities in the ratio by the same fraction. Consequently, similar to \texttt{DP}, the set of decision rules satisfying \texttt{TPR} remains the same after the bias is imposed. 

Figure~\ref{fig:fico-fairness} also shows that \texttt{FPR} has an increasing trend in fairness violation. This means that the set of possible decision rules of \texttt{FPR} changes when bias is imposed. To see why, note that false positive rate computes the ratio of the number of accepted unqualified agents to the total number of unqualified agents. Dropping the qualification profiles increases the two quantities in the ratio at different rates, causing the false positive rates to change at given thresholds.

Finally, from Figure~\ref{fig:fico-fairness}, it may seem that \texttt{EO} also remains relatively robust to bias. This observation is not in general true (as shown in our experiments on other datasets in Section~\ref{sec:adult-german}); however, it can be explained for the FICO dataset as follows. As \texttt{EO} requires satisfying both \texttt{TPR} and \texttt{FPR}, the set of decision rules achieving \texttt{EO} will be at most the intersection of those of \texttt{TPR} and \texttt{FPR}. As noted above, the set of feasible thresholds are the same on biased and unbiased data when requiring \texttt{TPR}, while they differ on \texttt{FPR}. As a result, the set of possible decision rules for \texttt{EO} will change with bias (here, it becomes smaller). That said, the pairs that remain feasible, for this dataset, are those that are dominated by the requirement to satisfy \texttt{TPR}, which are themselves robust to biases. 
(The same effect is also highlighted in Figure~\ref{fig:synthetic-label-util-contour-prop-1}, which illustrates the changes in the set of feasible thresholds in experiments on synthetic data.) Moreover, we enforce that the constraints must be satisfied (i.e., $|\mathcal{C}_a^{\texttt{f}}(\theta_a) - \mathcal{C}_b^{\texttt{f}}(\theta_b)| \leq 0.01$) during training, and the original unbiased data is used for testing. In contrast, if the constraints are imposed by introducing an extra penalty term to the objective function, the model may retain high accuracy/utility by not perfectly satisfying the fairness constraints depending on the exact formulation. Besides, the constraints may not be satisfied even when there is no bias due to the gap between training and test data. Nevertheless, our focus is the trend of how the performance of each constraint reacts to the increasing level of bias (as we discuss further in Section~\ref{sec:adult-german}).

\subsection{Adult and German credit score datasets}\label{app:extra-adult-german}

\paragraph{Additional experiment setup details.} (i) the Adult dataset: the goal is to predict if an individual would make more than 50k per year. We choose gender as the sensitive feature. We followed \citep[Chapter 2]{barocas2017fairness} to preprocess the data. 

(ii) the German credit dataset: the goal is to predict if an individual has a good or bad credit risk. The sensitive feature is age. Following \citep{jiang2020identifying}, we binarize age using a cutoff threshold of 30 when deciding group membership. 67\% of the data is randomly selected as the training set, and the rest is used as the test set.

In both experiments, the maximum number of iterations is set such that early stopping is avoided.

\paragraph{Flipping labels on both advantaged and disadvantaged individuals.} The results are illustrated in Figure~\ref{fig:both-flippped}, and are again consistent with our analytical arguments. In particular, \texttt{DP} is robust against both forms of label biases. 

\begin{figure}[h]
        \centering
        \includegraphics[width=0.45\textwidth]{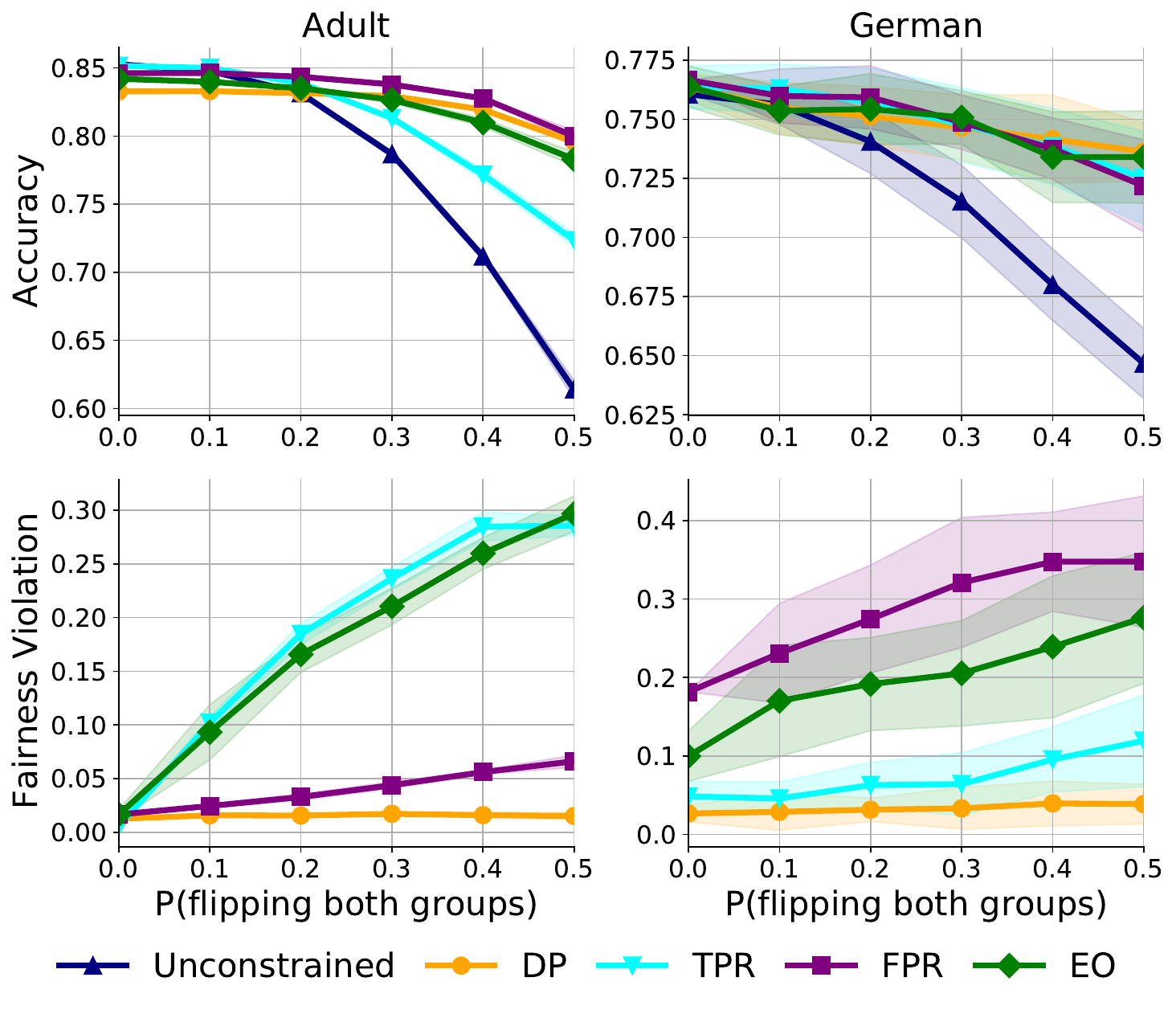}
        \caption{Flip the labels of both the qualified disadvantaged group and unqualified advantaged group.}
        \label{fig:both-flippped}
\end{figure}

\paragraph{Linear regression weights showing the relationship between fairness violation and bias level.} These are shown in Table~\ref{t:lr-weights-table} for Figures~\ref{fig:adult-german-acc-fairness} and \ref{fig:both-flippped}. 

\begin{table*}[h]
  \caption{Linear regression weights showing the relationship between fairness violation and bias level.}
  \label{t:lr-weights-table}
  \centering
  \begin{tabular}{llccc}
    \toprule
    \multirow{2}{*}{Dataset}    & \multirow{2}{*}{Constraint} & \multicolumn{3}{c}{Weight} \\
    \cmidrule(r){3-5}       &   & Flip qualified disadvantaged & Flip unqualified advantaged & Flip both \\
    \midrule
    \multirow{4}{*}{Adult}  & \texttt{DP}   & \textbf{-0.007}   & \textbf{0.001}    & \textbf{0.004} \\
                            & \texttt{TPR}  & \textbf{0.019}    & 0.544             & 0.572 \\
                            & \texttt{FPR}  & 0.059             & \textbf{0.028}    & 0.101 \\
                            & \texttt{EO}   & 0.056             & 0.575             & 0.556 \\
    \midrule
    \multirow{4}{*}{German} & \texttt{DP}   & \textbf{-0.015}   & \textbf{-0.005}   & \textbf{0.027} \\
                            & \texttt{TPR}  & \textbf{0.068}    & \textbf{0.007}    & 0.144 \\
                            & \texttt{FPR}  & 0.187             & \textbf{0.003}    & 0.350 \\
                            & \texttt{EO}   & 0.131             & 0.166             & 0.315 \\
    \bottomrule
  \end{tabular}
\end{table*}
\begin{figure}[h]
    \centering
    \includegraphics[width=0.45\textwidth]{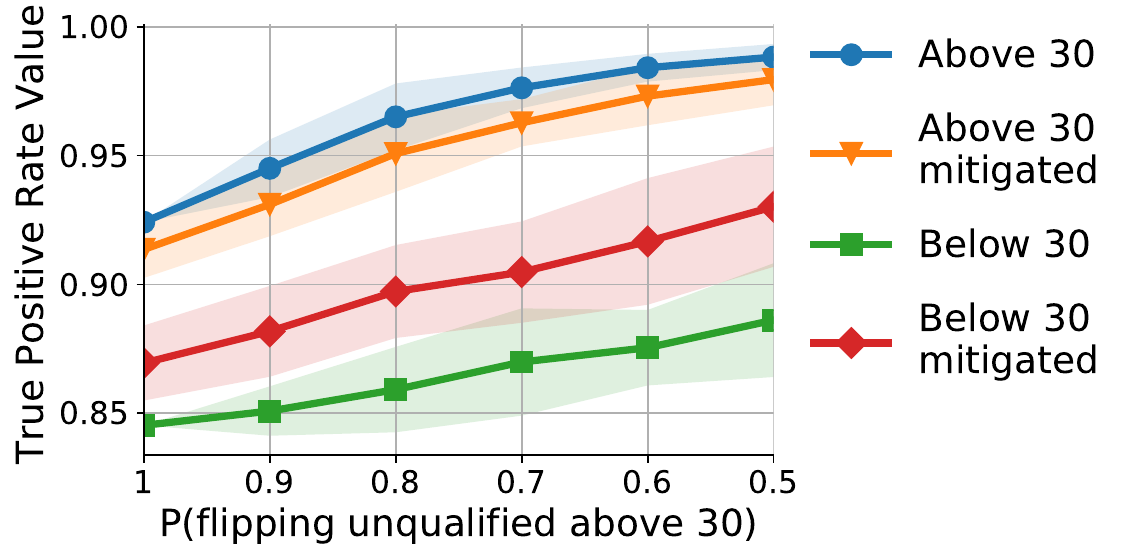}
    \caption{Group-wise true positive rate values for German under \texttt{TPR}.}
    \label{fig:german-advantaged-0-TPR-vals}
\end{figure}

\subsection{Synthetic dataset with measurement errors}\label{app:synthetic-feature-bias}
The parameter settings are identical to those in Appendix \ref{app:synthetic-label-bias}. Figure~\ref{fig:synthetic-feature_b1-thresh-select-util.} provides additional details on the changes in selection rate, and the change in the utility from each group, in this experiment.

\begin{figure*}[h]
\centering
\includegraphics[width=0.9\textwidth]{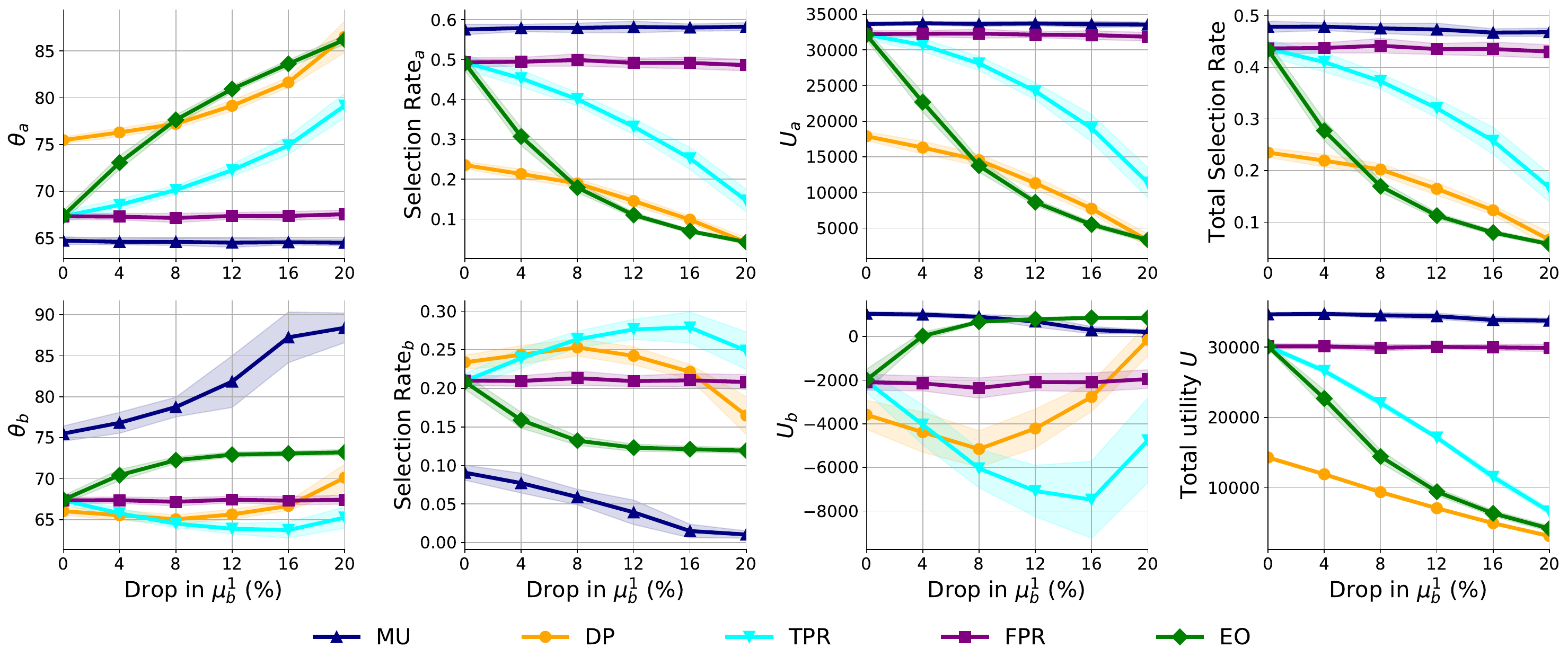}
\caption{Thresholds, selection rates, and utility from synthetic simulation on measurement errors. The results are from 20 runs.}
\label{fig:synthetic-feature_b1-thresh-select-util.}
\end{figure*}

\begin{figure}[h]
\centering
\includegraphics[width=0.45\textwidth]{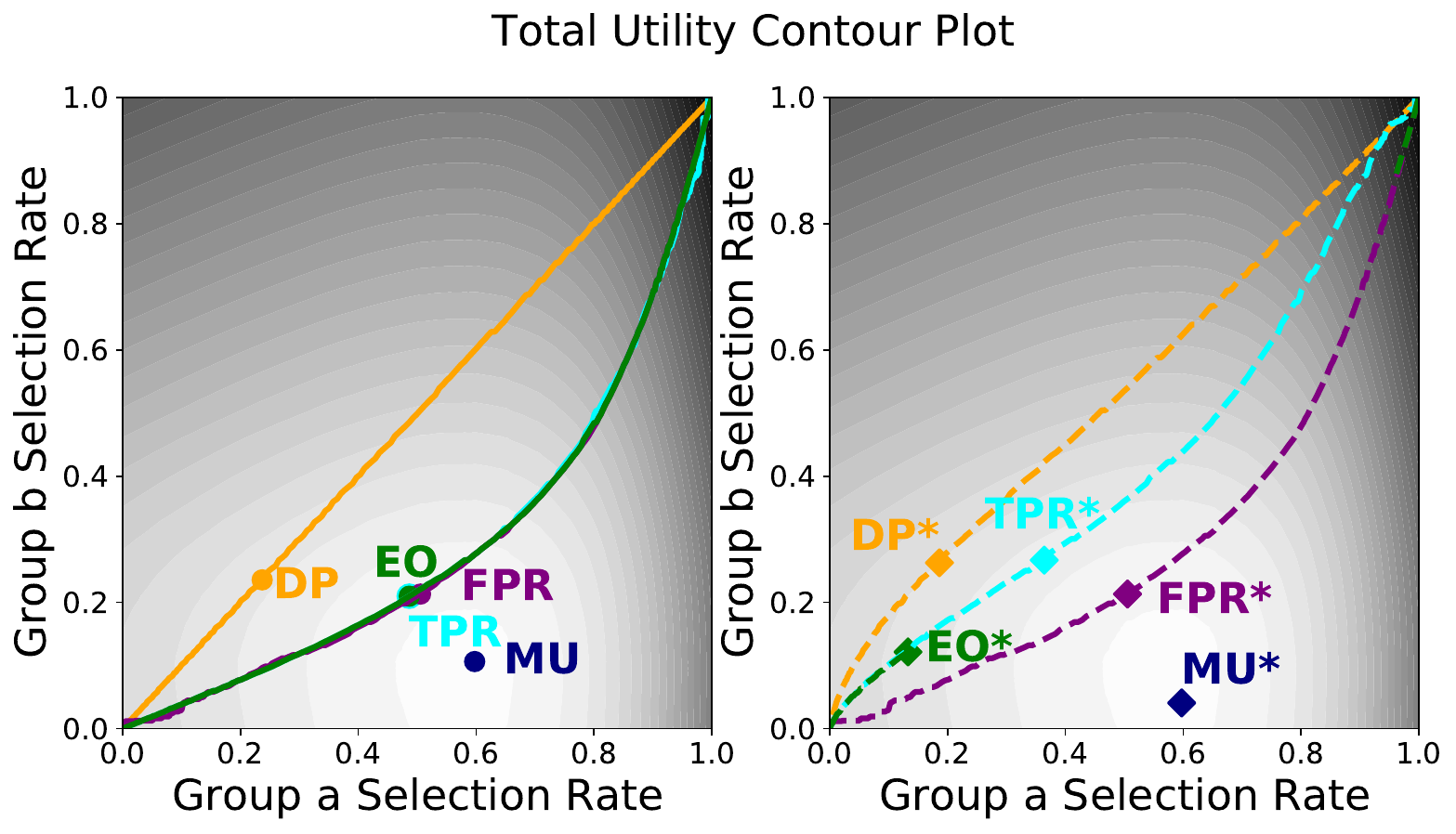}
\caption{Total utility achieved on the synthetic dataset with similar settings to FICO as a function of selection rates. Brighter regions represent higher utility. The curves highlight solutions satisfying the constraints. Left: unbiased data. Right: biased data with 10\% drop in $\mu^1_b$.}
\label{fig:synthetic-feature-util-contour}
\end{figure}

\section{Proofs and Additional Results for Section~\ref{sec:analytical}}\label{app:analytical-proofs}

\subsection{Proof of Lemma~\ref{lemma:MU-opt}}

\begin{proof}
In the absence of a fairness constraint, the firm's utility on each group, $U_g(\theta_g) = \int_{\theta_g}^\infty (\alpha_g u_+f^1_g(x) - (1-\alpha_g)u_-f^0_g(x))\mathrm{d}x$ can be maximized independently. We have
\[\frac{\partial U_g(\theta_g)}{\partial \theta_g} = -(\alpha_g u_+f^1_g(\theta_g) - (1-\alpha_g)u_-f^0_g(\theta_g))~. \]
By Assumption~\ref{ass:MLR} and the above, $\alpha_g u_+f^1_g(\theta^\texttt{MU}_g) = (1-\alpha_g)u_-f^0_g(\theta^\texttt{MU}_g)$ is the maximizer of $U_g(\theta_g)$, establishing the first claim. The expression in terms of the qualification profiles follows from \eqref{eq:gamma-f-alpha}.  
\end{proof}

\subsection{Proof of Lemma~\ref{lemma:MU-bias}}

\begin{proof}
The increase in the decision thresholds on group $b$ follows from the characterizations in Lemma~\ref{lemma:MU-opt}, and noting that the functions $\frac{f_b^1(x)}{f_b^0(x)}$ and $\gamma_b(x)$ are both increasing. The thresholds on group $a$ remain unaffected since the optimization of $U_a(\theta_a)$ and $U_b(\theta_b)$ can be decoupled. The reduction in the firm's utility follows from the proof of Lemma~\ref{lemma:MU-opt}): $\{\theta^{\texttt{MU}}_a, \theta^{\texttt{MU}}_b\}$ is the only maximizer of the firm's utility, and so the utility drops at the perturbed thresholds. 

Finally, the claim that if $\hat{x}=x-\epsilon_b(x)$, where $\epsilon_b(x)\geq 0, \forall x$, then $\frac{\hat{f}^1_b({\theta}^{\texttt{MU}}_b)}{\hat{f}^0_b({\theta}^{\texttt{MU}}_b)} <  \frac{f_b^1({\theta}^{\texttt{MU}}_b)}{f_b^0({\theta}^{\texttt{MU}}_b)}$, follows directly from Assumption~\ref{ass:MLR}. We also note that for the lemma to hold, we in fact only need the underestimation of the feature measurements to happen at the unbiased \texttt{MU} threshold ${\theta}^{\texttt{MU}}_b$. 
\end{proof}

\subsection{Proof of Lemma~\ref{lemma:fair-opt}}

\begin{proof}
Given threshold policies, the firm's problem in \eqref{eq:f-opt-thds} can be further simplified to
\begin{align*}
	\max_{\theta_a, \theta_b} & ~~\sum_{g\in\{a,b\}} n_g \left(\alpha_g u_+(1-F^1_g(\theta_g)) - (1-\alpha_g)u_-(1-F^0_g(\theta_g))\right)\\
	\text{s.t.} & \quad \mathcal{C}_a^{\texttt{f}}(\theta_a) = \mathcal{C}_b^{\texttt{f}}(\theta_b)~.
\end{align*}
We first use the constraint to express $\theta_a$ as a function of $\theta_b$, that is, find a function $\phi^\texttt{f}$ such that $\theta_a = \phi^\texttt{f}(\theta_b)$. To do so, note that for all three constraints $\texttt{f}\in\{\texttt{DP}, \texttt{TPR}, \texttt{FPR}\}$, the function $C^\texttt{f}_g$ is invertible. Therefore, $\theta_a = (C^\texttt{f}_a)^{-1}(C^\texttt{f}_b(\theta_b))$. Further, 
\begin{align}
\frac{\partial \phi^\texttt{f}(\theta_b)}{\partial \theta_b} &= \frac{\partial (C^\texttt{f}_a)^{-1}(C^\texttt{f}_b(\theta_b))}{\partial \theta_b} = \frac{1}{\frac{\partial C_a^\texttt{f}\left( (C^\texttt{f}_a)^{-1}(C^\texttt{f}_b(\theta_b))\right)}{\partial \theta}}\frac{\partial C_b^\texttt{f}(\theta_b)}{\partial \theta}\notag\\
& 
= \frac{\frac{\partial C_b^\texttt{f}(\theta_b)}{\partial \theta}}{\frac{\partial C_a^\texttt{f}(\theta_a)}{\partial \theta}}
\label{eq:a-aafo-b}
\end{align}
It is also easy to show that $\frac{\partial C_g^\texttt{DP}(\theta)}{\partial \theta}=\alpha_gf^1_g(\theta)+(1-\alpha_g)f^0_g(\theta)$, $\frac{\partial C_g^\texttt{TPR}(\theta)}{\partial \theta}=f^1_g(\theta)$, and $\frac{\partial C_g^\texttt{FPR}(\theta)}{\partial \theta}=f^0_g(\theta)$; therefore, $\phi^\texttt{f}(\cdot)$ is an increasing function for all three constraints. 

With the conversion $\theta_a = \phi^\texttt{f}(\theta_b)$, the firm's optimizaiton problem reduces to
\begin{align*}
	\theta_b^* = \arg\max_{\theta_b} & ~~ n_a \Big(\alpha_a u_+(1-F^1_a(\phi^\texttt{f}(\theta_b))) \\
	&- (1-\alpha_a)u_-(1-F^0_a(\phi^\texttt{f}(\theta_b)))\Big) \\
	&+ n_b \Big(\alpha_b u_+(1-F^1_b(\theta_b)) - (1-\alpha_b)u_-(1-F^0_b(\theta_b))\Big)
\end{align*}
The first derivative of the objective function above with respect to $\theta_b$ is given by
\begin{align*}
	\frac{\partial U(\phi^\texttt{f}(\theta_b), \theta_b)}{\partial \theta_b} = &-n_a \frac{\frac{\partial C_b^\texttt{f}(\theta_b)}{\partial \theta}}{\frac{\partial C_a^\texttt{f}(\theta_a)}{\partial \theta}}\Big(\alpha_a u_+f^1_a(\theta_a) - (1-\alpha_a)u_-f^0_a(\theta_a)\Big) \\
	&\qquad - n_b \Big(\alpha_b u_+f^1_b(\theta_b) - (1-\alpha_b)u_-f^0_b(\theta_b)\Big)
\end{align*}
Consider the thresholds $\{\theta^*_a, \theta_b^*\}$ at which this first derivative is zero. Given that $\phi^\texttt{f}$ is increasing, alternative feasible profiles have either both thresholds smaller or larger than those in $\{\theta^*_a, \theta_b^*\}$.  Together with Assumption~\ref{ass:MLR}, this first derivative is positive/negative when the thresholds decrease/increase. We therefore conclude that $\{\theta^*_a, \theta_b^*\}$ maximizes the firm's utility. 

That is, the optimal fairness-constrained thresholds satisfy
\begin{align*}
	& n_a \frac{\frac{\partial C_b^\texttt{f}(\theta^{\texttt{f}}_b)}{\partial \theta}}{\frac{\partial C_a^\texttt{f}(\theta^{\texttt{f}}_a)}{\partial \theta}}\Big(\alpha_a u_+f^1_a(\theta^{\texttt{f}}_a) - (1-\alpha_a)u_-f^0_a(\theta^{\texttt{f}}_a)\Big) \\
	&\quad + n_b \Big(\alpha_b u_+f^1_b(\theta^{\texttt{f}}_b) - (1-\alpha_b)u_-f^0_b(\theta^{\texttt{f}}_b)\Big) = 0
\end{align*}
completing the proof. 
\end{proof}

\subsection{Derivations of thresholds in Table~\ref{t:fair-thds}}\label{app:table-derivations}

In this appendix, we detail the derivation of the thresholds in Tables~\ref{t:fair-thds} and \ref{t:fair-thds-2}. 

\begin{table*}[ht]
    \caption{Optimal fairness-constrained thresholds under different choices of $\texttt{f}\in\{\texttt{DP}, \texttt{TPR}, \texttt{FPR}\}$ fairness criteria in terms of $f_g^y(x)$ and $\alpha_g$.}
    \label{t:fair-thds}
    \centering
    \begin{tabular}{c c }
    \hline
        \texttt{f} &  Optimal thresholds in terms of $f_g^y(x)$ and $\alpha_g$ \\
        \hline
        {\texttt{DP}} & $\sum_{g\in\{a,b\}} n_g \frac{\alpha_g u_+f^1_g(\theta^\texttt{DP}_g) -  (1-\alpha_g)u_-f^0_g(\theta^\texttt{DP}_g)}{\alpha_gf^1_g(\theta^\texttt{DP}_g) +  (1-\alpha_g)f^0_g(\theta^\texttt{DP}_g)} = 0$ \\
         \hline
        {\texttt{TPR}} & $\sum_{g\in\{a,b\}} n_g \left(\alpha_g u_+ -  (1-\alpha_g)u_-\frac{f^0_g(\theta_g^\texttt{TPR})}{f^1_g(\theta_g^\texttt{TPR})}\right) = 0$ \\
         \hline
        {\texttt{FPR}} & $\sum_{g\in\{a,b\}} n_g \left(\alpha_g u_+ \frac{f^1_g(\theta_g^\texttt{FPR})}{f^0_g(\theta_g^\texttt{FPR})}-  (1-\alpha_g)u_-\right) = 0$ \\
         \hline
    \end{tabular}
\end{table*}

\begin{table*}[ht]
    \caption{Optimal fairness-constrained thresholds under different choices of $\texttt{f}\in\{\texttt{DP}, \texttt{TPR}, \texttt{FPR}\}$ fairness criteria in terms of $\gamma_g(x)$ and $\alpha_g$.}
    \label{t:fair-thds-2}
    \centering
    \begin{tabular}{c c}
    \hline
        \texttt{f} &  Optimal thresholds in terms of $\gamma_g(x)$ and $\alpha_g$\\
        \hline
        {\texttt{DP}} & $n_a \gamma_a(\theta_a^{\texttt{DP}}) + n_b \gamma_b(\theta_b^{\texttt{DP}}) = \frac{u_-}{u_++u_-}$\\
         \hline
        {\texttt{TPR}} & $\frac{n_a\alpha_a}{\gamma_a(\theta_a^\texttt{TPR})} + \frac{n_b\alpha_b}{\gamma_b(\theta_b^\texttt{TPR})} = \frac{1}{\frac{u_-}{u_++u_-}}(n_a\alpha_a+n_b\alpha_b)$\\
         \hline
        {\texttt{FPR}} &
        $\frac{n_a(1-\alpha_a)}{1-\gamma_a(\theta_a^\texttt{FPR})} + \frac{n_b(1-\alpha_b)}{1-\gamma_b(\theta_b^\texttt{FPR})} = \frac{1}{1-\frac{u_-}{u_++u_-}}(n_a(1-\alpha_a)+n_b(1-\alpha_b))$\\
        \hline
    \end{tabular}
\end{table*}

These are based on the characterization in Lemma~\ref{lemma:fair-opt}, which showed that the fairness-constrained thresholds satisfy:
\begin{align}
\sum_{g\in\{a,b\}} n_g \frac{\alpha_g u_+f^1_g(\theta^\texttt{f}_g) -  (1-\alpha_g)u_-f^0_g(\theta^\texttt{f}_g)}{\partial C_g^{\texttt{f}}(\theta^\texttt{f}_g)/\partial \theta} = 0~.
    \label{eq:f-opt-thds}
\end{align}

It is straightforward to verify that $\frac{\partial C_g^\texttt{DP}(\theta)}{\partial \theta}=\alpha_gf^1_g(\theta)+(1-\alpha_g)f^0_g(\theta)$, $\frac{\partial C_g^\texttt{TPR}(\theta)}{\partial \theta}=f^1_g(\theta)$, and $\frac{\partial C_g^\texttt{FPR}(\theta)}{\partial \theta}=f^0_g(\theta)$. The first column in Table~\ref{t:fair-thds} then follows directly from the characterizations \eqref{eq:f-opt-thds} of Lemma~\ref{lemma:fair-opt}. We next derive the characterizations in the second column of Table~\ref{t:fair-thds} from the first column together with the relation in \eqref{eq:gamma-f-alpha}. Specifically,\footnote{We will use $\pm$ as a shorthand for a term being added and subtracted.} 

\begin{itemize}
\item For \texttt{DP}:
\begin{align*}
    &\sum_{g\in\{a,b\}} n_g \frac{\alpha_g u_+f^1_g(\theta^\texttt{DP}_g) -  (1-\alpha_g)u_-f^0_g(\theta^\texttt{DP}_g)}{\alpha_gf^1_g(\theta^\texttt{DP}_g) +  (1-\alpha_g)f^0_g(\theta^\texttt{DP}_g)} = 0 \\
    \Leftrightarrow & ~~ \sum_{g\in\{a,b\}} n_g \tfrac{\alpha_g u_+f^1_g(\theta^\texttt{DP}_g) -  (1-\alpha_g)u_-f^0_g(\theta^\texttt{DP}_g) \pm \alpha_g u_-f^1_g(\theta_g^\texttt{DP})}{\alpha_gf^1_g(\theta^\texttt{DP}_g) +  (1-\alpha_g)f^0_g(\theta^\texttt{DP}_g)} = 0 \\
    \Leftrightarrow & ~~ \sum_{g\in\{a,b\}} n_g \left(\frac{\alpha_gf^1_g(\theta^\texttt{DP}_g)}{\alpha_gf^1_g(\theta^\texttt{DP}_g) +  (1-\alpha_g)f^0_g(\theta^\texttt{DP}_g)}(u_++u_-) - u_-\right) = 0 \\
    \Leftrightarrow & ~~ \sum_{g\in\{a,b\}} n_g\gamma_g(\theta^\texttt{DP}_g) = \sum_{g\in\{a,b\}} n_g \frac{u_-}{u_++u_-} \\
    \Leftrightarrow & \quad n_a \gamma_a(\theta_a^{\texttt{DP}}) + n_b \gamma_b(\theta_b^{\texttt{DP}}) = \frac{u_-}{u_++u_-}
\end{align*}
    \item For \texttt{TPR}:
\begin{align*}
    & \sum_{g\in\{a,b\}} n_g \left(\alpha_g u_+ -  (1-\alpha_g)u_-\frac{f^0_g(\theta_g^\texttt{TPR})}{f^1_g(\theta_g^\texttt{TPR})}\right) = 0\\
    \Leftrightarrow & ~~ \sum_{g\in\{a,b\}} n_g \frac{\alpha_g u_+f^1_g(\theta_g^\texttt{TPR}) - (1-\alpha_g)u_- f^0_g(\theta_g^\texttt{TPR})}{f^1_g(\theta_g^\texttt{TPR})} = 0\\
    \Leftrightarrow & ~~ \sum_{g\in\{a,b\}} n_g \tfrac{\alpha_g u_+f^1_g(\theta_g^\texttt{TPR}) - (1-\alpha_g)u_- f^0_g(\theta_g^\texttt{TPR}) \pm \alpha_g u_-f^1_g(\theta_g^\texttt{TPR})}{f^1_g(\theta_g^\texttt{TPR})} = 0\\
    \Leftrightarrow & ~~ \sum_{g\in\{a,b\}} n_g \left(\alpha_g(u_++u_-) - u_-\tfrac{\alpha_g f^1_g(\theta_g^\texttt{TPR}) + (1-\alpha_g)f^0_g(\theta_g^\texttt{TPR})}{f^1_g(\theta_g^\texttt{TPR})}\right) = 0\\
    \Leftrightarrow & ~~ \sum_{g\in\{a,b\}} n_g \left(\alpha_g\frac{u_++u_-}{u_-} - \frac{\alpha_g}{\gamma_g(\theta_g^\texttt{TPR})}\right) = 0\\
    \Leftrightarrow & ~~ \frac{n_a\alpha_a}{\gamma_a(\theta_a^\texttt{TPR})} + \frac{n_b\alpha_b}{\gamma_b(\theta_b^\texttt{TPR})} = \frac{1}{\frac{u_-}{u_++u_-}}(n_a\alpha_a+n_b\alpha_b)
\end{align*}

\item For \texttt{FPR}:
\begin{align*}
    & \sum_{g\in\{a,b\}} n_g \left(\alpha_g u_+ \frac{f^1_g(\theta_g^\texttt{FPR})}{f^0_g(\theta_g^\texttt{FPR})}-  (1-\alpha_g)u_-\right) = 0\\
    \Leftrightarrow & ~~ \sum_{g\in\{a,b\}} n_g \frac{\alpha_g u_+f^1_g(\theta_g^\texttt{FPR})-(1-\alpha_g)u_-f^0_g(\theta_g^\texttt{FPR})}{f^0_g(\theta_g^\texttt{FPR})} = 0\\
    \Leftrightarrow & ~~ \sum_{g\in\{a,b\}} n_g \tfrac{\alpha_g u_+f^1_g(\theta_g^\texttt{FPR})-(1-\alpha_g)u_-f^0_g(\theta_g^\texttt{FPR}) \pm (1-\alpha_g)u_+f^0_g(\theta_g^\texttt{FPR}) }{f^0_g(\theta_g^\texttt{FPR})} = 0\\
    \Leftrightarrow & ~~ \sum_{g\in\{a,b\}} n_g \left(u_+\tfrac{\alpha_g f^1_g(\theta_g^\texttt{FPR}) + (1-\alpha_g)f^0_g(\theta_g^\texttt{FPR}) }{f^0_g(\theta_g^\texttt{FPR})} - (1-\alpha_g)(u_++u_-)\right) = 0\\
    \Leftrightarrow & ~~ \sum_{g\in\{a,b\}} n_g \left(\frac{(1-\alpha_g)}{1-\gamma_g(\theta_g^\texttt{FPR})} - (1-\alpha_g)\frac{u_++u_-}{u_+}\right) = 0\\
    \Leftrightarrow & ~~ \tfrac{n_a(1-\alpha_a)}{1-\gamma_a(\theta_a^\texttt{FPR})} + \tfrac{n_b(1-\alpha_b)}{1-\gamma_b(\theta_b^\texttt{FPR})} = \frac{1}{1-\frac{u_-}{u_++u_-}}(n_a(1-\alpha_a)+n_b(1-\alpha_b))
\end{align*}

\end{itemize}

\subsection{Proof of Proposition~\ref{prop:f-bias-gamma}}\label{app:prop-impact-gamma-proof}
\begin{proof}
We begin with some preliminaries, determining the impacts of $\hat{\gamma}_b(x)=\beta\gamma_b(x)$ on other estimates of the underlying population characteristics. To do so, let $f_g(x):=\mathbb{P}(X=x|G=g) = \alpha_g f_g^1(x)+(1-\alpha_g)f^0_g(x)$ be the feature distribution across all agents (qualified and unqualified) from group $g$. We note that if there is a bias in qualification assessments $\hat{\gamma}_g(x)=\beta \gamma_g(x)$(which affects the labels $y$), these overall feature distribution estimates $\hat{f}_g(x)=\mathbb{P}(X=x|G=g)$ will not be affected; that is $\hat{f}_g(x)=f_g(x)$. However, $\hat{\alpha}_g$ and $\hat{f}^y_g(x)$ can still be impacted. 

We first identify the impacts of the change in $\gamma_g(x)$ on $\alpha_g$. By definition:
    \[\alpha_g = \int_x \mathbb{P}(Y=1|X=x,G=g)\mathbb{P}(X=x|G=g)\mathrm{d}x~.\]
Therefore, if $\hat{\gamma}_b(x)=\beta\gamma_b(x), \forall x$, we have $\hat{\alpha}_b = \beta\alpha_b$. 

Now, by definition and the Bayes' rule
\[f_g^y(x) = \mathbb{P}(X=x|Y=y, G=g) = \tfrac{\mathbb{P}(Y=y|X=x, G=g)\mathbb{P}(X=x| G=g)}{\mathbb{P}(Y=y| G=g)}~.\]
Therefore, 
\begin{align*}
    f_g^1(x) &= \frac{\gamma_g(x)f_g(x)}{\alpha_g}~.\\
    f_g^0(x) &= \frac{(1-\gamma_g(x))f_g(x)}{(1-\alpha_g)}~.
\end{align*}

Noting that $\hat{\alpha}=\beta_g\alpha_g$, combined with the above relations, we conclude that $\hat{f}^1_g(x)=f^1_g(x)$, while $\hat{f}^0_g(x)=\frac{1-\alpha_g}{1-\beta\alpha_g} \frac{1-\beta\gamma_g(x)}{1-\gamma_g(x)} f^0_g(x)$. Intuitively, this is expected: $\hat{\gamma}_g(x)=\beta \gamma_g(x)$ can be viewed as flipping label 1 to label 0 in the training data with probability $\beta$. This leaves the feature distribution of qualified agents unchanged (as the flipping probability is independent of the feature $x$), whereas it adds (incorrect) data to the feature distribution of unqualified agents, hence biasing $f^0_g(x)$. 

We now proceed with the proof of the proposition. 

\emph{Part (i):} 

    $\bullet$ For \texttt{DP}: The firm picks the thresholds such that $h^\texttt{DP}(\hat{\theta}_a^{\texttt{DP}},\hat{\theta}_b^{\texttt{DP}}, \beta)=0$, where \[h^\texttt{DP}({\theta}_a,{\theta}_b, \beta) : = n_a\gamma_a({\theta}_a) + n_b \beta\gamma_b({\theta}_b) - \frac{u_-}{u_++u_-}~.\]
    Note that as $\gamma_g(x)$ are increasing functions under Assumption~\ref{ass:MLR}, $h^\texttt{DP}$ is increasing in both thresholds. It is also increasing in $\beta$.  
    
    For $\beta\in(0,1)$,  $h^\texttt{DP}({\theta}_a^{\texttt{DP}},{\theta}_b^{\texttt{DP}},\beta)< 0$. Therefore, to attain  $h^\texttt{DP}(\hat{\theta}_a^{\texttt{DP}},\hat{\theta}_b^{\texttt{DP}},\beta)=0$, at least one of the thresholds should increase compared to the unbiased thresholds $\{{\theta}_a^{\texttt{DP}},{\theta}_b^{\texttt{DP}}\}$. In addition, the \texttt{DP}-constrained thresholds when data is biased are selected such that $\int_{\hat{\theta}^{\texttt{DP}}_a}^\infty {f}_a(x)\mathrm{d}x=\int_{\hat{\theta}^{\texttt{DP}}_b}^\infty \hat{f}_b(x)\mathrm{d}x$ (i.e., based on the biased training data); since $\hat{f}_b(x)=f_b(x)$, we conclude that the changes in the thresholds are aligned (i.e, either both decrease or both increase compared to the unbiased case). We conclude that $\hat{\theta}^{\texttt{DP}}_g(\beta)\geq {\theta}^{\texttt{DP}}_g$ for both groups.  
    
    $\bullet$ For \texttt{TPR}: from Table~\ref{t:fair-thds}, The firm picks the thresholds such that $h^\texttt{TPR}(\hat{\theta}_a^{\texttt{TPR}},\hat{\theta}_b^{\texttt{TPR}}, \beta)=0$, where \[h^\texttt{TPR}({\theta}_a,{\theta}_b, \beta) : = \frac{n_a\alpha_a}{\gamma_a(\theta_a^\texttt{TPR})} + \frac{n_b\alpha_b}{\gamma_b(\theta_b^\texttt{TPR})} - \tfrac{1}{\frac{u_-}{u_++u_-}}(n_a\alpha_a+n_b\beta\alpha_b)~.\]
    Note that as $\gamma_g(x)$ are increasing functions under Assumption~\ref{ass:MLR}, $h^\texttt{TPR}$ is decreasing in both thresholds. It is also decreasing in $\beta$. 
    
    For $\beta\in(0,1)$,  $h^\texttt{TPR}({\theta}_a^{\texttt{TPR}},{\theta}_b^{\texttt{TPR}}, \beta)> 0$. Therefore, to attain  $h^\texttt{TPR}(\hat{\theta}_a^{\texttt{TPR}},\hat{\theta}_b^{\texttt{TPR}}, \beta)=0$, at least one of the thresholds should increase compared to the unbiased thresholds $\{{\theta}_a^{\texttt{TPR}},{\theta}_b^{\texttt{TPR}}\}$. In addition, as $\hat{f}^1_g(x)=f^1_g(x)$, and by the definition of \texttt{TPR}, the changes in the thresholds are aligned (i.e, either both decrease or both increase compared to the unbiased case). Therefore, $\hat{\theta}^{\texttt{TPR}}_g(\beta)\geq {\theta}^{\texttt{TPR}}_g$ for both groups.
    
    $\bullet$ For \texttt{FPR}: based on Table~\ref{t:fair-thds}, The firm picks the thresholds such that $h^\texttt{FPR}(\hat{\theta}_a^{\texttt{FPR}},\hat{\theta}_b^{\texttt{FPR}}, \beta)=0$, where \begin{align*}
        h^\texttt{FPR}({\theta}_a,{\theta}_b, \beta) : = & \tfrac{n_a(1-\alpha_a)}{1-\gamma_a(\theta_a)} + \tfrac{n_b(1-\beta\alpha_b)}{1-\beta\gamma_b(\theta_b)} \notag\\
        &- \frac{1}{1-\frac{u_-}{u_++u_-}}(n_a(1-\alpha_a)+n_b(1-\beta\alpha_b))
    \end{align*}
    Note that as $\gamma_g(x)$ are increasing functions under Assumption~\ref{ass:MLR}, $h^\texttt{FPR}$ is increasing in both thresholds. To identify its trend in $\beta$, 
    the derivative of $h^\texttt{FPR}$ with respect to $\beta$ is given by
    \begin{align*}
        \frac{\partial h^\texttt{FPR}}{\partial \beta} = &  n_b\frac{-\alpha_b(1-\beta\gamma_b(\theta_b))+\gamma_b(\theta_b)(1-\beta\alpha_b)}{(1-\beta\gamma_b(\theta_b))^2} + \frac{n_b\alpha_b}{1-\frac{u_-}{u_++u_-}}\\
        & = \alpha_b n_b \left(\frac{1}{1-\frac{u_-}{u_++u_-}}-\frac{1}{1-\beta\gamma_b(\theta_b)}\right) + \frac{n_b\gamma_b(\theta_b)(1-\beta\alpha_b)}{(1-\beta\gamma_b(\theta_b))^2} 
    \end{align*}
    As $h^\texttt{FPR}(\hat{\theta}^\texttt{FPR}_a, \hat{\theta}^\texttt{FPR}_b, \beta) = 0$, and $\gamma_b(x)\leq \gamma_a(x)$, we can conclude that $\frac{1}{1-\beta\gamma_b(\hat{\theta}^\texttt{FPR}_b)}\leq \frac{1}{1-\frac{u_-}{u_++u_-}} \leq \frac{1}{1-\gamma_a(\hat{\theta}^\texttt{FPR}_a)}$. This means that $\frac{\partial h^\texttt{FPR}}{\partial \beta}$ is increasing at the optimal biased thresholds for each $\beta\in(0,1)$. Therefore, $h^\texttt{FPR}({\theta}_a^{\texttt{FPR}},{\theta}_b^{\texttt{FPR}}, \beta)\leq 0$ as $\beta$ decreases from 1. That means that in order to attain $h^\texttt{FPR}(\hat{\theta}_a^{\texttt{FPR}},\hat{\theta}_b^{\texttt{FPR}}, \beta)=0$, at least one of the thresholds should increase compared to the unbiased thresholds $\{{\theta}_a^{\texttt{FPR}},{\theta}_b^{\texttt{FPR}}\}$. 
    In addition, from \eqref{eq:a-aafo-b} in the proof of Lemma~\ref{lemma:fair-opt}, we know that given that $\beta=1$, if $\theta^{\texttt{FPR}}_a$ drops, so should $\theta^{\texttt{FPR}}_b$ for the \texttt{FPR} constraint to continue to hold.
    So it must be that both thresholds increase. We conclude that both thresholds should increase, that is $\hat{\theta}^{\texttt{FPR}}_g(\beta)\geq {\theta}^{\texttt{FPR}}_g$ for both groups when $\beta$ drops from 1. 

Lastly, for all three constraints, applying the same argument to levels of qualification assessment biases $\beta^1>\beta^2$, we conclude that  $\hat{\theta}^{\texttt{f}}_g(\beta^2)>\hat{\theta}^{\texttt{f}}_g(\beta^1)$. That is, $\hat{\theta}^{\texttt{f}}_g(\beta)$ is decreasing in $\beta$.

\paragraph{Part (ii):} The \texttt{DP}-constrained thresholds when data is biased are selected such that $\int_{\hat{\theta}^{\texttt{DP}}_a}^\infty {f}_a(x)\mathrm{d}x=\int_{\hat{\theta}^{\texttt{DP}}_b}^\infty \hat{f}_b(x)\mathrm{d}x$ (i.e., based on the biased training data); since $\hat{f}_b(x)=f_b(x)$, this constraint is also satisfied on the unbiased data. That is, \texttt{DP} continues to hold (on the unbiased training data, as intended) at  $\{\hat{\theta}^{\texttt{DP}}_a,\hat{\theta}^{\texttt{DP}}_b\}$. 

Next, as $\hat{f}^1_g(x)= f^1_g(x)$ and  $\hat{f}^0_b(x)=\frac{1-\alpha_b}{1-\beta\alpha_b} \frac{1-\beta\gamma_b(x)}{1-\gamma_b(x)} f^0_b(x)$, the thresholds satisfying \texttt{TPR} on the biased data also satisfy \texttt{TPR} on the unbiased data, while the same is not true for \texttt{FPR}.

\paragraph{Part (iii):} We note that by the previous part, the new thresholds $\{\hat{\theta}^{\texttt{f}}_a(\beta), \hat{\theta}^{\texttt{f}}_b(\beta)\}$ continue to satisfy the \texttt{DP} and \texttt{TPR} constraints. That means that these constraints were also feasible choices in the unbiased scenario. Therefore, $U(\hat{\theta}^{\texttt{f}}_a(\beta), \hat{\theta}^{\texttt{f}}_b(\beta))\leq U({\theta}^{\texttt{f}}_a, {\theta}^{\texttt{f}}_b)$ as $\{{\theta}^{\texttt{f}}_a, {\theta}^{\texttt{f}}_b\}$ are the expected utility maximizers among all feasible threshold choices in the unconstrained case.

\paragraph{Part (iv):} We prove this part by construction. An example of utility drop under FPR is detailed below. An example of utility increase under FPR is given Figure~\ref{fig:fico-thresh-select-util} in our numerical experiments; see Appendix~\ref{app:extra-fico}.

\subsubsection{Case where the firm's utility decreases under \texttt{FPR}}\label{app:synthetic-label-bias-FPR-decrease}
The group ratios and qualification rates are set as follows: $n_a$ = 0.5, $n_b$ = 0.5, $\alpha_a$ = 0.5, and $\alpha_b$ = 0.4. The feature $x$ of each agent is sampled from Gaussian with $\mu$ = 70 and $\sigma$ = 10 if qualified, and from Gaussian with $\mu$ = 50 and $\sigma$ = 10 if unqualified. A total of 100000 examples are sampled. $\frac{u_-}{u_+}$ is set to 10.

\begin{figure}[h]
\centering
\begin{subfigure}[t]{0.45\textwidth}
\centering
\includegraphics[width=\linewidth]{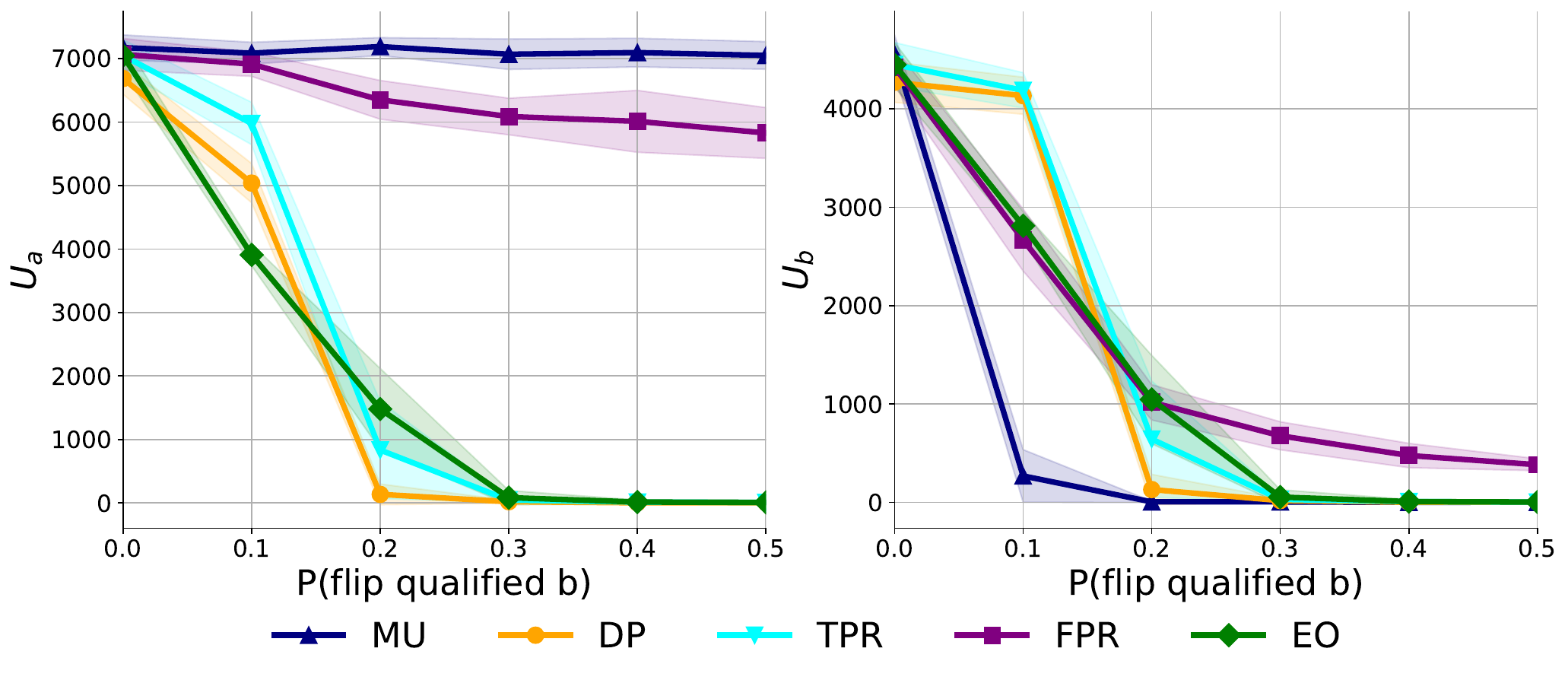}
\caption{Group utility.}
\label{fig:synthetic-FPR-util-drop-groupwise}
\end{subfigure}
\begin{subfigure}[t]{0.45\textwidth}
\centering
\includegraphics[width=\linewidth]{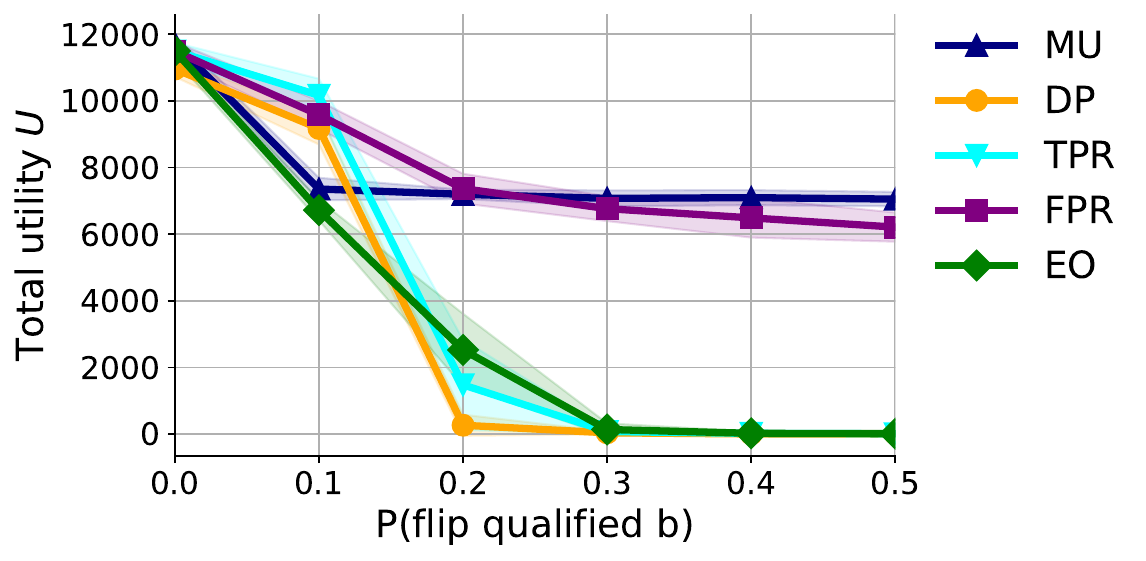}
\caption{Total utility.}
\label{fig:synthetic-FPR-util-drop-total}
\end{subfigure}
\caption{Group utility and total utility in a case where the firm's utility decreases under \texttt{FPR}. Results are from 20 runs.}
\label{fig:synthetic-FPR-util-drop-all}
\end{figure}
\end{proof}

\subsection{Why can the firm's utility increase under \texttt{FPR} relative to \texttt{MU} when data is biased?}
To provide additional intuition for the increase of the firm's utility under the \texttt{FPR} constraint when data is biased, note that the firm's expected utility on group $g$ can be written as
\[U_g(\theta) = \int_{\theta}^\infty [\gamma_g(x)(u_++u_-)-u_-]f_g(x)\mathrm{d}x~.\]
Therefore, if $\gamma_g(x)\geq \frac{u_-}{u_++u_-}$, the utility is decreasing in the threshold, and vice versa. 

From the proof of part (i) of Proposition~\ref{prop:f-bias-gamma}, we know that $\frac{1}{1-\beta\gamma_b(\hat{\theta}^\texttt{FPR}_b)}\leq \frac{1}{1-\frac{u_-}{u_++u_-}} \leq \frac{1}{1-\gamma_a(\hat{\theta}^\texttt{FPR}_a)}$. Therefore, $\gamma_a(\hat{\theta}^\texttt{FPR}_a)\geq \frac{u_-}{u_++u_-}$, and as the threshold on group $a$ increases given the increase in $\hat{\theta}^\texttt{FPR}_a$ with increasing noise, the utility that the firm derives from group $a$ drops. On the other hand, given  $\frac{1}{1-\beta\gamma_b(\hat{\theta}^\texttt{FPR}_b)}\leq \frac{1}{1-\frac{u_-}{u_++u_-}}$, we know that $\beta\gamma_b(\hat{\theta}^\texttt{FPR}_b)\leq \frac{u_-}{u_++u_-}$. However, for sufficiently small $\beta$, it might be that the above holds, while $\gamma_b(\hat{\theta}^\texttt{FPR}_b)\geq \frac{u_-}{u_++u_-}$. Therefore, the utility from group $b$ may increase at large $\beta$ (in a way that the firm's overall utility increases), and may then drop at sufficiently small $\beta$ (so that the firm's overall utility decreases).

\subsection{Sensitivity of \texttt{DP/TPR} thresholds to qualification assessment biases}\label{app:f-bias-gamma-sensitivity}

\begin{proposition}
\label{prop:f-bias-gamma-sensitivity}
Consider the same setting as Proposition~\ref{prop:f-bias-gamma}. Then, the rate of change of group $b$'s thresholds at $\beta=1$ is given by
\begin{align*}
    \tfrac{\partial \hat{\theta}^{\texttt{DP}}_b(\beta)}{\partial \beta}|{_{\beta=1}} &= - \tfrac{1}{\frac{n_a}{n_b} \frac{{f}_b({\theta}^\texttt{DP}_b)}{f_a({\theta}^\texttt{DP}_a)}\frac{\gamma'_a({\theta}_a^{\texttt{DP}})}{\gamma_b({\theta}_b^{\texttt{DP}})} +  \frac{\gamma'_b({\theta}_b^{\texttt{DP}})}{\gamma_b({\theta}_b^{\texttt{DP}})}}, \notag\\
    \tfrac{\partial \hat{\theta}^{\texttt{TPR}}_b(\beta)}{\partial \beta}|{_{\beta=1}} &= -\tfrac{1 + \frac{u_+}{u_-}}{\frac{n_a}{n_b}\frac{\alpha_a}{\alpha_b}\frac{f^1_b({\theta}_b^{\texttt{TPR}})}{f^1_a({\theta}_a^{\texttt{TPR}})}\frac{\gamma'_a({\theta}_a^{\texttt{TPR}})}{(\gamma_a({\theta}_a^{\texttt{TPR}}))^2} + \frac{\gamma'_b({\theta}_b^{\texttt{TPR}})}{(\gamma_b({\theta}_b^{\texttt{TPR}}))^2}}~.
\end{align*}
Further, the rate of change of group $a$'s thresholds at $\beta=1$ is given by
\begin{align*}
    \tfrac{\partial \hat{\theta}^{\texttt{DP}}_a(\beta)}{\partial \beta}|{_{\beta=1}} &= \tfrac{f_b({\theta}_b^{\texttt{DP}})}{f_a({\theta}_a^{\texttt{DP}})} \tfrac{\partial \hat{\theta}^{\texttt{DP}}_b(\beta)}{\partial \beta}|{_{\beta=1}}, \notag\\   \tfrac{\partial \hat{\theta}^{\texttt{TPR}}_a(\beta)}{\partial \beta}|{_{\beta=1}} &= \tfrac{f^1_b({\theta}_b^{\texttt{TPR}})}{f^1_a({\theta}_a^{\texttt{TPR}})} \tfrac{\partial \hat{\theta}^{\texttt{TPR}}_b(\beta)}{\partial \beta}|{_{\beta=1}}~.
\end{align*}
\end{proposition}

As consistent with Proposition~\ref{prop:f-bias-gamma}, this proposition shows that the thresholds increase relative to the unbiased case when qualification assessments become biased (i.e., $\beta$ decreases from 1).\footnote{We note that our proof characterizes the sensitivities at \emph{all} bias rates $\beta$; we highlight the sensitivity at $\beta\rightarrow 1$ to provide intuition for low levels of bias.} The proposition also allows us to assess these sensitivities based on other problem parameters:

$\bullet$ The \texttt{DP} fairness constraint only evaluates the selection rates from each group (regardless of qualifications). Therefore, we see that the increase in group $b$'s threshold is sharper if $\frac{n_a}{n_b}$ is smaller (i.e., the representation of group $b$ increases). This is consistent with the following intuition: the impacts of biases are more pronounced when the effect of this group on the firm's utility increases. 

$\bullet$ For the \texttt{TPR} fairness constraint on the other hand, the increase in group $b$'s threshold is impacted by additional problem parameters: it is higher if $\frac{n_a}{n_b}$ or $\frac{\alpha_a}{\alpha_b}$ is smaller (i.e., the representation or qualification rate of group $b$ is high), or if $\frac{u_+}{u_-}$ is larger (i.e., the firm's benefit/loss increases/decreases). The additional impact of qualification rates and firm's benefit/loss is due to the fact that unlike \texttt{DP}, \texttt{TPR} accounts for the qualification of those selected. Therefore, if qualification rates of group $b$ or benefits from accepting qualified individuals are high, the firm will have to make sharper adjustments to the threshold on group $b$ if it perceives they are less qualified. 

\begin{proof}
We start with \texttt{DP}. First, note that the thresholds satisfy the DP constraint
\[\int_{\hat{\theta}^{\texttt{DP}}_a}^\infty {f}_a(x)\mathrm{d}x=\int_{\hat{\theta}^{\texttt{DP}}_b}^\infty \hat{f}_b(x)\mathrm{d}x = \int_{\hat{\theta}^{\texttt{DP}}_b}^\infty {f}_b(x)\mathrm{d}x~.\]
We can use this constraint to express $\hat{\theta}^\texttt{DP}_a$ as a function of $\hat{\theta}^\texttt{DP}_b$. In particular, following steps similar to those of the proof of Lemma~\ref{lemma:fair-opt}, we will get that
\[\frac{\partial \hat{\theta}^\texttt{DP}_a}{\partial \hat{\theta}^\texttt{DP}_b} = \frac{{f}_b(\hat{\theta}^\texttt{DP}_b)}{f_a(\hat{\theta}^\texttt{DP}_a)}\]

We next note that the firm picks the thresholds such that  \[h^\texttt{DP}(\hat{\theta}_a^{\texttt{DP}},\hat{\theta}_b^{\texttt{DP}}, \beta) = n_a\gamma_a(\hat{\theta}_a^{\texttt{DP}}) + n_b \beta\gamma_b(\hat{\theta}_b^{\texttt{DP}}) - \frac{u_-}{u_++u_-} = 0~.\]
We now invoke the implicit function theorem. First, note that $h^\texttt{DP}({\theta}_a,{\theta}_b, \beta) : = n_a\gamma_a({\theta}_a) + n_b \beta\gamma_b({\theta}_b) - \frac{u_-}{u_++u_-}$ is a continuously differentiable function, with $h^\texttt{DP}(\hat{\theta}_a^{\texttt{DP}},\hat{\theta}_b^{\texttt{DP}}, \beta)=0$ . Further, 
\[\frac{\partial h^\texttt{DP}(\hat{\theta}_a^{\texttt{DP}},\hat{\theta}_b^{\texttt{DP}}, \beta)}{\partial \hat{\theta}_b} = n_a\gamma'_a(\hat{\theta}_a^{\texttt{DP}}) \frac{{f}_b(\hat{\theta}^\texttt{DP}_b)}{f_a(\hat{\theta}^\texttt{DP}_a)} + n_b \beta\gamma'_b(\hat{\theta}_b^{\texttt{DP}})~,\]
where $\gamma'_g(\cdot)$ is the first derivative of $\gamma_g(\cdot)$ and is non-zero as $\gamma_g(\cdot)$ is a strictly increasing function by Assumption~\ref{ass:MLR}. Therefore, this derivative is non-zero (hence, invertible). 

Therefore, by the implicit function theorem, there is an open interval $U$ containing $\beta$, such that within this interval 
\[\frac{\partial \hat{\theta}^\texttt{DP}_b(\beta)}{\partial \beta}
= - \frac{\frac{\partial h^\texttt{DP}(\hat{\theta}_a^{\texttt{DP}},\hat{\theta}_b^{\texttt{DP}}, \beta)}{\partial \beta}}{\frac{\partial h^\texttt{DP}(\hat{\theta}_a^{\texttt{DP}},\hat{\theta}_b^{\texttt{DP}}, \beta)}{\partial \hat{\theta}_b}}
= -\frac{n_b \gamma_b(\hat{\theta}_b^{\texttt{DP}})}{n_a\gamma'_a(\hat{\theta}_a^{\texttt{DP}}) \frac{{f}_b(\hat{\theta}^\texttt{DP}_b)}{f_a(\hat{\theta}^\texttt{DP}_a)} + n_b \beta\gamma'_b(\hat{\theta}_b^{\texttt{DP}})}~.\]
The above holds for all $\beta$. Stating it for $\beta=1$, we get
\begin{align*}
    \frac{\partial \hat{\theta}^{\texttt{DP}}_b(\beta)}{\partial \beta}|{_{\beta=1}} &= - \frac{\gamma_b({\theta}_b^{\texttt{DP}})}{\frac{n_a}{n_b}\gamma'_a({\theta}_a^{\texttt{DP}}) \frac{{f}_b({\theta}^\texttt{DP}_b)}{f_a({\theta}^\texttt{DP}_a)} +  \gamma'_b({\theta}_b^{\texttt{DP}})}
\end{align*}

The proof for the case of \texttt{TPR} is similar. In particular, following steps similar to those of the proof of Lemma~\ref{lemma:fair-opt}, we will get that
\[\frac{\partial \hat{\theta}^\texttt{TPR}_a}{\partial \hat{\theta}^\texttt{TPR}_b} = \frac{{f}^1_b(\hat{\theta}^\texttt{TPR}_b)}{f^1_a(\hat{\theta}^\texttt{TPR}_a)}~.\]
Further, the function determining the \texttt{TPR}-constrained thresholds is given by
\[h^\texttt{TPR}({\theta}_a,{\theta}_b, \beta) : = \frac{n_a\alpha_a}{\gamma_a(\theta_a^\texttt{TPR})} + \frac{n_b\alpha_b}{\gamma_b(\theta_b^\texttt{TPR})} - \tfrac{1}{\frac{u_-}{u_++u_-}}(n_a\alpha_a+n_b\beta\alpha_b)~.\]
Invoking the implicit function theorem for this function, we get
\begin{align*}
\frac{\partial \hat{\theta}^\texttt{TPR}_b(\beta)}{\partial \beta}
&= - \frac{\frac{\partial h^\texttt{TPR}(\hat{\theta}_a^{\texttt{TPR}},\hat{\theta}_b^{\texttt{TPR}}, \beta)}{\partial \beta}}{\frac{\partial h^\texttt{TPR}(\hat{\theta}_a^{\texttt{TPR}},\hat{\theta}_b^{\texttt{TPR}}, \beta)}{\partial \hat{\theta}_b}} \\
&= - \frac{(1+\frac{u_+}{u_-})n_b\alpha_b}{n_a\alpha_a\frac{\gamma'_a(\hat{\theta}_a^{\texttt{TPR}})}{(\gamma_a(\hat{\theta}_a^{\texttt{TPR}}))^2}\frac{f^1_b(\hat{\theta}_b^{\texttt{TPR}})}{f^1_a(\hat{\theta}_a^{\texttt{TPR}})} + n_b\alpha_b \frac{\gamma'_b(\hat{\theta}_b^{\texttt{TPR}})}{(\gamma_b(\hat{\theta}_b^{\texttt{TPR}}))^2}}~.
\end{align*}
For $\beta=1$, this reduces to
\begin{align*}
    \frac{\partial \hat{\theta}^{\texttt{TPR}}_b(\beta)}{\partial \beta}|{_{\beta=1}} &= -\frac{1 + \frac{u_+}{u_-}}{\frac{n_a}{n_b}\frac{\alpha_a}{\alpha_b}\frac{f^1_b({\theta}_b^{\texttt{TPR}})}{f^1_a({\theta}_a^{\texttt{TPR}})}\frac{\gamma'_a({\theta}_a^{\texttt{TPR}})}{(\gamma_a({\theta}_a^{\texttt{TPR}}))^2} + \frac{\gamma'_b({\theta}_b^{\texttt{TPR}})}{(\gamma_b({\theta}_b^{\texttt{TPR}}))^2}}
\end{align*}

\end{proof}

\subsection{The impacts of labeling biases in the advantaged group}\label{app:gamma-biases-advantaged}

\begin{proposition}
[Impact of qualification assessment biases in the advantaged group on \texttt{DP/TPR/FPR} thresholds]\label{prop:f-bias-gamma-a}
Assume the qualification profile of group $a$ is overestimated so that $\hat{\gamma}_a(x)=(1-\beta){\gamma_a(x)} + \beta$, with $\beta\in[0,1)$. Let ${\theta}^{\texttt{f}}_g$ and $\hat{\theta}^{\texttt{f}}_g(\beta)$ denote the optimal decision thresholds satisfying fairness constraint $\texttt{f}\in\{\texttt{DP}, \texttt{TPR}, \texttt{FPR}\}$, obtained from unbiased data and from data with biases on group $a$ given $\beta$, respectively. Then,

~~ (i) $\hat{\theta}^{\texttt{f}}_g(\beta)\geq {\theta}^{\texttt{f}}_g$  for $g\in\{a,b\}, ~\texttt{f}\in\{\texttt{DP}, \texttt{TPR}, \texttt{FPR}\}, ~ \beta\in(0,1]$. Further, $\hat{\theta}^{\texttt{f}}_g(\beta)$ is decreasing in $\beta$. 
    
~~ (ii) The \texttt{DP} and \texttt{FPR}  criteria continue to be met, while \texttt{TPR} is violated, at their $\{\hat{\theta}^{\texttt{f}}_a(\beta), \hat{\theta}^{\texttt{f}}_b(\beta)\}$. 
\end{proposition}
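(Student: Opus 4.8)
The plan is to mirror the proof of Proposition~\ref{prop:f-bias-gamma}, replacing the underestimation of the disadvantaged group by the overestimation $\hat\gamma_a(x)=(1-\beta)\gamma_a(x)+\beta$ of the advantaged group $a$. The first step is to translate this bias into the induced changes in the remaining primitives. Reading $\hat\gamma_a$ as flipping each label-$0$ agent of group $a$ to label $1$ with probability $\beta$, the overall feature distribution is untouched, so $\hat f_a(x)=f_a(x)$. Integrating $\hat\gamma_a$ against $f_a$ gives $\hat\alpha_a=(1-\beta)\alpha_a+\beta$, and substituting into $f^y_a(x)=\mathbb P(Y=y\mid X=x,G=a)\,f_a(x)/\mathbb P(Y=y\mid G=a)$ I would use the two cancellations $1-\hat\gamma_a(x)=(1-\beta)(1-\gamma_a(x))$ and $1-\hat\alpha_a=(1-\beta)(1-\alpha_a)$ to obtain $\hat f^0_a(x)=f^0_a(x)$ (unchanged) while $\hat f^1_a(x)=\frac{(1-\beta)\gamma_a(x)+\beta}{(1-\beta)\alpha_a+\beta}f_a(x)$ is perturbed. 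This is the exact mirror of Proposition~\ref{prop:f-bias-gamma}, with the roles of $f^1$ and $f^0$ interchanged.

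Part (ii) then follows by the same logic as part (ii) of Proposition~\ref{prop:f-bias-gamma}. The measure $\mathcal C^{\texttt{DP}}_a$ depends on group $a$ only through $f_a$, which is unchanged, so any threshold pair equalizing selection rates on the biased data also equalizes them on the unbiased data, and \texttt{DP} remains satisfied. Likewise $\mathcal C^{\texttt{FPR}}_a$ depends only on $f^0_a$, which is unchanged, so \texttt{FPR} remains satisfied. By contrast $\mathcal C^{\texttt{TPR}}_a$ depends on $f^1_a$, which is perturbed, so thresholds calibrated to equalize true positive rates on the biased data do not do so on the unbiased data, and \texttt{TPR} is in general violated. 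Combined with Proposition~\ref{prop:f-bias-gamma}, this shows that \texttt{DP} is robust to \emph{both} directions of qualification-assessment bias.

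For part (i), I would substitute the biased primitives into the characterizations of Table~\ref{t:fair-thds-2} to define an implicit function $h^{\texttt{f}}(\theta_a,\theta_b,\beta)$ whose zero locates the biased thresholds, exactly as in Proposition~\ref{prop:f-bias-gamma}; for instance $h^{\texttt{DP}}(\theta_a,\theta_b,\beta)=n_a[(1-\beta)\gamma_a(\theta_a)+\beta]+n_b\gamma_b(\theta_b)-\tfrac{u_-}{u_++u_-}$, and the \texttt{FPR} analogue carries $\beta$ both in a coefficient and in the additive constant. Assumption~\ref{ass:MLR} makes each $h^{\texttt{f}}$ strictly monotone in both thresholds, and for the robust constraints the unchanged constraint couples the two thresholds through the positive factor $\partial\hat\theta_a/\partial\hat\theta_b$ of \eqref{eq:a-aafo-b}, forcing them to move together. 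A sign analysis of $h^{\texttt{f}}$ at the unbiased thresholds $\{\theta^{\texttt{f}}_a,\theta^{\texttt{f}}_b\}$ then fixes the common direction of adjustment needed to restore $h^{\texttt{f}}=0$ once $\beta$ moves off its unbiased value, which yields $\hat\theta^{\texttt{f}}_g(\beta)\geq\theta^{\texttt{f}}_g$; comparing two bias levels $\beta^1,\beta^2$ by the same argument gives the monotonicity of $\hat\theta^{\texttt{f}}_g(\beta)$ in $\beta$, and the implicit function theorem (as in Proposition~\ref{prop:f-bias-gamma-sensitivity}) makes the rate quantitative if desired.

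The main obstacle is the sign step of part (i), made uniform across the three constraints. For \texttt{DP} and \texttt{FPR} the constraint is preserved, so the coupling of \eqref{eq:a-aafo-b} ties both thresholds to a single scalar and the monotonicity of $h^{\texttt{f}}$ in the thresholds and in $\beta$ settles the direction directly. The \texttt{TPR} case is more delicate because its constraint is itself perturbed through $\hat f^1_a$, so the thresholds no longer move along a fixed curve; here I would combine the \texttt{TPR} first-order condition with the stochastic-dominance relation between $\hat F^1_a$ and $F^1_a$ induced by the perturbation of $\hat f^1_a$ under Assumption~\ref{ass:MLR}, exactly the mechanism of Proposition~\ref{prop:f-bias-feature}, and check that it drives both \texttt{TPR} thresholds in the asserted direction. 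Ensuring that the coefficient-and-constant $\beta$-dependence of $h^{\texttt{FPR}}$ does not reverse the sign, and that the perturbed \texttt{TPR} constraint still yields the claimed monotonicity, are the points that require the most care.
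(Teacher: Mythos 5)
Your preliminaries and part (ii) are essentially the paper's proof verbatim: the same label-flipping interpretation, $\hat f_a(x)=f_a(x)$, $\hat\alpha_a=(1-\beta)\alpha_a+\beta$, $\hat f^0_a(x)=f^0_a(x)$ via the cancellations $1-\hat\gamma_a(x)=(1-\beta)(1-\gamma_a(x))$ and $1-\hat\alpha_a=(1-\beta)(1-\alpha_a)$, and a perturbed $\hat f^1_a$ (your expression agrees with the paper's $\hat{f}^1_a(x)=\frac{\alpha_a}{(1-\beta)\alpha_a+\beta}\,\frac{(1-\beta)\gamma_a(x)+\beta}{\gamma_a(x)}\,f^1_a(x)$ after substituting $f^1_a(x)=\gamma_a(x)f_a(x)/\alpha_a$), followed by the observation that \texttt{DP} and \texttt{FPR} consult only the unchanged statistics while \texttt{TPR} consults $\hat f^1_a$. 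Your part (i) treatment of \texttt{DP} and \texttt{FPR} — monotone implicit functions $h^{\texttt{f}}$ built from Table~\ref{t:fair-thds-2}, a sign analysis at the unbiased thresholds, and alignment of the two thresholds through the unchanged constraint via \eqref{eq:a-aafo-b} — is also the paper's route. (One minor correction: after the cancellations above, the group-$a$ ratio in $h^{\texttt{FPR}}$ is unchanged, so $\beta$ survives only in the additive constant, not ``both in a coefficient and in the additive constant.'')

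The genuine gap is your \texttt{TPR} branch of part (i). You propose to settle the direction by the stochastic-dominance mechanism of Proposition~\ref{prop:f-bias-feature}, but the dominance induced here points the unhelpful way: $\hat f^1_a(x)/f^1_a(x)\propto(1-\beta)+\beta/\gamma_a(x)$ is \emph{decreasing} in $x$ (the flipped unqualified agents inject low-score mass into the qualified pool), so $\hat F^1_a(x)\geq F^1_a(x)$ — the opposite of the hypothesis $\hat F^1<F^1$ that drives parts (i)--(iii) of that proposition, i.e., exactly the ambiguous regime of its part (iv) in which thresholds can move either way. So invoking that mechanism fixes neither the sign nor the alignment step, since dominance in this direction does not force $\hat\theta^{\texttt{TPR}}_a$ to track $\hat\theta^{\texttt{TPR}}_b$ relative to the unbiased pair. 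The paper's proof does something different here, and it is the one non-routine move in the whole argument: it writes $h^{\texttt{TPR}}(\theta_a,\theta_b,\beta)$ with $\beta$ appearing in several places with competing signs (so monotonicity in $\beta$ is not free), computes $\partial h^{\texttt{TPR}}/\partial\beta$ explicitly, and then resolves its sign \emph{at the biased optimum} by reading the first-order condition $h^{\texttt{TPR}}=0$ as a weighted-average identity, which together with $\gamma_b(x)\leq\gamma_a(x)$ sandwiches $\frac{u_-}{u_++u_-}$ between $(1-\beta)\gamma_a(\hat\theta^{\texttt{TPR}}_a)+\beta$ and $\gamma_b(\hat\theta^{\texttt{TPR}}_b)$; alignment then follows from co-movement of the thresholds along the biased constraint curve via \eqref{eq:a-aafo-b}, not from stochastic dominance. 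This evaluation-at-the-optimum argument (the analogue of the one the paper also uses for \texttt{FPR} in Proposition~\ref{prop:f-bias-gamma}) is absent from your plan, and without it, or some substitute, the \texttt{TPR} directional claim in part (i) remains unproven.
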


\begin{proof}
We begin with some preliminaries, determining the impacts of $\hat{\gamma}_a(x)=(1-\beta)\gamma_a(x) + \beta$ on other estimates of the underlying population characteristics. To do so, let $f_g(x):=\mathbb{P}(X=x|G=g) = \alpha_g f_g^1(x)+(1-\alpha_g)f^0_g(x)$ be the feature distribution across all agents (qualified and unqualified) from group $g$. Similar to the proof of Proposition~\ref{prop:f-bias-gamma}, we can conclude that $\hat{f}_g(x)=f_g(x)$, yet $\hat{\alpha}_g$ and $\hat{f}^y_g(x)$ can still be impacted, and that by definition of $\alpha_g$, $\hat{\gamma}_a(x)=(1-\beta)\gamma_a(x) + \beta, \forall x$, we have $\hat{\alpha}_a = (1-\beta)\alpha_a + \beta$. 

Now, by definition and the Bayes' rule, we again obtain
\begin{align*}
    f_g^1(x) &= \frac{\gamma_g(x)f_g(x)}{\alpha_g}~.\\
    f_g^0(x) &= \frac{(1-\gamma_g(x))f_g(x)}{(1-\alpha_g)}~.
\end{align*}
Noting that now $\hat{\alpha}_g=(1-\beta)\alpha_g + \beta$, combined with the above relations, we conclude that $\hat{f}^0_g(x)=f^0_g(x)$, while $\hat{f}^1_g(x)=\frac{\alpha_g}{(1-\beta)\alpha_g+\beta} \frac{(1-\beta)\gamma_g(x)+\beta}{\gamma_g(x)} f^1_g(x)$. 

Intuitively, this is again expected: $\hat{\gamma}_a(x)=(1-\beta) \gamma_a(x) + \beta$ can be viewed as flipping label 0 to label 1 in the training data with probability $\beta$. To see why, note that we have
\begin{align*}
    \hat{\gamma}_g(x) &= \mathbb{P}(\hat{Y}=1|G=g, X=x)\\
    & = \mathbb{P}(\hat{Y}=1|G=g, X=x, Y=1)\mathbb{P}({Y}=1|G=g, X=x) \\
    & ~~ + \mathbb{P}(\hat{Y}=1|G=g, X=x, Y=0)\mathbb{P}({Y}=0|G=g, X=x) \\
    & = 1\cdot \gamma_g(x) + \beta \cdot (1-\gamma_g(x)) = (1-\beta) \gamma_g(x) + \beta~.
\end{align*}
This type of change in the labels from 0 to 1 leaves the feature distribution of unqualified agents unchanged (as the flipping probability is independent of the feature $x$), whereas it adds (incorrect) data to the feature distribution of qualified agents, hence biasing $f^1_a(x)$. 

We now proceed with the proof of the proposition. 

\emph{Part (i):} 

    $\bullet$ For \texttt{DP}, the argument is identical to that of Proposition~\ref{prop:f-bias-gamma}, since $\hat{f}_g(x)=f_g(x)$ for both groups.  
    
    $\bullet$ For \texttt{FPR}: based on Table~\ref{t:fair-thds}, The firm picks the thresholds such that $h^\texttt{FPR}(\hat{\theta}_a^{\texttt{FPR}},\hat{\theta}_b^{\texttt{FPR}}, \beta)=0$, where \begin{align*}
        h^\texttt{FPR}({\theta}_a,{\theta}_b, \beta) : = & \tfrac{n_a(1-\alpha_a)}{1-\gamma_a(\theta_a)} + \tfrac{n_b(1-\alpha_b)}{1-\gamma_b(\theta_b)} - \frac{1}{1-\frac{u_-}{u_++u_-}}(n_a((1-\beta)\alpha_a+\beta)+n_b(1-\alpha_b))
    \end{align*}
    Note that as $\gamma_g(x)$ are increasing functions under Assumption~\ref{ass:MLR}, $h^\texttt{FPR}$ is increasing in both thresholds. It is also decreasing in $\beta$. 
    
    For $\beta\in(0,1)$,  $h^\texttt{FPR}({\theta}_a^{\texttt{FPR}},{\theta}_b^{\texttt{FPR}}, \beta)> 0$. Therefore, to attain  $h^\texttt{FPR}(\hat{\theta}_a^{\texttt{TPR}},\hat{\theta}_b^{\texttt{FPR}}, \beta)=0$, at least one of the thresholds should increase compared to the unbiased thresholds $\{{\theta}_a^{\texttt{TPR}},{\theta}_b^{\texttt{TPR}}\}$. In addition, as $\hat{f}^0_g(x)=f^0_g(x)$, and by the definition of \texttt{FPR}, the changes in the thresholds are aligned (i.e, either both decrease or both increase compared to the unbiased case). Therefore, $\hat{\theta}^{\texttt{FPR}}_g(\beta)\geq {\theta}^{\texttt{FPR}}_g$ for both groups.

    $\bullet$ For \texttt{TPR}: from Table~\ref{t:fair-thds}, The firm picks the thresholds such that $h^\texttt{TPR}(\hat{\theta}_a^{\texttt{TPR}},\hat{\theta}_b^{\texttt{TPR}}, \beta)=0$, where \[h^\texttt{TPR}({\theta}_a,{\theta}_b, \beta) : = \frac{n_a((1-\beta)\alpha_a+\beta)}{(1-\beta)\gamma_a(\theta_a)+\beta} + \frac{n_b\alpha_b}{\gamma_b(\theta_b)} - \tfrac{1}{\frac{u_-}{u_++u_-}}(n_a((1-\beta)\alpha_a+\beta)+n_b\alpha_b)~.\]
    
        Note that as $\gamma_g(x)$ are increasing functions under Assumption~\ref{ass:MLR}, $h^\texttt{TPR}$ is decreasing in both thresholds. To identify its trend in $\beta$, 
    the derivative of $h^\texttt{TPR}$ with respect to $\beta$ is given by
    \begin{align*}
        \frac{\partial h^\texttt{TPR}}{\partial \beta} = &  n_a\frac{(1-\alpha_a)((1-\beta)\gamma_a(\theta_a)+\beta)-(1-\gamma_a(\theta_a))((1-\beta)\alpha_a+\beta)}{((1-\beta)\gamma_a(\theta_a)+\beta)^2} - \frac{n_a(1-\alpha_a)}{\frac{u_-}{u_++u_-}}\\
        & = n_a(1-\alpha_a) \left(\frac{1}{(1-\beta)\gamma_a(\theta_a)+\beta}-\frac{1}{\frac{u_-}{u_++u_-}}\right) + \frac{n_a(1-\gamma_a(\theta_a))((1-\beta)\alpha_a+\beta)}{((1-\beta)\gamma_a(\theta_a)+\beta)^2}
    \end{align*}
    As $h^\texttt{TPR}(\hat{\theta}^\texttt{TPR}_a, \hat{\theta}^\texttt{TPR}_b, \beta) = 0$, and $\gamma_b(x)\leq \gamma_a(x)$, we can conclude that $\frac{1}{\gamma_b(\hat{\theta}^\texttt{FPR}_b)}\leq \frac{1}{\frac{u_-}{u_++u_-}} \leq \frac{1}{(1-\beta)\gamma_a(\hat{\theta}^\texttt{FPR}_a)+\beta}$. This means that $\frac{\partial h^\texttt{FPR}}{\partial \beta}$ is increasing at the optimal biased thresholds for each $\beta\in(0,1)$. Therefore, $h^\texttt{TPR}({\theta}_a^{\texttt{TPR}},{\theta}_b^{\texttt{TPR}}, \beta)\leq 0$ as $\beta$ decreases from 1. That means that in order to attain $h^\texttt{TPR}(\hat{\theta}_a^{\texttt{TPR}},\hat{\theta}_b^{\texttt{TPR}}, \beta)=0$, at least one of the thresholds should increase compared to the unbiased thresholds $\{{\theta}_a^{\texttt{TPR}},{\theta}_b^{\texttt{TPR}}\}$. 
    In addition, from \eqref{eq:a-aafo-b} in the proof of Lemma~\ref{lemma:fair-opt}, we know that given that $\beta=1$, if $\theta^{\texttt{TPR}}_a$ drops, so should $\theta^{\texttt{TPR}}_b$ for the \texttt{TPR} constraint to continue to hold. 
    So it must be that both thresholds increase. We conclude that both thresholds should increase, that is $\hat{\theta}^{\texttt{TPR}}_g(\beta)\geq {\theta}^{\texttt{TPR}}_g$ for both groups when $\beta$ drops from 1.

Lastly, for all three constraints, applying the same argument to levels of qualification assessment biases $\beta^1>\beta^2$, we conclude that  $\hat{\theta}^{\texttt{f}}_g(\beta^2)>\hat{\theta}^{\texttt{f}}_g(\beta^1)$. That is, $\hat{\theta}^{\texttt{f}}_g(\beta)$ is decreasing in $\beta$.

\paragraph{Part (ii):} The \texttt{DP}-constrained thresholds when data is biased are selected such that $\int_{\hat{\theta}^{\texttt{DP}}_a}^\infty {f}_a(x)\mathrm{d}x=\int_{\hat{\theta}^{\texttt{DP}}_b}^\infty \hat{f}_b(x)\mathrm{d}x$ (i.e., based on the biased training data); since $\hat{f}_a(x)=f_a(x)$, this constraint is also satisfied on the unbiased data. That is, \texttt{DP} continues to hold (on the unbiased training data, as intended) at  $\{\hat{\theta}^{\texttt{DP}}_a,\hat{\theta}^{\texttt{DP}}_b\}$. 

Next, as $\hat{f}^0_g(x)= f^0_g(x)$ and  $\hat{f}^1_a(x)=\frac{\alpha_a}{(1-\beta)\alpha_a+\beta} \frac{(1-\beta)\gamma_a(x)+\beta}{\gamma_a(x)} f^1_b(x)$, the thresholds satisfying \texttt{FPR} on the biased data also satisfy \texttt{FPR} on the unbiased data (since it uses $\hat{f}^0_g$ in its assessments), while the same is not true for \texttt{TPR} (since it uses $\hat{f}^1_g$ in its assessments).  
\end{proof}

\subsection{Proof of Proposition~\ref{prop:f-bias-feature}}
\begin{proof}
Define $l_g(x):=\frac{f^1_g(x)}{f^0_g(x)}$. First, we note that when $\hat{l}_b(x)=\beta(x)l_b(x)$, this bias does not affect the labels, and therefore $\hat{\alpha}_g=\alpha_g$. We now proceed with the proof of the proposition. 

\emph{Part (i):} This part follows directly from the definition of the \texttt{TPR}/\texttt{FPR} fairness constraints. In words, all pairs of decision thresholds that satisfy the desired fairness notion remain valid as the problem primitive used to identify these thresholds is assumed to have remained unaffected by the feature measurement error. 

\emph{Part (ii):} For \texttt{TPR}: The firm picks the thresholds such that $h^\texttt{TPR}(\hat{\theta}_a^{\texttt{TPR}},\hat{\theta}_b^{\texttt{TPR}}, \beta(\hat{\theta}_b^{\texttt{TPR}}))=0$, where
\begin{align*}
    &h^\texttt{TPR}({\theta}_a,{\theta}_b, \beta(\theta_b)) \\
    &= \frac{n_a(1-\alpha_a)}{{l}_a(\theta_a)} + \frac{n_b(1-\alpha_b)}{\beta(\theta_b){l}_b(\theta_b)}-\frac{u_+}{u_-}(n_a\alpha_a+n_b\alpha_b)~.
\end{align*}
First, we note that by Assumption~\ref{ass:MLR}, the functions $l_g(x)$ are increasing. In addition, $\beta(x)$ is non-decreasing in $x$. As a result, $h^\texttt{FPR}$ is decreasing in both thresholds. Further, as the value of $\beta(\theta)$ increases, $h^\texttt{FPR}$ decreases. 

In the unbiased case, $\beta(x)=1$ for all $x$. Therefore, by definition $h^\texttt{TPR}({\theta}_a^{\texttt{TPR}},{\theta}_b^{\texttt{TPR}}, 1) = 0$. When data is biased, $\beta(x)<1$, and $h^\texttt{TPR}({\theta}_a^{\texttt{TPR}},{\theta}_b^{\texttt{FPR}}, \beta') > 0$ for $\beta'<1$. As a result, in order to get $h^\texttt{TPR}(\hat{\theta}_a^{\texttt{TPR}},\hat{\theta}_b^{\texttt{TPR}}, \beta(\hat{\theta}_b^{\texttt{TPR}}))=0$, at least one of the thresholds should increase compared to the unbiased thresholds $\{{\theta}_a^{\texttt{TPR}},{\theta}_b^{\texttt{TPR}}\}$.  

In addition, we know that $\int_{{\theta}^{\texttt{TPR}}_a}^\infty {f}^1_a(x)\mathrm{d}x=\int_{{\theta}^{\texttt{TPR}}_b}^\infty {f}^1_b(x)\mathrm{d}x$ from the unbiased case, while $\int_{\hat{\theta}^{\texttt{TPR}}_a}^\infty {f}^1_a(x)\mathrm{d}x=\int_{\hat{\theta}^{\texttt{TPR}}_b}^\infty \hat{f}^1_b(x)\mathrm{d}x$ in the biased case. Now assume $\hat{\theta}^{\texttt{TPR}}_a>{\theta}^{\texttt{TPR}}_a$; then, $\int_{\hat{\theta}^{\texttt{TPR}}_a}^\infty {f}^1_a(x)\mathrm{d}x < \int_{{\theta}^{\texttt{TPR}}_a}^\infty {f}^1_a(x)\mathrm{d}x$, and as a result $\int_{\hat{\theta}^{\texttt{TPR}}_b}^\infty \hat{f}^1_b(x)\mathrm{d}x < \int_{{\theta}^{\texttt{TPR}}_b}^\infty {f}^1_b(x)\mathrm{d}x$, or equivalently, that $\hat{F}^1_b(\hat{\theta}^{\texttt{TPR}}_b) >  F_b^1({\theta}^{\texttt{TPR}}_b)$. If $\hat{F}^1_b({\theta}^{\texttt{TPR}}_b)\leq F_b^1({\theta}^{\texttt{TPR}}_b)$, then this can only hold if $\hat{\theta}^{\texttt{TPR}}_b>{\theta}^{\texttt{TPR}}_b$. Otherwise, This relation can hold for either increasing or decreasing $\hat{\theta}^{\texttt{TPR}}_b$ relative to ${\theta}^{\texttt{TPR}}_b$.

To conclude, it is sufficient to have  $\hat{F}^1_b(x)< F^1_b(x), \forall x$ for both \texttt{TPR}-constrained decisions thresholds to increase relative to the unbiased case once data is biased.  In addition, as \linebreak  ${F}^1_a(\hat{\theta}_a^\texttt{TPR})=\hat{F}^1_b(\hat{\theta}_b^\texttt{TPR})< F_b^1(\hat{\theta}_b^\texttt{TPR})$, the fairness constraint is violated. 

\emph{Part (iii):} For \texttt{FPR}: The firm picks the thresholds such that $h^\texttt{FPR}(\hat{\theta}_a^{\texttt{FPR}},\hat{\theta}_b^{\texttt{FPR}}, \beta(\hat{\theta}_b^{\texttt{FPR}}))=0$, where
\begin{align*}
    &h^\texttt{FPR}({\theta}_a,{\theta}_b, \beta(\theta_b)) \\
    &= n_a\alpha_a {l}_a(\theta_a) + n_b\alpha_b\beta(\theta_b){l}_b(\theta_b)-  \frac{u_-}{u_+}(n_a(1-\alpha_a)+n_b(1-\alpha_b))~.
\end{align*}
First, we note that by Assumption~\ref{ass:MLR}, the functions $l_g(x)$ are increasing. In addition, $\beta(x)$ is non-decreasing in $x$. As a result, $h^\texttt{FPR}$ is increasing in both thresholds. Further, as the value of $\beta(\theta)$ increases, so does $h^\texttt{FPR}$. 

In the unbiased case, $\beta(x)=1$ for all $x$. Therefore, by definition $h^\texttt{FPR}({\theta}_a^{\texttt{FPR}},{\theta}_b^{\texttt{FPR}}, 1) = 0$. When data is biased, $\beta(x)<1$, and $h^\texttt{FPR}({\theta}_a^{\texttt{FPR}},{\theta}_b^{\texttt{FPR}}, \beta') < 0$ for $\beta'<1$. As a result, in order to get $h^\texttt{FPR}(\hat{\theta}_a^{\texttt{FPR}},\hat{\theta}_b^{\texttt{FPR}}, \beta(\hat{\theta}_b^{\texttt{FPR}}))=0$, at least one of the thresholds should increase compared to the unbiased thresholds $\{{\theta}_a^{\texttt{FPR}},{\theta}_b^{\texttt{FPR}}\}$.  

In addition, we know that $\int_{{\theta}^{\texttt{FPR}}_a}^\infty {f}^0_a(x)\mathrm{d}x=\int_{{\theta}^{\texttt{FPR}}_b}^\infty {f}^0_b(x)\mathrm{d}x$ from the unbiased case, while $\int_{\hat{\theta}^{\texttt{FPR}}_a}^\infty {f}^0_a(x)\mathrm{d}x=\int_{\hat{\theta}^{\texttt{FPR}}_b}^\infty \hat{f}^0_b(x)\mathrm{d}x$ in the biased case. Now assume $\hat{\theta}^{\texttt{FPR}}_a>{\theta}^{\texttt{FPR}}_a$; then, $\int_{\hat{\theta}^{\texttt{FPR}}_a}^\infty {f}^0_a(x)\mathrm{d}x < \int_{{\theta}^{\texttt{FPR}}_a}^\infty {f}^0_a(x)\mathrm{d}x$, and as a result $\int_{\hat{\theta}^{\texttt{FPR}}_b}^\infty \hat{f}^0_b(x)\mathrm{d}x < \int_{{\theta}^{\texttt{FPR}}_b}^\infty {f}^0_b(x)\mathrm{d}x$, or equivalently, that $\hat{F}^0_b(\hat{\theta}^{\texttt{FPR}}_b)> F_b^0({\theta}^{\texttt{FPR}}_b)$. If $\hat{F}^0_b({\theta}^{\texttt{FPR}}_b)\leq F_b^0({\theta}^{\texttt{FPR}}_b)$, then this can only hold if $\hat{\theta}^{\texttt{FPR}}_b>{\theta}^{\texttt{FPR}}_b$. Otherwise, This relation can hold for either increasing or decreasing $\hat{\theta}^{\texttt{FPR}}_b$ relative to ${\theta}^{\texttt{FPR}}_b$.

To conclude, it is sufficient to have  $\hat{F}^0_b(x)\leq F_b^0(x), \forall x$ so that both \texttt{FPR}-constrained decisions thresholds increase relative to the unbiased case once data is biased.  In addition, as \linebreak  ${F}^0_a(\hat{\theta}_a^\texttt{FPR})=\hat{F}^0_b(\hat{\theta}_b^\texttt{FPR})\leq F_b^0(\hat{\theta}_b^\texttt{FPR})$, the fairness constraint is in general violated.

\emph{Part (iv):}  For \texttt{DP}: The firm picks the thresholds such that $h^\texttt{DP}(\hat{\theta}_a^{\texttt{DP}},\hat{\theta}_b^{\texttt{DP}}, \beta(\hat{\theta}_b^{\texttt{DP}}))=0$, where 
    \[h^\texttt{DP}({\theta}_a,{\theta}_b, \beta(\theta_b)) = \sum_g n_g \frac{\hat{l}_g(\theta_g) -  \frac{u_-}{u_+}\frac{1-\alpha_g}{\alpha_g}}{\hat{l}_g(\theta_g) +  \frac{1-\alpha_g}{\alpha_g}}~.\]
    The derivative with respect to $\theta_g$ is given by
    \[\frac{\partial h^\texttt{DP}({\theta}_a,{\theta}_b, \beta)}{\partial \theta_g} = n_g \frac{\hat{l}'_g(\theta_g)(1 +  \frac{u_-}{u_+})\frac{1-\alpha_g}{\alpha_g}}{(\hat{l}_g(\theta_g) +  \frac{1-\alpha_g}{\alpha_g})^2}~,\]
    where $\hat{l}'(\cdot)$ denotes the derivative of $\hat{l}$ with respect to $x$. Note that as $\hat{l}_g(x)$ are increasing functions under Assumption~\ref{ass:MLR}, this means that $h^\texttt{DP}$ is increasing in both thresholds. Similarly, it is easy to check that the above is increasing in $\beta(\theta_b)$. The remainder of the proof proceeds with arguments similar to those of parts (ii) and (iii). 
    
\emph{Part (v):} Our analytical arguments in Parts (ii)-(iv) identify conditions under which this decrease is plausible. We provide numerical examples for the cases of \texttt{DP} and \texttt{TPR} in Section~\ref{sec:experiments}. 

\end{proof}

\subsection{Sensitivity of \texttt{DP/TPR/FPR} thresholds to feature measurement errors}\label{app:prop-f-bias-feature-sensitivity}

\begin{proposition}[Sensitivity of \texttt{TPR/FPR} thresholds to feature measurement errors]\label{prop:f-bias-feature-sensitivity}
Let $l_g(x):=\frac{f_g^1(x)}{f_g^0(x)}$. Assume the features of group $b$ are incorrectly measured, so that $\hat{l}_b(x, \beta)$, where $\beta\in(0,1]$ is the error rate and $\hat{l}_b(x,1)=l_b(x), \forall x$. Let ${\theta}^{\texttt{f}}_g$ and $\hat{\theta}^{\texttt{f}}_g(\beta)$ denote the optimal decision thresholds satisfying fairness constraint $\texttt{f}\in\{\texttt{TPR}, \texttt{FPR}\}$, obtained from unbiased data and data with biases on group $b$ with error rate $\beta$, respectively. Then, the rate of change of group $b$'s thresholds at $\beta=1$ are given by
\begin{align*}
    \frac{\partial \hat{\theta}^{\texttt{TPR}}_b(\beta)}{\partial \beta}|{_{\beta=1}} &= -\frac{\frac{\partial \hat{f}^1_b({\theta}_b^{\texttt{TPR}})}{\partial \beta}|{\beta=1}\frac{l_a'({\theta}_a^{\texttt{TPR}})}{f^1_a({\theta}_a^{\texttt{FPR}})}(\frac{l_b(\theta^{\texttt{TPR}}_b)}{l_a(\theta^{\texttt{TPR}}_a)})^2 +\frac{\partial \hat{l}_b({\theta}_b^{\texttt{TPR}})}{\partial \beta}|{\beta=1}}{\frac{n_a}{n_b}\frac{1-\alpha_a}{1-\alpha_b}\frac{f^1_b({\theta}_b^{\texttt{TPR}})}{f^1_a({\theta}_a^{\texttt{TPR}})}(\frac{l_b(\theta^{\texttt{TPR}}_b)}{l_a(\theta^{\texttt{TPR}}_a)})^2{l'_a({\theta}_a^{\texttt{TPR}})} + l'_b({\theta}_b^{\texttt{TPR}})}\\
    \frac{\partial \hat{\theta}^{\texttt{FPR}}_b(\beta)}{\partial \beta}|{_{\beta=1}} &= -\frac{\frac{\partial \hat{f}^0_b({\theta}_b^{\texttt{FPR}})}{\partial \beta}|{\beta=1}\frac{l_a'({\theta}_a^{\texttt{FPR}})}{f^0_a({\theta}_a^{\texttt{FPR}})} + \frac{\partial \hat{l}_b({\theta}_b^{\texttt{FPR}})}{\partial \beta}|{\beta=1}}{\frac{n_a}{n_b}\frac{\alpha_a}{\alpha_b}\frac{f^0_b({\theta}_b^{\texttt{FPR}})}{f^0_a({\theta}_a^{\texttt{FPR}})}{l'_a({\theta}_a^{\texttt{FPR}})} + l'_b({\theta}_b^{\texttt{FPR}})}
\end{align*}
where $l'(\cdot)$ is its first derivative with respect to $x$. 
Further, the rate of change of group $a$'s thresholds at $\beta=1$ are given by
\begin{align*}
    \frac{\partial \hat{\theta}^{\texttt{TPR}}_a(\beta)}{\partial \beta}|{_{\beta=1}} &= \frac{1}{f^1_a({\theta}_a^{\texttt{TPR}})} \frac{\hat{f}^1_b({\theta}_b^{\texttt{TPR}})}{\partial \beta}|{_{\beta=1}}\frac{\partial \hat{\theta}^{\texttt{TPR}}_b(\beta)}{\partial \beta}|{_{\beta=1}}\\
    \frac{\partial \hat{\theta}^{\texttt{FPR}}_a(\beta)}{\partial \beta}|{_{\beta=1}} &= \frac{1}{f^0_a({\theta}_a^{\texttt{FPR}})} \frac{\hat{f}^0_b({\theta}_b^{\texttt{FPR}})}{\partial \beta}|{_{\beta=1}}\frac{\partial \hat{\theta}^{\texttt{FPR}}_b(\beta)}{\partial \beta}|{_{\beta=1}}
\end{align*}
\end{proposition}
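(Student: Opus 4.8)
The plan is to mirror the implicit-function-theorem argument from the proof of Proposition~\ref{prop:f-bias-gamma-sensitivity}, now carrying the extra $\beta$-dependence that a feature-measurement error introduces into the \emph{fairness constraint} itself. First I would record which primitives move: as shown in the proof of Proposition~\ref{prop:f-bias-feature}, a shift $\hat l_b(x,\beta)$ in the likelihood ratio leaves qualification rates unchanged, $\hat\alpha_g=\alpha_g$, so the optimality conditions of Table~\ref{t:fair-thds} depend on group $b$ only through $\hat l_b(\cdot,\beta)$. Writing them as $h^{\texttt{TPR}}(\theta_a,\theta_b,\beta)=\tfrac{n_a(1-\alpha_a)}{l_a(\theta_a)}+\tfrac{n_b(1-\alpha_b)}{\hat l_b(\theta_b,\beta)}-\tfrac{u_+}{u_-}(n_a\alpha_a+n_b\alpha_b)$ and $h^{\texttt{FPR}}(\theta_a,\theta_b,\beta)=n_a\alpha_a l_a(\theta_a)+n_b\alpha_b\hat l_b(\theta_b,\beta)-\tfrac{u_-}{u_+}(n_a(1-\alpha_a)+n_b(1-\alpha_b))$, each is a single equation in the two unknowns $\{\theta_a,\theta_b\}$; the second equation is the fairness constraint $\mathcal C^{\texttt{f}}_a(\theta_a)=\hat{\mathcal C}^{\texttt{f}}_b(\theta_b,\beta)$, which now depends on $\beta$ through $\hat f^1_b$ (for \texttt{TPR}) or $\hat f^0_b$ (for \texttt{FPR}).

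Next I would use the fairness constraint to eliminate $\theta_a$, writing $\theta_a=\phi^{\texttt{f}}(\theta_b,\beta)$. Exactly as in \eqref{eq:a-aafo-b}, differentiating $\int_{\theta_a}^\infty f^1_a=\int_{\theta_b}^\infty\hat f^1_b(\cdot,\beta)$ gives $\partial_{\theta_b}\phi^{\texttt{TPR}}=\hat f^1_b(\theta_b,\beta)/f^1_a(\theta_a)$, and likewise $\partial_{\theta_b}\phi^{\texttt{FPR}}=\hat f^0_b(\theta_b,\beta)/f^0_a(\theta_a)$; the new ingredient, absent in Proposition~\ref{prop:f-bias-gamma-sensitivity}, is that $\phi$ also depends on $\beta$ directly, with $\partial_\beta\phi^{\texttt{f}}$ obtained by differentiating the same constraint in $\beta$ — this is precisely what injects the $\partial_\beta\hat f^y_b$ factors into the numerators. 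Substituting into $h^{\texttt{f}}$ yields a scalar equation $H^{\texttt{f}}(\theta_b,\beta):=h^{\texttt{f}}(\phi^{\texttt{f}}(\theta_b,\beta),\theta_b,\beta)=0$, to which I apply the implicit function theorem — its hypotheses hold because $l_g$, and hence $h^{\texttt{f}}$, are strictly monotone in the thresholds under Assumption~\ref{ass:MLR} — giving $\partial_\beta\hat\theta_b^{\texttt{f}}=-(h_{\theta_a}\,\partial_\beta\phi^{\texttt{f}}+h_\beta)/(h_{\theta_a}\,\partial_{\theta_b}\phi^{\texttt{f}}+h_{\theta_b})$.

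The remaining work is evaluating the partials at $\beta=1$, where $\hat l_b(x,1)=l_b(x)$ and $\hat f^y_b(x,1)=f^y_b(x)$. For \texttt{TPR}, $h_{\theta_a}=-n_a(1-\alpha_a)l_a'/l_a^2$, $h_{\theta_b}=-n_b(1-\alpha_b)l_b'/l_b^2$, and $h_\beta=-n_b(1-\alpha_b)(\partial_\beta\hat l_b)/l_b^2$; the factor $-1/l^2$ arising from $\partial(1/l)$ is exactly what produces the $(l_b/l_a)^2$ terms once one cancels the common prefactor $-n_b(1-\alpha_b)/l_b^2$ from numerator and denominator. The \texttt{FPR} computation is structurally simpler, since $h^{\texttt{FPR}}$ is linear in $l$ so no squared ratios survive. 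Finally, the group-$a$ sensitivities follow from the chain rule $\partial_\beta\hat\theta_a^{\texttt{f}}=\partial_{\theta_b}\phi^{\texttt{f}}\cdot\partial_\beta\hat\theta_b^{\texttt{f}}+\partial_\beta\phi^{\texttt{f}}$, with the $1/f^y_a(\theta_a)$ weight inherited from $\partial_{\theta_b}\phi^{\texttt{f}}$.

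The main obstacle — and the only genuinely new feature relative to Proposition~\ref{prop:f-bias-gamma-sensitivity} — is this direct $\beta$-dependence of the fairness constraint. In the qualification-assessment setting one had $\hat f^1_b=f^1_b$, so $\phi$ was independent of $\beta$ and the sensitivity numerator came solely from $h_\beta$; here the measurement error perturbs the very distribution that \emph{defines} the constraint, so $\partial_\beta\phi^{\texttt{f}}\neq0$ and the coupling term $h_{\theta_a}\,\partial_\beta\phi^{\texttt{f}}$ links group $a$'s threshold response to group $b$'s feature shift. Carrying this coupling with the correct sign and the correct $f^y_b/f^y_a$ weighting is the delicate bookkeeping; once it is handled, everything else reduces to the monotonicity supplied by Assumption~\ref{ass:MLR} and routine differentiation.
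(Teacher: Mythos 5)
Your proposal takes essentially the same route as the paper: the paper's proof likewise invokes the implicit function theorem exactly as in Proposition~\ref{prop:f-bias-gamma-sensitivity}, with the single new ingredient being that the constraint-implied map $\theta_a=\phi^{\texttt{f}}(\theta_b,\beta)$ now depends on $\beta$ directly — the paper computes precisely your $\partial_{\theta_b}\phi^{\texttt{FPR}}=\hat f^0_b(\hat\theta_b)/f^0_a(\hat\theta_a)$ and $\partial_\beta\phi^{\texttt{FPR}}=\bigl(\partial_\beta \hat f^0_b(\hat\theta_b)\bigr)/f^0_a(\hat\theta_a)$ and then declares the remainder "similar to Proposition~\ref{prop:f-bias-gamma-sensitivity}." Your write-up correctly identifies this direct $\beta$-dependence as the only genuinely new feature and additionally carries out the $\beta=1$ evaluation and the total-derivative chain rule for group $a$ (including the additive $\partial_\beta\phi^{\texttt{f}}$ coupling term) that the paper leaves implicit, so it is, if anything, more complete than the paper's own argument.
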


\begin{proof}
The proof of this proposition invokes the implicit function theorem in the same way as that of Proposition~\ref{prop:f-bias-gamma-sensitivity}. However, the main distinction is in that when $\hat{\theta}_a$ is expressed as a function of $\hat{\theta}_b$, the relation is also affected by the dependence on $\beta$. For instance, in the case of \texttt{FPR}, the fairness constraint requires that $F^0_a(\hat{\theta}_a^\texttt{FPR})=\hat{F}_b(\hat{\theta}_b^\texttt{FPR})$, or equivalently, that $\hat{\theta}_a^\texttt{FPR}=(F^0_a)^{-1}(\hat{F}_b(\hat{\theta}_b^\texttt{FPR}))$. As a result, 
\[\frac{\partial \hat{\theta}^\texttt{FPR}_a}{\partial \hat{\theta}^\texttt{FPR}_b} = \frac{\hat{f}^0_b(\hat{\theta}^\texttt{FPR}_b)}{f^0_a(\hat{\theta}^\texttt{TPR}_a)}~, ~~ \frac{\partial \hat{\theta}^\texttt{FPR}_a}{\partial \beta} = \frac{\partial \hat{f}^0_b(\hat{\theta}^\texttt{FPR}_b)}{\partial \beta}\frac{1}{f^0_a(\hat{\theta}^\texttt{TPR}_a)}~.\]
The remainder of the proof proceeds similarly to Proposition~\ref{prop:f-bias-gamma-sensitivity}. 
\end{proof}

The following corollary of Proposition~\ref{prop:f-bias-feature-sensitivity} compares the sensitivity of \texttt{TPR} and \texttt{FPR} to the same feature measurement errors. 

\begin{corollary}[\texttt{FPR} can be more sensitive to measurement errors than \texttt{TPR}]\label{cor:TPR-vs-FPR} Consider $\frac{\partial \hat{\theta}^{\texttt{f}}_b(1)}{\partial \beta}$, the rate of change of group $b$'s thresholds at $\beta=1$. Assume $f_a^y=f_b^y, \forall y$, and so, $\theta_g^\texttt{TPR}=\theta_g^\texttt{FPR}=\theta, ~\forall g$. Then, there exists a $\bar{\theta}$, such that for all instances with $\theta\geq \bar{\theta}$, we have $|\tfrac{\partial \hat{\theta}^{\texttt{TPR}}_b(1)}{\partial \beta}|<|\tfrac{\partial \hat{\theta}^{\texttt{FPR}}_b(1)}{\partial \beta}|$; that is, \texttt{FPR} is more sensitive to feature measurement errors than \texttt{TPR}. 
\end{corollary}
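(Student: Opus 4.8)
The plan is to start from the closed-form sensitivity expressions in Proposition~\ref{prop:f-bias-feature-sensitivity} and specialize them to the symmetric regime $f_a^y=f_b^y$. First I would record what the symmetry assumption buys. Because the two groups share feature distributions, the group-symmetric profile is the unique optimizer of the threshold equations of Lemma~\ref{lemma:fair-opt}/Table~\ref{t:fair-thds}, so $\theta_a^\texttt{f}=\theta_b^\texttt{f}=:\theta$; moreover, setting $\theta_a=\theta_b$ in both the \texttt{TPR} and \texttt{FPR} rows of Table~\ref{t:fair-thds} pins the common value through the identical equation $l(\theta)=\tfrac{u_-\sum_g n_g(1-\alpha_g)}{u_+\sum_g n_g\alpha_g}$, which establishes the asserted equality $\theta_g^\texttt{TPR}=\theta_g^\texttt{FPR}=\theta$. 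Writing $l:=l_a=l_b$, $f^y:=f^y_a=f^y_b$ (all evaluated at $\theta$), the factors $f^1_b/f^1_a$, $f^0_b/f^0_a$ and $(l_b/l_a)^2$ all collapse to $1$ and $l'_a=l'_b=:l'$.

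With these substitutions the two quantities to be compared reduce to
\[\Big|\tfrac{\partial\hat\theta^\texttt{TPR}_b(1)}{\partial\beta}\Big|=\frac{\big|\,\tfrac{\partial\hat f^1_b(\theta)}{\partial\beta}\tfrac{l'}{f^1}+\tfrac{\partial\hat l_b(\theta)}{\partial\beta}\,\big|}{l'\big(\tfrac{n_a}{n_b}\tfrac{1-\alpha_a}{1-\alpha_b}+1\big)},\qquad \Big|\tfrac{\partial\hat\theta^\texttt{FPR}_b(1)}{\partial\beta}\Big|=\frac{\big|\,\tfrac{\partial\hat f^0_b(\theta)}{\partial\beta}\tfrac{l'}{f^0}+\tfrac{\partial\hat l_b(\theta)}{\partial\beta}\,\big|}{l'\big(\tfrac{n_a}{n_b}\tfrac{\alpha_a}{\alpha_b}+1\big)}.\]
Both expressions share the factor $l'$ in the denominator and the additive term $\partial\hat l_b(\theta)/\partial\beta$ in the numerator; the only structural differences are (i) the numerator weight $1/f^0(\theta)$ for \texttt{FPR} against $1/f^1(\theta)$ for \texttt{TPR}, and (ii) the $\theta$-independent constant $\tfrac{\alpha_a}{\alpha_b}$ (which exceeds $1$ when $\alpha_b<\alpha_a$) against $\tfrac{1-\alpha_a}{1-\alpha_b}$ (which is below $1$) in the denominators. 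I would then study the ratio $\big|\partial\hat\theta^\texttt{FPR}_b(1)/\partial\beta\big|\big/\big|\partial\hat\theta^\texttt{TPR}_b(1)/\partial\beta\big|$, in which the common $l'$ cancels.

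The engine of the comparison is Assumption~\ref{ass:MLR}: the likelihood ratio $l(\theta)=f^1(\theta)/f^0(\theta)$ is strictly increasing and diverges as $\theta\to\infty$, so $f^0$ decays strictly faster than $f^1$ and $1/f^0(\theta)\gg 1/f^1(\theta)$. The idea is that this makes the $1/f^0(\theta)$-weighted term in the \texttt{FPR} numerator of strictly larger order in $\theta$ than the $1/f^1(\theta)$-weighted term in the \texttt{TPR} numerator, so that the numerator ratio grows without bound in $\theta$. Since the denominator constants in (ii) are fixed, this eventually overwhelms the bounded, and in fact adverse (for $\alpha_b<\alpha_a$), denominator ratio; I would then take $\bar\theta$ to be the crossover value at which the overall ratio first exceeds $1$, giving the claim for every instance with $\theta\ge\bar\theta$.

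The hard part will be precisely this third step, because the two numerators are coupled: differentiating $\hat l_b=\hat f^1_b/\hat f^0_b$ gives the identity $\tfrac{\partial\hat l_b}{\partial\beta}=\tfrac{1}{f^0}\big(\tfrac{\partial\hat f^1_b}{\partial\beta}-l\,\tfrac{\partial\hat f^0_b}{\partial\beta}\big)$ at $\beta=1$, so the shared term already carries a $1/f^0$ factor and can partially cancel the isolated $1/f^0$ contribution in the \texttt{FPR} numerator. For generic error models (for instance a location shift $\hat x=x-\epsilon(\beta)$, where $\partial\hat f^y_b/\partial\beta$ scales like $f^y$ times a log-derivative), these compensations make both numerators of the \emph{same} order, and the domination is not automatic. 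The crux is therefore to isolate a tail condition on the measurement error — essentially a bound controlling $\partial\hat f^0_b/\partial\beta$ relative to $f^0$ in the right tail so that its $1/f^0$ contribution is not annihilated by the shared term — and to verify that under this condition the \texttt{FPR} numerator is genuinely of strictly larger order in $\theta$. This tail-control argument, rather than the algebraic specialization of Proposition~\ref{prop:f-bias-feature-sensitivity}, is where the real work lies.
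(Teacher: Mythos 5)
Your setup is sound and matches the natural route: specializing Proposition~\ref{prop:f-bias-feature-sensitivity} under $f^y_a=f^y_b$ does collapse both sensitivities to the expressions you wrote, and your derivation of the common threshold from the identical equation $l(\theta)=\tfrac{u_-\sum_g n_g(1-\alpha_g)}{u_+\sum_g n_g\alpha_g}$ in Table~\ref{t:fair-thds} is correct. (For calibration: the paper states this corollary with \emph{no} proof at all, so there is no reference argument on the paper's side that resolves what you leave open.) But what you have submitted is a plan whose decisive third step is deferred, and that step is precisely where the content of the claim lies. Worse, your own coupling identity $\tfrac{\partial\hat l_b}{\partial\beta}=\tfrac{1}{f^0}\bigl(\tfrac{\partial\hat f^1_b}{\partial\beta}-l\,\tfrac{\partial\hat f^0_b}{\partial\beta}\bigr)$ at $\beta=1$ is not a technical nuisance to be tamed by an unspecified "tail condition": it defeats the dominance heuristic outright for the very error model the paper uses to motivate measurement bias. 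For a location shift $\hat x = x-\epsilon(\beta)$ one gets $\partial\hat F^y_b/\partial\beta \propto f^y(\theta)$ and $\partial\hat l_b/\partial\beta \propto l'(\theta)$; after the $1/f^0$ and $1/f^1$ weights combine with the shared term, \emph{both} numerators reduce to the same order (a constant multiple of $l'$), so the comparison is decided by the denominator constants, where $\tfrac{n_a}{n_b}\tfrac{\alpha_a}{\alpha_b}+1 > \tfrac{n_a}{n_b}\tfrac{1-\alpha_a}{1-\alpha_b}+1$ (for $\alpha_a>\alpha_b$) pushes in the \emph{opposite} direction, making \texttt{TPR} weakly more sensitive. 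One can check with equal-variance Gaussians that the ratio $|\partial\hat\theta^{\texttt{FPR}}_b/\partial\beta|\,/\,|\partial\hat\theta^{\texttt{TPR}}_b/\partial\beta|$ then tends to a limit at most $\tfrac{c_T+1}{c_F+1}<1$ as $\theta\to\infty$, so no crossover $\bar\theta$ exists for such instances: the claimed inequality fails or reverses exactly in the regime where you hope it holds. A similar collapse occurs when the bias is confined to one class-conditional density (the Section~\ref{sec:experiment-synthetic} setting), since $\partial\hat f^y_b/\partial\beta$ then scales like $f^y$ times a log-derivative, as you yourself anticipate.

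So the gap is genuine and twofold. First, the proof is simply not carried out: "isolate a tail condition on the measurement error" is the statement of the missing lemma, not its proof, and without it you have established nothing beyond the algebraic specialization. Second, the gap is not closable at the stated level of generality — Proposition~\ref{prop:f-bias-feature-sensitivity} quantifies over arbitrary $\hat l_b(x,\beta)$, and the location-shift counter-instance above sits squarely inside that class, so the corollary requires an additional restriction on the error model (e.g., bias concentrated on $\hat f^0_b$ with $\partial\hat f^0_b/\partial\beta$ not asymptotically proportional to $f^0$ in the right tail, so the isolated $1/f^0$ term genuinely dominates the shared one) that neither you nor the paper formulates. Two smaller soft spots: your engine assumes $l(\theta)\to\infty$, but Assumption~\ref{ass:MLR} only gives strict monotonicity of the likelihood ratio, not unboundedness; and "there exists $\bar\theta$ \ldots for all instances with $\theta\ge\bar\theta$" quantifies over problem primitives and error models simultaneously, so even granting pointwise dominance you would still need a uniformity argument for a single crossover threshold, which your sketch does not address.
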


\end{document}